\documentclass{article}

\usepackage{microtype}
\usepackage{graphicx}
\usepackage{subcaption}
\usepackage{booktabs} %
\usepackage{titletoc}
\usepackage{makecell}

\usepackage{hyperref}
\usepackage[normalem]{ulem}
\usepackage[dvipsnames]{xcolor}
\usepackage{placeins}

\usepackage[preprint]{icml2026}

\usepackage{amsmath}
\usepackage{amssymb}
\usepackage{mathtools}
\usepackage{amsthm}

\usepackage[capitalize,noabbrev]{cleveref}

\theoremstyle{plain}
\newtheorem{theorem}{Theorem}[section]
\newtheorem{proposition}[theorem]{Proposition}
\newtheorem{lemma}[theorem]{Lemma}
\newtheorem{corollary}[theorem]{Corollary}
\theoremstyle{definition}

\newtheorem{assumption}[theorem]{Assumption}
\theoremstyle{remark}
\newtheorem{remark}[theorem]{Remark}

\usepackage[textsize=tiny]{todonotes}

\icmltitlerunning{FIRE: Multi-fidelity Regression with Distribution-conditioned In-context Learning using Tabular Foundation Models}

\begin{document}

\twocolumn[
  \icmltitle{
  FIRE: Multi-fidelity Regression with Distribution-conditioned In-context Learning using Tabular Foundation Models
  }

  \icmlsetsymbol{equal}{*}

  \begin{icmlauthorlist}
    \icmlauthor{Rosen Ting-Ying Yu}{comp}
    \icmlauthor{Nicholas Sung}{yyy}
    \icmlauthor{Faez Ahmed}{comp,yyy}
  \end{icmlauthorlist}

  \icmlaffiliation{yyy}{Department of Mechanical Engineering, Massachusetts Institute of Technology, Cambridge, MA, USA}
  \icmlaffiliation{comp}{Center for Computational Engineering, Massachusetts Institute of Technology, Cambridge, MA, USA}

  \icmlcorrespondingauthor{Rosen Ting-Ying Yu}{rosenyu@mit.edu}

  \icmlkeywords{Machine Learning, ICML}

  \vskip 0.3in
]

\printAffiliationsAndNotice{}  %

\begin{abstract}
  Multi-fidelity (MF) regression often operates in regimes of extreme data imbalance, where the commonly-used Gaussian-process (GP) surrogates struggle with cubic scaling costs and overfit to sparse high-fidelity observations, limiting efficiency and generalization in real-world applications. We introduce FIRE, a training-free MF framework that couples tabular foundation models (TFMs) to perform zero-shot in-context Bayesian inference via a high-fidelity correction model conditioned on the low-fidelity model's posterior predictive distributions. This cross-fidelity information transfer via distributional summaries captures heteroscedastic errors, enabling robust residual learning without model retraining. Across 31 benchmark problems spanning synthetic and real-world tasks (e.g., DrivAerNet, LCBench), FIRE delivers a stronger performance–time trade-off than seven state-of-the-art GP-based or deep learning MF regression methods, ranking highest in accuracy and uncertainty quantification with runtime advantages. Limitations include context window constraints and dependence on the quality of the pre-trained TFM’s.
\end{abstract}

\section{Introduction}

Multi-fidelity (MF) regression is the problem of learning to predict an expensive, high-accuracy target function using a small number of high-fidelity observations together with a larger set of cheaper but less accurate low-fidelity (LF) observations.
Multi-fidelity regression typically leverages probability models that couple high-fidelity (HF) measurements with abundant, noisy low-fidelity data to generate accurate predictions and uncertainty quantification, which are critical in various fields. For instance, engineering and scientific modeling increasingly rely on MF regression to navigate the trade-off between expensive HF simulation (e.g., physical experiments or time-consuming computational fluid dynamics CFD simulations) and cheap LF analytical parametric modeling with lower accuracy \citep{bonfiglio2018multi,feldstein2020multifidelity,romor2023multi,zhang2024transfer}. Another application is hyperparameter optimization (HPO) tasks, where training budgets (epochs) serve as varying fidelities \citep{eggensperger2021hpobench,wu2020practical,rakotoarison2024ifbo}. However, these practical applications often reside in an extreme data imbalance regime, where HF measurements are so scarce (less than 10\% of the total dataset), that common probability regression models struggle to maintain reliable generalization \citep{xing2023continuar,Qian2024,aretz2025multifidelity}.

Classical MF regression methods typically rely on Gaussian Process (GP) surrogates using autoregressive (AR) couplings and residual decompositions to transfer knowledge across fidelities \citep{Kennedy2000,Perdikaris2017,xing2021residual}. However, these methods face significant computational and statistical barriers. GP's cubic scaling costs make it less effective as LF datasets grow \citep{fernandez2016review,liu2020gaussian}. GP also is prone to overfitting when learning complex kernel hyperparameters from very few HF data \citep{cutajar2019deep}. Furthermore, multi-fidelity data are rarely collected on nested input sets in practice. HF measurements are often obtained independently, resulting in disjoint input locations across fidelities~\citep{niu24MFRNP}. Most GP-based MF methods rely on nested designs ($X_{\text{HF}} \subseteq X_{\text{LF}}$) to enable stable learning~\citep{Kennedy2000,xing2023continuar}. When this assumption is violated, these models struggle to transfer information reliably across fidelities.

Recent advances in tabular foundation models (TFMs) provide an effective alternative to classification and regression tasks: zero-shot Bayesian inference that uses in-context learning (ICL) to generate posterior predictions in a single forward pass without local parameter updates \citep{muller2021transformers}. Pre-trained offline on millions of synthetic structured causal priors, TFMs such as TabPFN \citep{hollmann2023tabpfn,hollmann2025accurate} are shown to have an advantage over existing tabular learning models in data-scarce regimes for many regression tasks \citep{ye2025closer} and rank highly on tabular benchmarks \citep{erickson2025tabarena}. These findings motivate our investigation into whether TFMs 
can address the limitations of existing MF methods and provide state-of-the-art (SOTA) performance under very limited non-nested HF data.

Beyond scalability and model choice, a key shortcoming of many MF methods using multiple surrogates lies in their transfer mechanism. Autoregressive and residual-based formulations typically condition HF corrections on LF mean predictions, assuming that cross-fidelity discrepancies are homoscedastic \citep{xing2021residual,Ravi2024}. However, in realistic settings, LF uncertainty varies strongly across the input space, and residual errors tend to grow in precisely those regions where LF predictions are least certain \citep{sella2023projection,wilke2024multifidelity}. Conditioning only on the mean obscures this structure, motivating the use of uncertainties and distribution-level information, such as variance and quantile predictions, to guide cross-fidelity correction \cite{zou2008composite,yao2018using,song2019distribution}.

We introduce FIRE (Fidelity‑aware In-context REgression), a training-free framework for MF regression using TFMs with uncertainty-aware autoregressive residual learning. FIRE decomposes multi-fidelity regression into three stages: (i) low-fidelity inference, (ii) residual modeling conditioned on augmented statistical information, and (iii) additive uncertainty propagation, using frozen TFMs for zero-shot regression. To summarize, our contributions include:
\begin{enumerate}
\item \textbf{Algorithmic:} We introduce distribution-conditioned residual transfer for MF regression: a two-stage inference procedure that conditions the HF correction on LF predictive distributions (mean/variance/quantiles), enabling heteroscedastic and non-Gaussian discrepancy handling without retraining.

\item \textbf{Systems:} We show how to turn an off-the-shelf TFM into a training-free MF surrogate via in-context residual learning and bi-level fidelity aggregation that scales across multiple fidelities.

\item \textbf{Empirical:} On 31 MF benchmarks, FIRE improves the accuracy–uncertainty–runtime tradeoff and remains robust in extreme HF scarcity (2–5\%), where standard GP-based methods struggle. Overall, it achieves the best predictive performance for both non-nested and nested problems.

\item \textbf{Datasets:} We collect a comprehensive benchmark of 31 problems, including HPO and engineering tasks, while adding a new CFD-based car simulations dataset for evaluating MF regression generalization, moving beyond the limited test cases from previous works.

\end{enumerate}

\section{Background}\label{Section:Background}

\textbf{Multi-fidelity Regression.} Multi-fidelity regression problem is often defined over an input domain $\mathcal{X} \subset \mathbb{R}^d$, with a hierarchy of $T$ information sources (fidelities), indexed by $t \in \{1, \dots, T\}$. Each fidelity corresponds to an unknown function $f^{(t)}: \mathcal{X} \to \mathbb{R}$, with increasing accuracy and cost as $t$ increases. For each fidelity $t$, we observe a dataset
\begin{equation}\label{eq:MF_data}
    \mathcal{D}_t = \{(x_i^{(t)}, y_i^{(t)})\}_{i=1}^{N_t}, \quad y_i^{(t)} = f^{(t)}(x_i^{(t)}).
\end{equation}
In practice, the standard MF imbalance regime is: $N_T \ll N_{T-1} \ll \dots \ll N_1,$ reflecting the high cost of acquiring HF observations \citep{Qian2024}. The objective is to construct a predictor for the HF response $f^{(T)}(x)$ by leveraging information from all available lower-fidelity data. 

MF regression algorithms commonly adopt an additive autoregressive formulation \citep{Kennedy2000},
\begin{equation}\label{eq:AR_formulation}
    f^{(t)}(x) = \rho_t f^{(t-1)}(x) + \delta_t(x),\quad t\in{2, ..., T}
\end{equation}%
, where $\rho_t \in \mathbb{R}$ is a scaling factor and $\delta_t(x)$ is a residual process representing cross-fidelity discrepancies between fidelity levels $t$. This formulation is commonly adopted by classical GP-based methods to transfer information across fidelities \citep{Perdikaris2017,xing2021residual}. However, these approaches typically assume nested input, and their performance degrades in non-nested regimes where HF data are sparse and misaligned \citep{wang2022gar}. Motivated by this challenge, recent work proposes generalized autoregressive formulations that support non-nested data in high-dimensional or infinite-fidelity settings \citep{xing2023continuar}. Other lines of work explore deep learning models for MF regression to improve scalability via learned representations \citep {Yi2024MFBNN,niu24MFRNP,taghizadeh2025multifidelity}. In this work, we uses zero-shot TFM and distribution conditioning to improve upon previous avenue.

\textbf{Tabular Foundation Models (TFMs).} TFMs have demonstrated the ability to achieve strong regression performance by leveraging in-context learning~\citep{erickson2025tabarena}, in which predictions for query inputs are produced by conditioning on a set of context datasets. While FIRE is TFM agnostic, to demonstrate its efficacy, we leverage one of the current SOTA TFM models on tabular regression benchmarks, Prior-data Fitted Networks, and especially the TabPFN model~\citep{hollmann2023tabpfn,hollmann2025accurate,grinsztajn2025tabpfn25advancingstateart}. TabPFN is a transformer-based model pre-trained on millions of pre-trained on synthetic datasets $D_{\text{train}}=\{(x_i,y_i)\}_{i=1}^M$ from randomly generated structural causal models by maximizing the likelihood $q(y | x,\, D_{\text{train}})$~\citep{hollmann2025accurate}. Given an in-context dataset ${D}_{\text{context}}$ during inference, TabPFN estimate test data $x_{\text{*}}$'s posterior distribution $p(y|x_{*}) \approx q_\theta(y \mid x_{*}, \mathcal{D}_{\text{context}})$.

Transformer-based architectures are particularly well-suited to structured inputs, including discrete variables. Recent work has exploited this property to encode fidelity as an input token in TFMs for multi-fidelity HPO~\citep{lee2025costsensitive,rakotoarison2024ifbo} and shown promising result. The results motivate the use of fidelity token without architectural modification in MF regression setting beyond HPO. However, as we show later in Appendix \ref{Section:Ablation}, the token is not sufficient in imbalanced data settings.

\textbf{Heteroscedasticity and Distributional Stacking. }Heteroscedasticity refers to input-dependent variability, where predictive uncertainty varies across the domain and cannot be captured by a constant noise assumption~\citep{kersting2007most,lazaro2011variational}. In multi-fidelity regression, such effects arise in cross-fidelity discrepancies: regions where LF models are less reliable often exhibit larger and more variable residuals~\citep{sella2023projection,wilke2024multifidelity}. Standard autoregressive frameworks often fail to account for this, as they use mean-only couplings that assume LF reliability is uniform across the domain. While weighted architectures and warping have been proposed to model input-dependent noise, they frequently rely on Gaussian assumptions that fail to capture skewed or heavy-tailed distributions common in complex real-world systems~\citep{colombo2025warped}. A few results on predictive stacking indicate that leveraging full predictive distributions is more robust than aggregating point estimates when models are imperfect~\citep{yao2018using}. This motivates our model to incorporate conditioning HF corrections on distributional summaries (mean, variance, and quantiles) rather than on the low-fidelity mean alone~\citep{angrist2006quantile,zou2008composite}.

\section{Proposed Method: FIRE}\label{Section:FIRE}

FIRE decomposes zero-shot multi-fidelity regression learning into three stages: (1) low-fidelity inference, (2) distribution-conditioned residual learning, and (3) prediction and uncertainty quantification. This architecture has a few similarities to autoregressive MF decompositions, but avoids explicit parametric assumptions and repeated retraining by using ICL. Instead, all stages operate in a training-free, inference-time manner using pre-trained TFMs. An overview of the FIRE algorithm is shown in Figure \ref{fig:main_figure}.

\begin{algorithm}[ht!]
  \caption{FIRE}
  \label{alg:example}
  \begin{algorithmic}[1]
    \STATE {\bfseries Input:} \\
     $\quad${\color{Black}Training data:} \\
     $\quad$$\mathcal{D}_{\text{LF}} $ = $ \bigcup_{t=1}^{T-1} \{([{x}_i, t], y_i)\}_{i=1}^{N_t} $ = $ \{(\mathbf{x}_{\text{LF}}, \mathbf{y}_{\text{LF}})\}$ ; \\ $\quad$$\mathcal{D}_{\text{HF}}$ = $\{([x_j, T], y_j)\}_{j=1}^{N_{\text{HF}}}$ = $\{(\mathbf{x}_{\text{HF}}, \mathbf{y}_{\text{HF}})\}$; \\ %
    $\quad${\color{Black}Testing data:} $\mathbf{x}_{*}$ = $\{[x_p, T]\}_{p=1}^{N_{\text{test}}}$ \\
    $\quad${\color{Black}Two TFM models:} $f_{\theta}$, $\delta_{\phi}$. \\
    $\quad${\color{Black}Quantile levels:} $\mathcal{Q} = \{0.1, 0.2, \dots , 0.9\}$; \\

    \STATE {\bfseries
    \color{Blue}In-context Learning Phase:}
    
    \STATE $\quad$Fit LF base model $f_{\theta}$ on $\mathcal{D}_{\text{LF}}$

    \STATE $\quad$$\mu_{\theta}(\mathbf{x}_{\text{HF}}) \leftarrow \mathbb{E}[f_{\theta}(\mathbf{x}_{\text{HF}})]$;\\ 
    $\quad$$\sigma_{\theta}^2(\mathbf{x}_{\text{HF}}) \leftarrow \text{Var}[f_{\theta}(\mathbf{x}_{\text{HF}})]$;\\ 
    $\quad$$q_{\theta}^{(\tau)}(\mathbf{x}_{\text{HF}}) \leftarrow \text{Quantile}(f_{\theta}(\mathbf{x}_{\text{HF}}), \tau)$ for $\tau \in \mathcal{Q}$
    \STATE $\quad$Statistical augmentation: \\
    $\quad$${z}_{\text{aug}} \leftarrow [\mathbf{x}_{\text{HF}},\mu_{\theta}(\mathbf{x}_{\text{HF}}), \sigma_{\theta}^2(\mathbf{x}_{\text{HF}}), \{q_{\theta}^{(\tau)}(\mathbf{x}_{\text{HF}})\}_{\tau \in \mathcal{Q}}]$
    \STATE $\quad$Compute residual: $r \leftarrow \mathbf{y}_{\text{HF}} - \mu_{\theta}(\mathbf{x}_{\text{HF}})$
    \STATE $\quad$$\mathcal{D}_{\text{aug}} \leftarrow  \{({z}_{\text{aug}}, r)\}$
    \STATE $\quad$Fit residual model $\delta_{\phi}$ conditioned on $\mathcal{D}_{\text{aug}}$ \\%to predict residual $r$ given augmented input ${z}_{\text{aug}}$. \\

    \STATE {\bfseries \color{Blue}Prediction Phase:}
    \STATE $\quad$$\mathbf{z}_{\text{aug}}^* \leftarrow [\mathbf{x}_{*}, \mu_{\theta}(\mathbf{x}_{*}),\sigma_{\theta}^2(\mathbf{x}_{*}), \{q_{\theta}^{(\tau)}(\mathbf{x}_{*})\}_{\tau \in \mathcal{Q}}]$
    \STATE $\quad$$\mu_{\phi}(\mathbf{z}_{\text{aug}}^*) \leftarrow \mathbb{E}[\delta_{\phi}(\mathbf{z}_{\text{aug},*})] ; $\\
    $\quad$$\sigma_{\phi}^2(\mathbf{z}_{\text{aug}}^*) \leftarrow \text{Var}[\delta_{\phi}(\mathbf{z}_{\text{aug}}^*)]$;
    \STATE {\bfseries Return Output:} \\
    $\quad$$\hat{y}(\mathbf{x}_*) \gets \mu_{\theta}(\mathbf{x}_{*}) + \mu_{\phi}(\mathbf{z}_{\text{aug}}^*)$;  \\
    $\quad$$\hat{\sigma^2_*} =\sigma^2_\theta(\mathbf{x}_*) + \sigma^2_\phi(\mathbf{z}_{\text{aug}}^*)$
  \end{algorithmic}
\end{algorithm}

\begin{figure*}
    \centering
    \includegraphics[width=.92\linewidth]{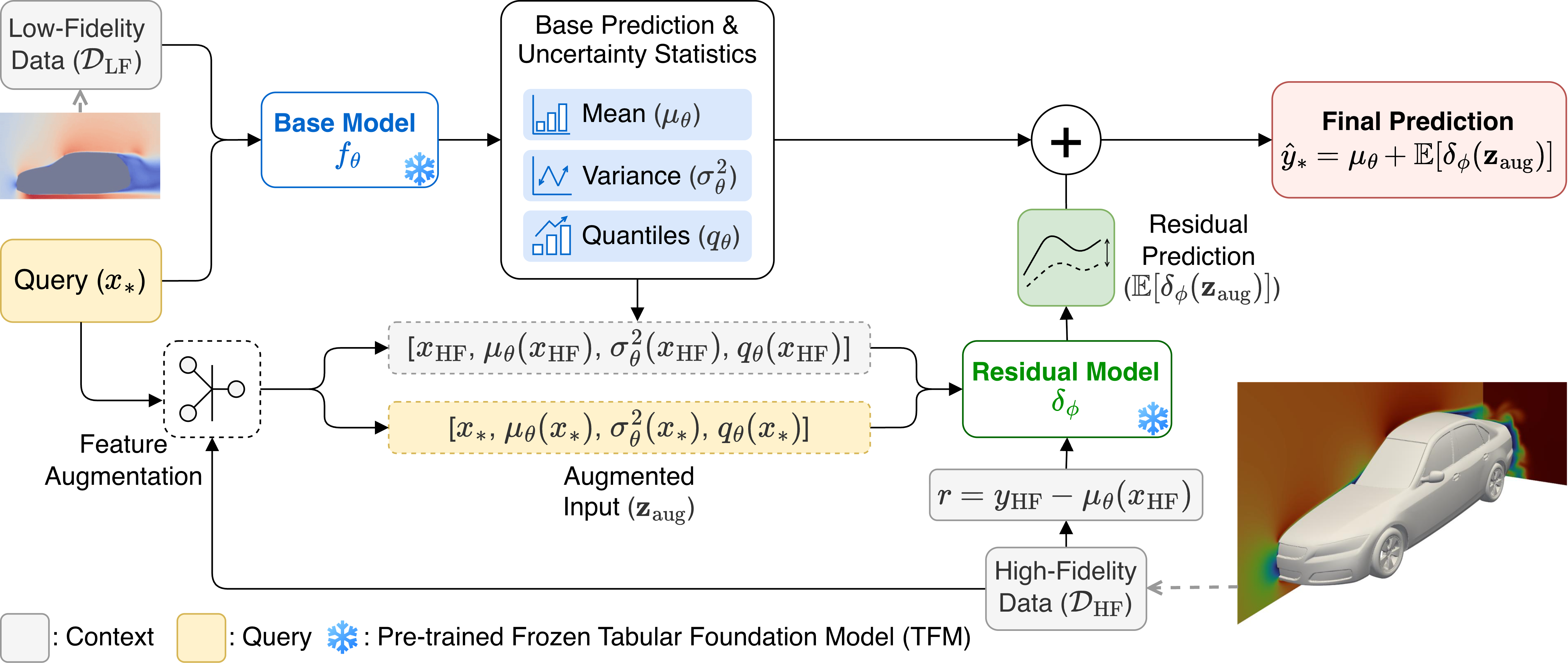}
    \caption{\textbf{Overview of FIRE.} The method decomposes multi-fidelity regression into a low-fidelity base inference and a residual correction step, conditioned on the base model's statistical and distributional information.}
    \label{fig:main_figure}
\end{figure*}

\subsection{Bi-Level Fidelity Representation}
Building upon the general $T$-fidelity hierarchy introduced in Section~\ref{Section:Background}, FIRE employs a unified two-level representation to facilitate in-context learning. We aggregate all available lower-fidelity sources $t \in \{1, \dots, T-1\}$ into a singular LF dataset, 
$$\mathcal{D}_{\text{LF}} = \bigcup_{t=1}^{T-1} \{([x_i^{(t)}, t], y_i^{(t)})\}_{i=1}^{N_t}. 
$$
Here, the fidelity index $t$ is treated as a categorical feature, allowing the TFM to learn a shared representation across diverse noise regimes. Specifically, we treat the highest fidelity $T$ as the target
(high-fidelity, HF) level at which predictions are ultimately evaluated:
$$\mathcal{D}_{\text{HF}} = \{([x_j^{(T)}, T], y_j^{(T)})\}_{j=1}^{N_T}.$$
This reduction reflects the practical setting in which multiple auxiliary fidelities are available, but only the highest fidelity is queried at test time, exploiting TFM's capability for discrete data modeling in a single forward pass. Empirical evidance on the advantage of our bi-level model over the standard recursive one surrogate model per fidelity level is detailed in Appendix~\ref{appendix:surrogate_level}.

\subsection{Low-Fidelity Inference}\label{Section3.2:Low-Fidelity Inference}

Let $f_\theta$ denote a probabilistic TFM pre-trained offline on a large collection of synthetic regression tasks with frozen weight $\theta$. At inference time, $f_\theta$ performs in-context learning on the aggregated low-fidelity dataset $\mathcal{D}_{\text{LF}}$. Given a high-fidelity input $x_{\text{HF}}$, the model outputs a predictive distribution
$$f_\theta(x_{\text{HF}}) \sim q_\theta(y \mid x_{\text{HF}}, \mathcal{D}_{\text{LF}}),$$
which we summarize using: (1) the predictive mean $\mu_\theta(x)$, (2) the predictive variance $\sigma_\theta^2(x)$, and (3) a finite set of conditional quantiles $\{q_\theta^{(\tau)}(x)\}_{\tau \in \mathcal{Q}}$, where $\mathcal{Q} = \{0.1, 0.2, \dots, 0.9\}$ (TabPFN's default quantile set). These quantities serve as lightweight summaries of the low-fidelity predictive distribution and are treated as epistemic uncertainty estimates induced by in-context Bayesian inference.

\subsection{Distribution-Conditioned Residual Learning}

FIRE adopts the autoregressive structure in equation implicitly by learning a residual correction at the highest fidelity level. For each high-fidelity training point $x_{\text{HF}}$, we define the residual$$r = y_{\text{HF}} - \mu_\theta(x_{\text{HF}}).$$
To model this residual, we construct an augmented input vector
\begin{align*}
    z_{\text{aug}} &= [x_{\text{HF}}, \mu_\theta(x_{\text{HF}}), \sigma_\theta^2(x_{\text{HF}}), \{q_\theta^{(\tau)}(x_{\text{HF}})\}_{\tau \in \mathcal{Q}}].
\end{align*}
This augmentation encodes: (1) a point prediction of the lower-fidelity signal, (2) uncertainty magnitude through predictive variance, (3) distributional shape through predictive quantiles. The augmented residual dataset is
$$\mathcal{D}_{\text{aug}} = \{(z_{\text{aug}}, r)\}.$$
A second TFM $\delta_\phi$ (with same weight $\phi \equiv \theta$) is then used to perform in-context learning on $\mathcal{D}_{\text{aug}}$, producing a predictive distribution for the residual:$$\delta_\phi(z_{\text{aug}}) \sim q_\phi(r \mid z_{\text{aug}}, \mathcal{D}_{\text{aug}}).$$

\subsection{Prediction and Uncertainty Quantification}\label{Section3.4:Prediction and Uncertainty Quantification}

Given a test input $x_*$ at the highest fidelity, prediction proceeds as follows:
\begin{enumerate}
    \item Low-fidelity inference: Evaluate the low-fidelity model conditioned on $\mathcal{D}_{\text{LF}}$ to obtain$$\mu_\theta(x_*), \quad \sigma_\theta^2(x_*), \quad \{q_\theta^{(\tau)}(x_*)\}_{\tau \in \mathcal{Q}}.$$
    \item Feature augmentation: Construct the augmented input$$z_{\text{aug}}^* = [x_*, \mu_\theta(x_*), \sigma_\theta^2(x_*), \{q_\theta^{(\tau)}(x_*)\}_{\tau \in \mathcal{Q}}].$$
    \item Residual correction: Predict the residual using $\delta_\phi$ and form the final prediction and uncertainty$$\hat{y}_* = \mu_\theta(x_*) + \mathbb{E}[\delta_\phi(z_{\text{aug}}^*)],\quad \hat{\sigma}_*^2 = \sigma_\theta^2(x_*) + \sigma_\phi^2(z_{\text{aug}}^*)$$
where $\sigma_\phi^2$ denotes the predictive variance of the residual model. The theoretical justification of this uncertainty formulation is detailed in Appendix~\ref{appendix:theory:Variance Decomposition for Two-Stage Uncertainty Quantification}.
\end{enumerate}

\section{Experiment}

We compare FIRE against seven SOTA GP-based multi-fidelity baselines on a diverse benchmark suite of 31 multi-fidelity
problems that can be categorized into three distinct domains, testing each algorithm's generalizability. Details on the benchmark datasets and algorithm implementation are in Appendices~\ref {Appendix:Section:Benchmark} and~\ref{Appendix:Section:Algorithm_Implementation}.

\subsection{Algorithms}

\textbf{Baseline Algorithms.} We compare against 7 MF regression baselines. The standard GP-based autoregressive methods \textbf{AR(1)} \citep{Kennedy2000}, \textbf{ResGP} \citep{xing2021residual}, \textbf{NARGP} \citep{Perdikaris2017}, infinite-fidelity methods \textbf{ContinuAR} \cite{xing2023continuar}, and multi-information source \textbf{MisoKG} \citep{poloczek2017multi}. We also include deep learning baselines \textbf{MFBNN} (DNN-LR-BNN) \citep{Yi2024MFBNN} and \textbf{MFRNP} \citep{niu24MFRNP}. 

\textbf{FIRE's TFM.} We select \textbf{TabPFNv2.5} \citep{grinsztajn2025tabpfn25advancingstateart} as the model for FIRE due to its top-tier performance in the tabular benchmark even without tuning. We assess FIRE's generalization with different TFMs Appendix \ref{Appendix:Different_TFM_ablation}.

\subsection{Benchmark Datasets Design}\label{Section:5.2:Benchmark Datasets Design}
\textbf{Synthetic Benchmarks (18 tasks).} We include 11 standard two-fidelity MF benchmarks from the MF2 \citep{vanRijn2020MF2}, two three-fidelity problems from Emukit \citep{emukit2023}, and five high-dimensional synthetic tasks (HD suite) with $d\in{10,20,30,40,50}$ input feature dimension \citep{Yi2024MFBNN}, which are used specifically for our algorithm runtime analysis in Appendix~\ref{Appendix:full_results}.

\textbf{HPO Datasets (six tasks).} We take six learning curves datasets from LCBench \citep{zimmer2021auto}, treating epoch budget as fidelity. We implement them as five-fidelity regression tasks, predicting validation performance at the maximum epoch from partial training curves.

\textbf{Physics-based Simulations (seven tasks).} 
We evaluate on seven physics-based multi-fidelity regression tasks.
We take three analytical parametric MF benchmarks from the engineering literature~\citep{Yousefpour2024GPPlus}, and select two Blended Wing Body (BWB) aerodynamic simulated datasets~\citep{Sung2025BlendedNet,Sung2026BlendedNetMF}. 
Finally, we create an additional physics-based MF dataset by augmenting the HF CFD data in 3D car aerodynamic prediction using the DrivAerNet dataset~\citep{elrefaie2025drivaernet} with a corresponding LF model, and also run the LF modeling for the HF concrete compressive strength benchmark \citep{Yeh1998Concrete}.

\textbf{Data Imbalance Protocol.} A critical challenge in multi-fidelity learning is the extreme scarcity of HF data relative to LF data. We systematically evaluate this regime by fixing the LF budget $N_{\text{LF}}$ (set to 200 or 2000 depending on dimensionality $d$) and varying the HF budget $N_{\text{HF}}$ across six imbalance ratios: $\{2\%, 4\%, 5\%, 10\%, 20\%, 25\%\}$ of $N_{\text{LF}}$. This stress-tests the ability of models to effectively transfer information from abundant low-fidelity sources.

\subsection{Experiment Protocol \& Evaluation Metrics}\label{Sectoin:Experiments:Metrics}

\textbf{Experiment Protocol. }
The algorithm evaluation aims to thoroughly compare FIRE to current SOTA MF regression algorithms across problems. For each test problem, our experiment consists of 10 independent trials repeated for a 5-fold outer cross-validation, each trial using a distinct random seed fixed across algorithms. To stress-test model flexibility, we enforce non-nested splits between fidelities. HF points are not co-located with LF points. This reflects realistic scenarios where data sources are independent.

Following~\citet{xing2023continuar}'s protocol of fair benchmarking MF algorithms, all GP-based models are trained with 200 iterations, and all deep learning models with 1000 iterations to reach convergence. All algorithms are implemented in PyTorch and can be GPU-accelerated. All models are used out-of-the-box, without fine-tuning, for a fair comparison of surrogates' generalization across different problems.

Experiments were conducted on a distributed cluster featuring Intel Xeon Platinum 8480+ CPUs and NVIDIA H100 GPUs. To ensure fairness, each run uses identical resources: a single H100 node with 24 CPU cores and 200GB RAM.

\textbf{Individual Metrics. }Following existing literature~\citep{cutajar2019deep,Ravi2024,niu24MFRNP}, we use normalized root mean squared error (NRMSE) and negative log likelihood (NLL) for evaluating accuracy and uncertainty quantification ability, respectively, for our main results. In addition, we measure the wall-clock runtime of each algorithm to compare the computational efficiency.

\textbf{Aggregated Metrics. }Here we evaluate models using the Elo rating~\cite{elo1967proposed} (higher is better), a pairwise comparisons rating that reflects its predicted probability of winning against other models, and average ranking (lower is better). We calibrate 1000 Elo to the performance of MFRNP and perform 100 rounds of bootstrapping to obtain 95\% confidence intervals, similar to~\citet {erickson2025tabarena}'s approach. Additional metrics evaluations (e.g. normalized score, statitstical ranking) is presented in Appendix~\ref{Appendix:full_results}.

\section{Results}\label{Section:Result}

\begin{figure}[ht]
    \centering
    \includegraphics[width=.87\linewidth]{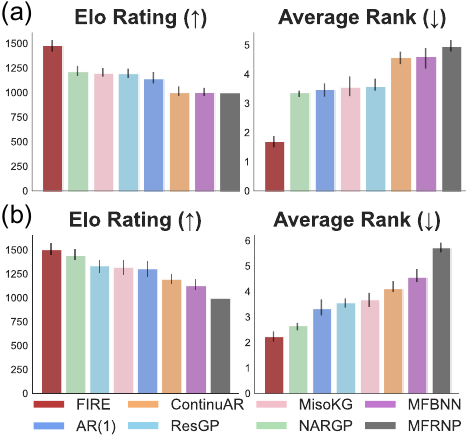}
    \caption{\textbf{Overall performance.} Elo rating and average rank for (a) accuracy performance (NRMSE) and (b) uncertainty calibration (NLL) across 31 problems. FIRE outperforms the other baselines in both accuracy and uncertainty.
    }
    \label{fig:main_barplot}
\end{figure}

\textbf{Overall Performance across Benchmarks.} Figure \ref{fig:main_barplot} summarizes performance across all 31 benchmarks. FIRE achieves the highest Elo rating and lowest average rank of both accuracy (NRMSE) and uncertainty (NLL), consistently outperforming both GP-based and deep learning baselines. This performance is driven by our distributional conditioning strategy: as shown in our ablation study (Figure ~\ref{fig:ablation_each_FIRE_component}), removing the variance or quantile features from the residual model drops $\approx$ 250 Elo rating. This gap highlights that standard mean-only residual couplings used by GP-based methods are insufficient for capturing the complex, input-dependent error structures in these tasks.

\begin{figure}[ht]
    \includegraphics[width=1\linewidth]{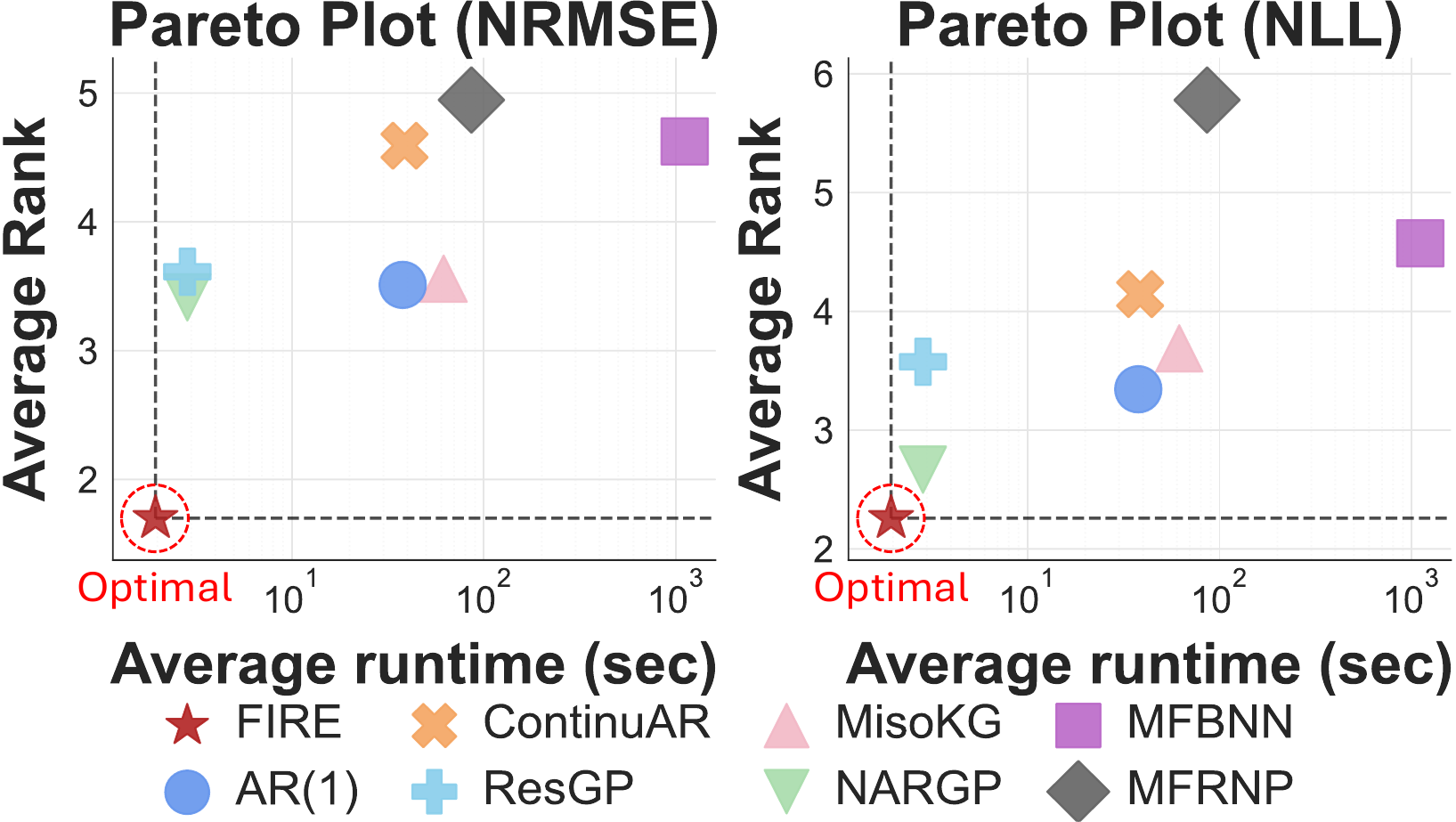}
    \caption{\textbf{Pareto frontier analysis.} Pareto plots of accuracy (Left) and uncertainty (Right) vs. runtime. FIRE dominates the Pareto frontier for both metrics, being the fastest and most accurate. }
    \label{fig:main_pareto}
\end{figure}

\textbf{Runtime Efficiency and Pareto Optimality.} We analyze the trade-off between predictive quality and computational cost in Figure~\ref{fig:main_pareto}, which plots the average rank of each method against its wall-clock runtime. FIRE is the fastest and dominates the Pareto frontier for both accuracy (NRMSE) and uncertainty quantification (NLL), residing in the optimal low-rank, low-latency region. 

\begin{figure}[ht!]
    \centering
    \includegraphics[width=.95\linewidth]{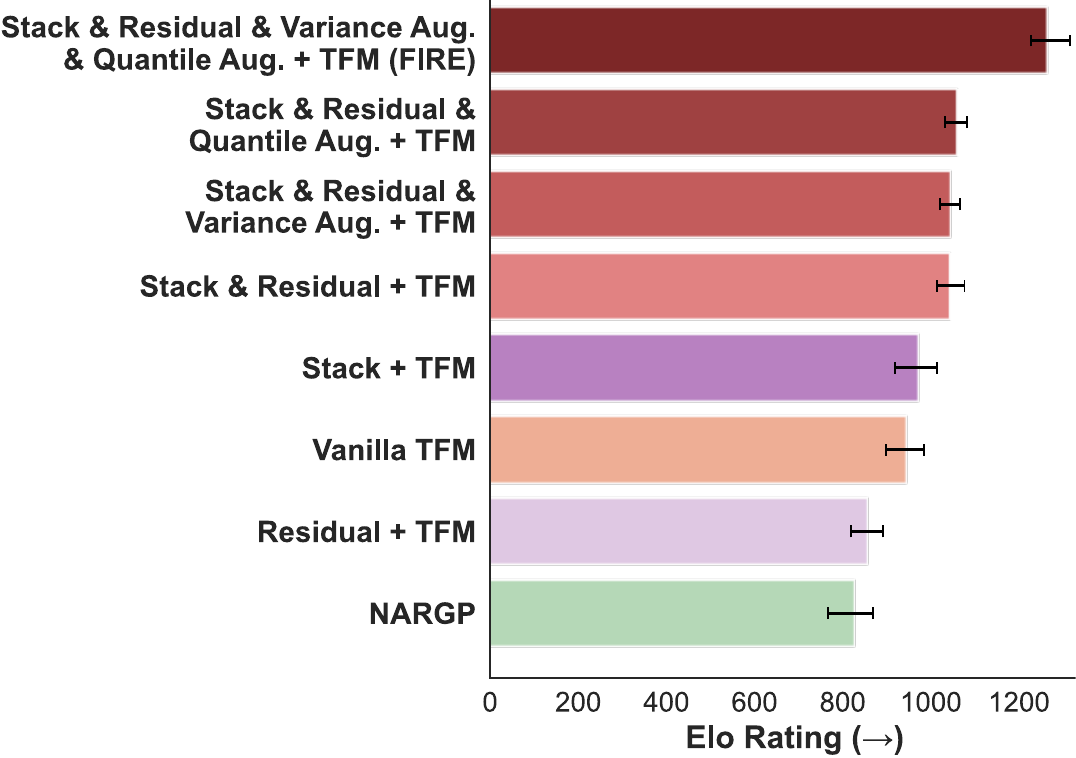}
    \caption{\textbf{Ablation study of FIRE components.} We compare the Elo rating of various configurations against NARGP, the strongest GP baseline. The full FIRE framework (Top), which combines residual learning with statistical augmentation, significantly outperforms partial implementations.}
    \label{fig:ablation_each_FIRE_component}
\end{figure}

\textbf{Robustness under Extreme Data Imbalance.} We next analyze performance as a function of the HF to LFsample ratio ($N_{\text{HF}}$/$N_{\text{LF}}$), a defining challenge in MF regression. As detailed in Figure \ref{fig:main_five_problems}, this stability holds across domains of problems. Figure~\ref{fig:main_splits} shows that FIRE maintains a clear advantage when HF data is extremely scarce (2–5\%), where GP-based and deep learning baselines degrade. The performance gap is the largest on real-world physics simulations and HPO tasks, confirming FIRE’s ability to effectively transfer information without overfitting.

 \begin{figure*}[ht]
    \includegraphics[width=.95\linewidth]{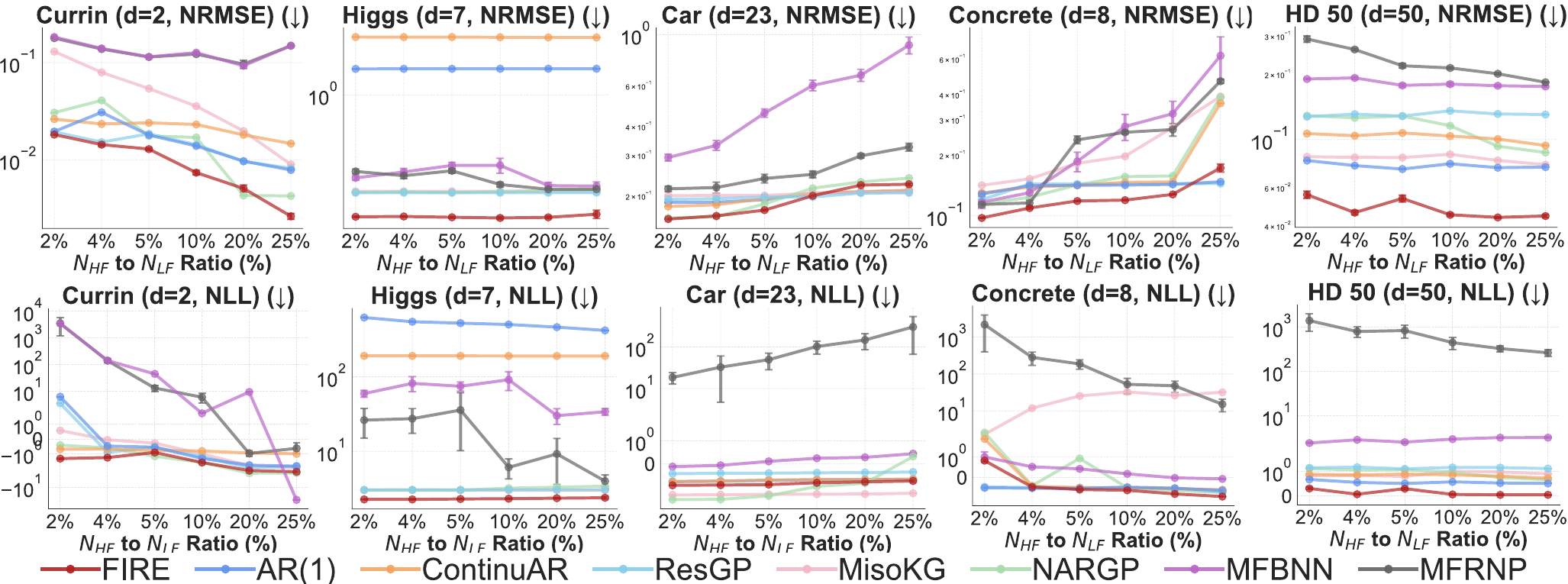}
    \caption{Performance (NRMSE and NLL) for representative synthetic (Currin, HD 50), physics-based (Car, Concrete), and HPO (Higgs) tasks. FIRE maintains top-ranked accuracy and uncertainty calibration in the extreme low-data regime (2–5\% HF ratio), where baseline performance degrades.}
    \label{fig:main_five_problems}
\end{figure*}

\begin{figure*}[ht]
    \centering
    \includegraphics[width=.9\linewidth]{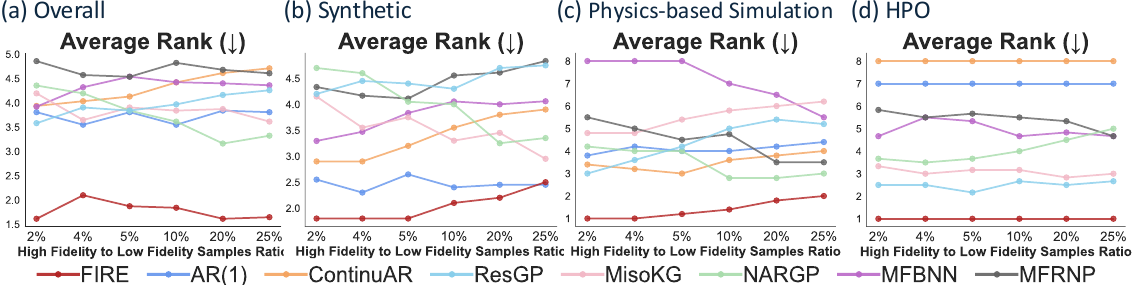}
    \caption{\textbf{Performance under varying data imbalance.} Aggregated results for problem domain as a function of the High-Fidelity to Low-Fidelity sample ratio ($N_{\text{HF}}/N_{\text{LF}}$ \%): (a)Overall (all 31 problems), (b)Synthetic, (c)Engineering, (d)HPO. FIRE demonstrates superior robustness compared to other baselines when HF data is extremely scarce ($2\%-5\%$), confirming the efficacy of TFM's zero-shot adaptation, especially for the real-world physics simulation and HPO tasks. }
    \label{fig:main_splits}
\end{figure*}

\section{Discussion} \label{Section:Discussion}

FIRE is faster and more accurate than current baselines, standing out when high-fidelity data is very scarce (2-5\%). This suggests that large-scale pre-training on diverse synthetic priors provides a strong inductive bias that compensates for HF data scarcity. In contrast, GP models struggle with kernel settings here, and deep models need extensive retraining. Our benchmarks also reveal a generalization gap in existing literature. While methods like ContinuAR and MFRNP excel in modeling spatially correlated PDE fields, they struggle to generalize to the heterogeneous feature spaces of tabular engineering and HPO problems. 
To disentangle the FIRE decomposition from the TFM backbone, we also instantiate FIRE with GP surrogates and observe consistent gains over standard MF GPs (Appendix \ref{Appendix:ablation:FIREGP}). We further ablate the role of in-context learning by swapping the base/residual learners, showing a clear drop when ICL is removed.
We also show that FIRE can be generalized to different TFMs (Appendix \ref{Appendix:Different_TFM_ablation}), and using post-hoc ensembling could help improve FIRE's performance (Appendix \ref{Appendix:post_hoc_ensemble}).

\section{Theoretical Analysis}\label{Section:Theoretical Analysis}

Section \ref{Section:Result} and  \ref{Section:Discussion} empirically demonstrate FIRe's strong performance. We now motivate FIRE with theory, showing that TFMs act as valid zero-shot models for Bayesian inference and that conditioning HF residuals on LF base model's distributional summaries is theoretically principled.

\subsection{TFMs as Bayesian Posterior Approximations}

We model the low-fidelity predictor as an amortized approximation to Bayesian posterior inference.
Specifically, we adopt an approximation-quality assumption from the theoretical and statistical foundations of TFMs  \citep{muller2021transformers,nagler2023statistical}, under which the predictive distribution produced by a TFM approximates the Bayesian posterior predictive distribution.
\begin{assumption}[TFM Posterior Approximation Quality] \label{Assumption:TFM Posterior Approximation Quality}
Let $\Pi$ denote a task prior over tabular regression functions. The TFM $f_\theta$ trained via in-context learning on synthetic tasks sampled from $\Pi$ produces a predictive distribution $q_\theta(\cdot | x, D_{\text{LF}})$ that approximates the Bayesian posterior predictive distribution:$$q_\theta(y | x, D_{\text{LF}}) \approx \pi(y | x, D_{\text{LF}}) = \int p(y | x) \, d\Pi(p | D_{\text{LF}})$$in the sense of minimizing expected KL divergence $\mathbb{E}_{\Pi}[\text{KL}(\pi(\cdot|x, D) || q_{\theta}(\cdot|x, D))]$ over tasks drawn from $\Pi$.
\end{assumption}

Importantly, we note that this holds even when the observed task (our multi-fidelity problem) lies outside the pre-training distribution, provided the task prior $\Pi$ has sufficient support~\citep{muller2021transformers,nagler2023statistical}. Under Assumption~\ref{Assumption:TFM Posterior Approximation Quality}, the outputs $\mu_\theta(x)$, $\sigma^2_\theta(x)$, and $\{q^{(\tau)}_\theta(x)\}_{\tau}$ can be treated as statistically meaningful summaries of epistemic uncertainty rather than heuristic features.

\subsection{Justification for Distributional Conditioning}

To theoretically justify incorporating variance and quantiles into the residual model, we analyze the Bayes risk under different conditioning sets. We compare a baseline ``mean-only" strategy against FIRE's augmented strategy

\begin{theorem}[Bayes Risk Monotonicity]\label{Theorem:Bayes Risk Monotonicity} %
Let $r = y^{(T)} - \mu_\theta(x^{(T)})$ be the residual. Define the conditioning sets $\mathcal{Z}_{\text{mean}} = \{x, \mu_\theta(x)\}$ and $\mathcal{Z}_{\text{aug}} = \{x, \mu_\theta(x), \sigma^2_\theta(x), \{q_\theta^{(\tau)}(x)\}_{\tau \in \mathcal{Q}}\}$. Under the squared error loss, the Bayes risk satisfies:$$\mathcal{R}(\mathcal{Z}_{\text{aug}}) \leq \mathcal{R}(\mathcal{Z}_{\text{mean}})$$with equality if and only if $\mathbb{E}[r \mid \mathcal{Z}_{\text{aug}}] = \mathbb{E}[r \mid \mathcal{Z}_{\text{mean}}]$ almost surely.
\end{theorem}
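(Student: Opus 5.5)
The plan is to reduce the statement to the elementary fact that conditioning on a finer $\sigma$-algebra cannot increase the minimal mean squared error. Fix a joint probability space over the test input $x$ and the high-fidelity response $y^{(T)}$. The low-fidelity model $f_\theta$ is frozen, with context $\mathcal{D}_{\text{LF}}$ held fixed, so $\mu_\theta$, $\sigma^2_\theta$ and $q^{(\tau)}_\theta$ are deterministic measurable maps of $x$. Let $\mathcal{G}_{\text{mean}}=\sigma(x,\mu_\theta(x))$ and $\mathcal{G}_{\text{aug}}=\sigma\bigl(x,\mu_\theta(x),\sigma^2_\theta(x),\{q^{(\tau)}_\theta(x)\}_{\tau\in\mathcal{Q}}\bigr)$. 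Because $\mathcal{Z}_{\text{mean}}$ is a sub-tuple of $\mathcal{Z}_{\text{aug}}$, we have $\mathcal{G}_{\text{mean}}\subseteq\mathcal{G}_{\text{aug}}$. The Bayes risk is defined as $\mathcal{R}(\mathcal{Z})=\inf_g \mathbb{E}[(r-g(\mathcal{Z}))^2]$, and we assume $\mathbb{E}[r^2]<\infty$. Under this assumption the infimum is attained by $g^\star(\mathcal{Z})=\mathbb{E}[r\mid\mathcal{Z}]$, the $L^2$ projection onto $L^2(\mathcal{G})$. Hence $\mathcal{R}(\mathcal{Z})=\mathbb{E}[(r-\mathbb{E}[r\mid\mathcal{Z}])^2]$.

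Write $m_a=\mathbb{E}[r\mid\mathcal{G}_{\text{aug}}]$ and $m_0=\mathbb{E}[r\mid\mathcal{G}_{\text{mean}}]$. The key step is the Pythagorean decomposition
\begin{equation*}
\mathbb{E}[(r-m_0)^2]=\mathbb{E}[(r-m_a)^2]+\mathbb{E}[(m_a-m_0)^2].
\end{equation*}
To obtain it, expand the square and show that the cross term $\mathbb{E}[(r-m_a)(m_a-m_0)]$ vanishes. The argument is as follows. The factor $m_a-m_0$ is $\mathcal{G}_{\text{aug}}$-measurable, which uses the inclusion $\mathcal{G}_{\text{mean}}\subseteq\mathcal{G}_{\text{aug}}$. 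The factor $r-m_a$ is orthogonal to $L^2(\mathcal{G}_{\text{aug}})$. The tower property, $\mathbb{E}[m_a\mid\mathcal{G}_{\text{mean}}]=m_0$, makes $m_0$ the correct projection, and square-integrability of $m_a-m_0$ follows from conditional Jensen. Reading the decomposition as $\mathcal{R}(\mathcal{Z}_{\text{mean}})=\mathcal{R}(\mathcal{Z}_{\text{aug}})+\mathbb{E}[(m_a-m_0)^2]$ immediately gives the inequality.

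For the equality case, note that $\mathcal{R}(\mathcal{Z}_{\text{aug}})=\mathcal{R}(\mathcal{Z}_{\text{mean}})$ holds if and only if $\mathbb{E}[(m_a-m_0)^2]=0$. This holds if and only if $m_a=m_0$ almost surely, which is exactly the stated condition. If the residual model is instead given the reduced input $r$ versus $y^{(T)}-\mu_\theta$, nothing changes: $\mu_\theta(x)$ is in both conditioning sets, so shifting by it commutes with both conditional expectations.

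I expect the main obstacle to be making the setup precise rather than the inequality itself. Three points need care. First, stating that $\mathcal{R}$ is the population Bayes risk, not the risk of the finite-context TFM $\delta_\phi$. Second, ensuring measurability and the finite-second-moment assumption. Third, emphasizing that Assumption~\ref{Assumption:TFM Posterior Approximation Quality} is not needed for the inequality. It only explains why the extra features are informative, that is, why $m_a\neq m_0$ can be expected when the discrepancy is heteroscedastic. I would state the $L^2$ assumption explicitly and remark that the result is an instance of the general principle that adding features never increases the Bayes risk.
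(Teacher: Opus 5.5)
Your proof is correct and is essentially the paper's argument. The paper applies the law of total variance relative to the coarser $\sigma$-algebra and gets the remainder $\mathbb{E}[\mathrm{Var}(m_a\mid\mathcal{Z}_{\text{mean}})]$, which by the tower property is exactly your Pythagorean term $\mathbb{E}[(m_a-m_0)^2]$, and your equality case ($\mathbb{E}[(m_a-m_0)^2]=0$ iff $m_a=m_0$ a.s.) is read off more directly than the paper's. One caution on your closing remark: you fix $\mathcal{D}_{\text{LF}}$, make $\mu_\theta,\sigma^2_\theta,q^{(\tau)}_\theta$ measurable functions of $x$, and include $x$ in both conditioning sets, so in fact $\mathcal{G}_{\text{mean}}=\mathcal{G}_{\text{aug}}=\sigma(x)$, $m_a=m_0$ a.s.\ automatically, and the bound is always an equality (the paper's proof, which averages only over $p(x,r)$, has the same degeneracy); the extra features can lower the population Bayes risk only if $\mathcal{D}_{\text{LF}}$ is random or $x$ is dropped from the conditioning sets.
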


\textit{Proof Sketch. }The result follows from the law of total variance and the tower property of conditional expectation~\citep{casella2024statistical}. Intuitively, $\mathcal{Z}_{\text{aug}}$ represents a strictly richer information set than $\mathcal{Z}_{\text{mean}}$. In Bayesian inference, conditioning on additional relevant information monotonically reduces (or maintains) the expected posterior variance, thereby lowering the irreducible error. A full formal derivation is provided in Appendix~\ref{appendix:theory:Bayes_Risk_proof} for full proof.

Theorem~\ref{Theorem:Bayes Risk Monotonicity} confirms that adding variance and quantiles does not increase Bayes risk. In finite-sample settings with constrained model classes (including TFMs used in-context), adding features may increase empirical risk due to estimation error or overfitting. Thus, distributional conditioning is a motivation and not a guarantee.

\subsection{Heteroscedastic Residuals and the Need for Variance Conditioning}

We characterize regimes where variance conditioning is strictly necessary for accurate residual modeling. This follows the heteroscedastic GP framework~\citep{kersting2007most,lazaro2011variational}

\begin{assumption}[Heteroscedastic Discrepancy]\label{assumption:Heteroscedastic Discrepancy} %
The variance of the cross-fidelity residual $\delta(x) = y^{(T)}(x) - y^{(T-1)}(x)$ depends on the epistemic uncertainty of the low-fidelity model:$$\text{Var}[\delta(x) | x, D_{\text{LF}}] = g(\sigma^2_\theta(x))$$for some non-constant, monotone function $g: \mathbb{R}^+ \to \mathbb{R}^+$. That is, regions where the LF model is uncertain exhibit larger or more variable residuals. 
\end{assumption}

In MF settings, LF models are often least reliable in regions where they are most uncertain (see Figure~\ref{fig:hetero_example_main}). Under Assumption~\ref{assumption:Heteroscedastic Discrepancy}, a residual model conditioned only on the mean $\mu_\theta(x)$ is misspecified, as it implicitly assumes residual variance is independent of model uncertainty. By conditioning on $\sigma_\theta^2(x)$, FIRE enables the residual model to adapt the magnitude of its correction to the local epistemic uncertainty of the base model. See Appendix~\ref{Appendix:Theory:Variance-Residual Coupling and Mean-Only Misspecification} for details.

\begin{figure}[ht]
    \centering
    \includegraphics[width=1\linewidth]{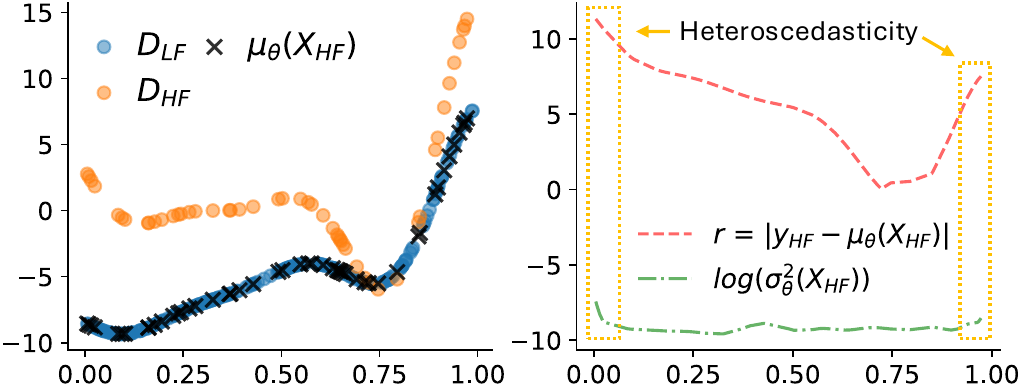}
    \caption{\textbf{Heteroscedastic cross-fidelity discrepancy on the Forrester($d=1$) benchmark} (right). Regions of larger LF uncertainty $\sigma_{\theta}^2$ coincide with larger and more variable HF residuals $r$, motivating uncertainty-conditioned residual modeling (left).}
    \label{fig:hetero_example_main}
\end{figure}

\subsection{Quantiles for Distributional Shape and Misspecification}
While variance captures the magnitude of uncertainty (scale), it fails to capture higher-order distributional properties such as skewness, heavy tails, or multi-modality. To address this, FIRE conditions the residual model on a set of predictive quantiles $\mathcal{Q}=\{\tau_1, \dots, \tau_K\}$, providing a non-parametric embedding of the base predictive distribution.

\begin{lemma}[Quantile Embedding]\label{lemma: Quantile Embedding}%
The quantile function $q_\theta(\tau; x)$ uniquely characterizes the predictive distribution $p(y|x)$~\citep{gushchin2017integrated}. Furthermore, a finite set of quantiles $\{q_\theta^{(\tau_k)}(x)\}_{k=1}^K$ acts as a discrete approximation of this embedding, which converges uniformly to the true distribution as $K \to \infty$ (see Appendix A.6).
\end{lemma}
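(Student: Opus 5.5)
The plan has two parts: uniqueness of the quantile representation, then uniform convergence of a finite-quantile reconstruction. Throughout, fix $x$ and write $F(y)=F_\theta(y\mid x)$ for the CDF of the TFM predictive distribution $q_\theta(\cdot\mid x,\mathcal{D}_{\text{LF}})$. Define the generalized inverse $q_\theta(\tau;x)=\inf\{y: F(y)\ge\tau\}$ for $\tau\in(0,1)$.

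\emph{Uniqueness.} The key tool is the Galois relation $q_\theta(\tau;x)\le y \iff \tau\le F(y)$, which follows from right-continuity of $F$. From it,
$$F(y)=\sup\{\tau\in(0,1): q_\theta(\tau;x)\le y\}.$$
So $F$ is recovered from $q_\theta(\cdot;x)$. Two distributions with the same quantile function therefore share a CDF, and hence are the same measure. This is the content cited from \citep{gushchin2017integrated}, and I would reproduce it in two lines.

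\emph{Finite-quantile approximation.} Take a grid $0<\tau_1<\dots<\tau_K<1$ with mesh $h_K=\max_k(\tau_{k+1}-\tau_k)$, where $\tau_0=0$ and $\tau_{K+1}=1$, and assume $h_K\to0$. The default $\mathcal{Q}$ is the case $\tau_k=k/(K+1)$. Define the step CDF
$$\hat F_K(y)=\max\{\tau_k: q_\theta^{(\tau_k)}(x)\le y\},$$
with $\max\emptyset=0$.

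For any $y$, let $k$ be the largest index with $\tau_k\le F(y)$. The Galois relation gives $q^{(\tau_k)}\le y$, and it gives $q^{(\tau_{k+1})}>y$ because $\tau_{k+1}>F(y)$. Hence $\hat F_K(y)=\tau_k$, and so
$$0\le F(y)-\hat F_K(y)<\tau_{k+1}-\tau_k\le h_K.$$
Therefore $\sup_y|F(y)-\hat F_K(y)|\le h_K\to0$. This is uniform convergence in Kolmogorov distance, which I take as the meaning of ``converges uniformly to the true distribution.'' Notably, no continuity of $F$ is needed for this bound.

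The step I expect to be the main obstacle is interpreting ``converges uniformly'' precisely. If one instead wants uniform convergence of the piecewise-linear interpolant of $\tau\mapsto q_\theta(\tau;x)$ to the quantile function itself, extra assumptions are required. The quantile function must be continuous, i.e. $F$ strictly increasing on its support, and the convergence can only be uniform on compact subintervals $[\epsilon,1-\epsilon]$, since $q$ may blow up at $0$ and $1$. Under those assumptions the claim follows from uniform continuity of $q_\theta(\cdot;x)$ on $[\epsilon,1-\epsilon]$.

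I would state the Kolmogorov-distance version as the lemma's claim and add the interpolation version as a remark with its hypotheses. I would also note that if one adds the hypothesis that the support is bounded, the Wasserstein-1 distance satisfies $W_1(F,\hat F_K)=\int|F-\hat F_K|\,dy\le h_K\cdot|\mathrm{supp}|\to0$.
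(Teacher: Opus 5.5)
Your proof is correct, and it takes a more elementary route than the paper.

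\textbf{Uniqueness.} The paper's appendix theorem does not invert $Q$ directly. It cites Theorem~3 of \citet{gushchin2017integrated}: the integrated quantile function $K_X(u)=\int_0^u Q(s)\,ds$ is the Fenchel transform of the integrated distribution function $J_X$, so $Q$ determines the law. Your Galois relation $q_\theta(\tau;x)\le y\iff\tau\le F(y)$ recovers $F$ in one line from right-continuity alone. It also avoids the integrability that $\int_0^u Q(s)\,ds$ tacitly needs, since that integral is $-\infty$ when the lower tail is non-integrable. The duality route adds only a connection to the integrated-function framework, which the lemma never uses.

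\textbf{Finite approximation.} The paper assumes $F$ continuous, fixes $\tau_k=k/(K+1)$, and asserts that ``the empirical CDF constructed from $Q_K$'' converges uniformly, with no construction and no proof. You give an explicit $\hat F_K$ on an arbitrary grid and the quantitative bound $\sup_y|F(y)-\hat F_K(y)|\le h_K$, which is at most $0.1$ for FIRE's deciles. You also show continuity is unnecessary. The bound covers the paper's literal empirical CDF $\frac1K\#\{k:q_\theta^{(\tau_k)}(x)\le y\}$ as well, since on the uniform grid it differs from your $\hat F_K$ by at most $1/(K+1)$.

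Two small repairs are needed.

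First, restrict the index $k$ to $\{0,\dots,K\}$. When $k=K$, which includes every $y$ with $F(y)=1$, there is no grid quantile at $\tau_{K+1}$ to invoke. In that case write $F(y)-\tau_K\le 1-\tau_K\le h_K$ directly; the inequality is non-strict, not strict.

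Second, your $\hat F_K$ is defective: it saturates at $\tau_K<1$. Hence $\int_{\mathbb{R}}|F-\hat F_K|\,dy=\infty$, and the $W_1$ remark fails as written. Place the leftover mass $1-\tau_K$ at $q_\theta^{(\tau_K)}(x)$. This makes $\hat F_K$ a genuine CDF and keeps the Kolmogorov bound at $h_K$, because $F(y)\ge\tau_K$ for $y\ge q_\theta^{(\tau_K)}(x)$. It also restores $W_1\le h_K\cdot\mathrm{diam}(\mathrm{supp})$, since all grid quantiles lie in the closed convex hull of the support.
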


Conditioning on $\{q_\theta^{(\tau)}(x)\}$ allows the residual model $\delta_\phi$ to exploit distributional mismatch. While Theorem~\ref{Theorem:Bayes Risk Monotonicity} establishes that this augmentation is statistically ``safe" (it cannot increase Bayes risk), we now characterize the regime where it provides strictly superior performance.

\begin{assumption}[Non-Gaussian Residuals]\label{assumption:Non-Gaussian Residuals}
    The cross-fidelity residual $r(x)$ exhibits distributional features not captured by its first two moments (e.g., skewness, kurtosis $\ne 3$, or or heavy-tailed).
\end{assumption}

\begin{proposition}[Quantile Conditioning Reduces Risk Under Misspecification] \label{Proposition:Quantile Conditioning Reduces Risk Under Misspecification}%
 Let $\mathcal{Z}_{mv}=\{x, \mu_\theta, \sigma^2_\theta\}$ and $\mathcal{Z}_{quant} = \mathcal{Z}_{mv} \cup \{q_\theta^{(\tau)}\}_{\tau \in \mathcal{Q}}$. If the cross-fidelity residual $r$ is non-Gaussian given $\mathcal{Z}_{mv}$ (e.g., skewed), then:$$R(\mathcal{Z}_{quant}) < R(\mathcal{Z}_{mv}),$$where $R(\cdot)$ denotes the Bayes risk under squared error loss. (See Appendix A.7 for proof.)
\end{proposition}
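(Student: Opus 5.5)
The plan is to reduce the strict inequality to the equality case of Theorem~\ref{Theorem:Bayes Risk Monotonicity}, applied with $\mathcal{Z}_{mv}$ in place of $\mathcal{Z}_{\text{mean}}$ and $\mathcal{Z}_{quant}$ in place of $\mathcal{Z}_{\text{aug}}$. Write $\mathcal{G}_{mv}=\sigma(\mathcal{Z}_{mv})\subseteq\mathcal{G}_{q}=\sigma(\mathcal{Z}_{quant})$, and set $m_{mv}=\mathbb{E}[r\mid\mathcal{G}_{mv}]$ and $m_q=\mathbb{E}[r\mid\mathcal{G}_q]$. Assume $\mathbb{E}[r^2]<\infty$.

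By the tower property, $r-m_q$ is orthogonal in $L^2$ to every $\mathcal{G}_q$-measurable variable. In particular it is orthogonal to $m_q-m_{mv}$. This gives the Pythagorean identity
\begin{equation*}
R(\mathcal{Z}_{mv})-R(\mathcal{Z}_{quant})=\mathbb{E}\big[(m_q-m_{mv})^2\big].
\end{equation*}
This is the same computation as in the proof of Theorem~\ref{Theorem:Bayes Risk Monotonicity}. So the strict inequality holds exactly when $\Pr(m_q\neq m_{mv})>0$. The whole proof therefore reduces to deriving this non-degeneracy from the non-Gaussianity hypothesis.

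Next I would show that quantiles add nothing in the Gaussian case, and so can only matter through the non-Gaussian part. If $q_\theta(\cdot\mid x)$ is Gaussian, then $q_\theta^{(\tau)}(x)=\mu_\theta(x)+\sigma_\theta(x)\Phi^{-1}(\tau)$. Each quantile is then $\mathcal{G}_{mv}$-measurable, so $\mathcal{G}_q=\mathcal{G}_{mv}$ and equality holds. Conversely, by Lemma~\ref{lemma: Quantile Embedding}, the standardized quantiles $\tilde q^{(\tau)}=(q_\theta^{(\tau)}-\mu_\theta)/\sigma_\theta$ encode the shape of the predictive distribution, such as skewness through $\tilde q^{(0.9)}+\tilde q^{(0.1)}$. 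Under Assumption~\ref{assumption:Non-Gaussian Residuals}, these shape statistics vary non-trivially and are not functions of $(x,\mu_\theta,\sigma^2_\theta)$ alone.

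The main obstacle is the final link. I must show that this extra shape information actually moves the conditional mean of $r$, i.e.\ that $r$ is not conditionally mean-independent of $\tilde q$ given $\mathcal{Z}_{mv}$. Non-Gaussianity of $r$ by itself does not imply this: a skewed $r$ whose conditional mean ignores the quantiles gives equality. So I would make explicit the reading of ``non-Gaussian given $\mathcal{Z}_{mv}$'' that the statement needs.

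My first attempt would use Assumption~\ref{Assumption:TFM Posterior Approximation Quality}. Since $\mu_\theta$ is the posterior mean, $\mathbb{E}[y^{(T-1)}-\mu_\theta\mid\mathcal{G}_q]=0$. Hence $m_q=\mathbb{E}[\delta\mid\mathcal{G}_q]$, where $\delta$ is the cross-fidelity discrepancy of Assumption~\ref{assumption:Heteroscedastic Discrepancy}. It then suffices to assume that the discrepancy's conditional mean depends on a shape functional of the LF predictive, for example a linear dependence on skewness. In that case $m_q-m_{mv}$ equals a nonzero multiple of $\tilde s-\mathbb{E}[\tilde s\mid\mathcal{G}_{mv}]$, where $\tilde s$ is the quantile-based skewness. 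This term has positive variance precisely because the predictive is non-Gaussian on a set of positive measure, which gives $\Pr(m_q\neq m_{mv})>0$.

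I would state this dependence as the precise form of the hypothesis and remark that without it only the weak inequality of Theorem~\ref{Theorem:Bayes Risk Monotonicity} survives.
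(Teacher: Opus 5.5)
Your reduction coincides with the paper's. The identity $R(\mathcal{Z}_{mv})-R(\mathcal{Z}_{quant})=\mathbb{E}[(m_q-m_{mv})^2]$ is exactly its law-of-total-variance decomposition. You are also right that non-Gaussianity of $r$ alone does not yield $\Pr(m_q\neq m_{mv})>0$; the paper bridges that step only with the heuristic that the median ``diverges from the mean,'' which conflates the shape of $r$ with the LF predictive quantiles. Your repair, however, breaks at the claim that $\tilde s-\mathbb{E}[\tilde s\mid\mathcal{G}_{mv}]$ has positive variance ``precisely because the predictive is non-Gaussian.'' Non-Gaussianity only says that $\tilde s$ takes non-Gaussian values. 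What you need is that $\tilde s$ is \emph{not} $\mathcal{G}_{mv}$-measurable, and in the paper's setting it is. There $x\in\mathcal{Z}_{mv}$, the risk is an expectation over $p(x,r)$ with $\mathcal{D}_{\text{LF}}$ fixed, and so $\mu_\theta$, $\sigma^2_\theta$ and every $q^{(\tau)}_\theta$ are deterministic Borel functions of $x$. Hence $\sigma(\mathcal{Z}_{quant})=\sigma(\mathcal{Z}_{mv})=\sigma(x)$, $m_q=m_{mv}$, and $R(\mathcal{Z}_{quant})=R(\mathcal{Z}_{mv})$ identically, however the conditional mean of $\delta$ depends on skewness. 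This persists under the Bayesian reading you use to get $\mathbb{E}[y^{(T-1)}-\mu_\theta\mid\mathcal{G}_q]=0$, as long as $\mathcal{D}_{\text{LF}}$ is held fixed. Your sentence that the shape statistics ``are not functions of $(x,\mu_\theta,\sigma^2_\theta)$ alone'' is therefore false. It could not follow from Assumption~\ref{assumption:Non-Gaussian Residuals} in any case, since that assumption constrains $r$, not $q_\theta$.

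Suppose you let $\mathcal{D}_{\text{LF}}$ vary, taking the risk over tasks so the features are functions of $(x,\mathcal{D}_{\text{LF}})$. That removes this collapse, but non-Gaussianity is still not sufficient. If every LF predictive is $\mu_\theta+\sigma_\theta Z$ for one fixed skewed law $Z$, then $q^{(\tau)}_\theta=\mu_\theta+\sigma_\theta c_\tau$ with constants $c_\tau$. So $\tilde s$ is constant, $\mathcal{G}_q=\mathcal{G}_{mv}$, and equality holds even under your linear-in-skewness hypothesis, while $r$ may be arbitrarily skewed given $\mathcal{Z}_{mv}$. This is a counterexample to the proposition as stated. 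A correct version needs two explicit ingredients:
\begin{enumerate}
\item randomness beyond $x$ in the risk, together with $\mathrm{Var}(\tilde s\mid\mathcal{G}_{mv})>0$ on a set of positive probability;
\item your dependence $m_q-m_{mv}=b\,(\tilde s-\mathbb{E}[\tilde s\mid\mathcal{G}_{mv}])$ with $b\neq 0$.
\end{enumerate}
Together these give $\mathbb{E}[(m_q-m_{mv})^2]=b^2\,\mathbb{E}[\mathrm{Var}(\tilde s\mid\mathcal{G}_{mv})]>0$. Alternatively, assume $\mathbb{E}[\mathrm{Var}(m_q\mid\mathcal{G}_{mv})]>0$ directly. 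You correctly sensed that the stated hypothesis is too weak. But you supplied only the second ingredient and asserted the first from non-Gaussianity, which does not imply it.
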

This inequality captures the key insight: when residuals are non-Gaussian, a common scenario in MF modeling where LF physics approximate HF reality \citep{Perdikaris2017,colombo2025warped}, quantiles encode shape information that variance cannot represent, thereby strictly reducing the irreducible error of the residual model.

\section{Conclusion, Limitations, and Future work}

We presented FIRE, a training-free multi-fidelity regression framework that integrates tabular foundation models with residual and distribution-based uncertainty conditioning to tackle problems with very scarce high-fidelity data. Across 31 benchmark problems, including scalable synthetic functions, hyperparameter optimization data, and engineering simulations, FIRE consistently achieves SOTA performance while maintaining a favorable runtime profile, often outperforming extensive GP or deep learning model retraining from the existing methods. 

While FIRE eliminates the need for problem-specific training, it is limited by the TFM's context window, model size, and pre-training quality. Despite eliminating training overhead, FIRE's inference latency is bounded by the quadratic complexity of transformer attention~\citep {qu2025tabicl} too. Consequently, while significantly faster than deep learning baselines, its speed advantage over lightweight GP surrogates is more modest on larger contexts. Looking forward, we plan to expand FIRE to multi-fidelity optimization setting with its outstanding performance in predictive accuracy and uncertainty calibration.

\FloatBarrier
\newpage

\section*{Impact Statement}
This work aims to accelerate scientific discovery and engineering design by improving the efficiency of multi-fidelity regression. By reducing the reliance on expensive high-fidelity simulations (e.g., in computational fluid dynamics or hyperparameter optimization), our method with faster runtime has the potential to lower the computational energy associated with data-intensive scientific modeling. We do not foresee any immediate negative ethical or societal consequences beyond the environmental impact of running inference on foundation models and standard considerations in automating engineering workflows.

\nocite{langley00}

\bibliography{MF_paper}
\bibliographystyle{icml2026}

\newpage
\appendix
\onecolumn

\startcontents[appendices]
\phantomsection

\section*{Table of Contents for Appendices}
\printcontents[appendices]{}{1}{}  %

\newpage

\section{Additional Experiment Results and Ablations}\label{Section:Ablation}

\subsection{Why Only Two Surrogates in FIRE? --- Ablation on the Number of Surrogates Used in FIRE}\label{appendix:surrogate_level}

A natural alternative to our bi-level strategy is the classical recursive formulation that explicitly models each step in the fidelity hierarchy, requiring $T-1$ distinct TFMs for a $T$-fidelity problem that commonly used by recursive GP co-kriging \citep{le2014recursive,cutajar2019deep}. We compare this variant against the standard FIRE implementation (2 surrogates) in Figure \ref{fig:numer_of_surrogates}. We observe that adding surrogates provides no distinct advantage in predictive accuracy (Elo), likely because the fidelity token sufficiently conditions the TFM on the source variance. However, the recursive strategy drastically degrades computational efficiency, increasing runtime by over $10\times$. Therefore, we adopt the bi-level approach to maximize computational throughput without compromising the fidelity of the posterior approximation.

\begin{figure}[ht]
    \centering
    \includegraphics[width=.55\linewidth]{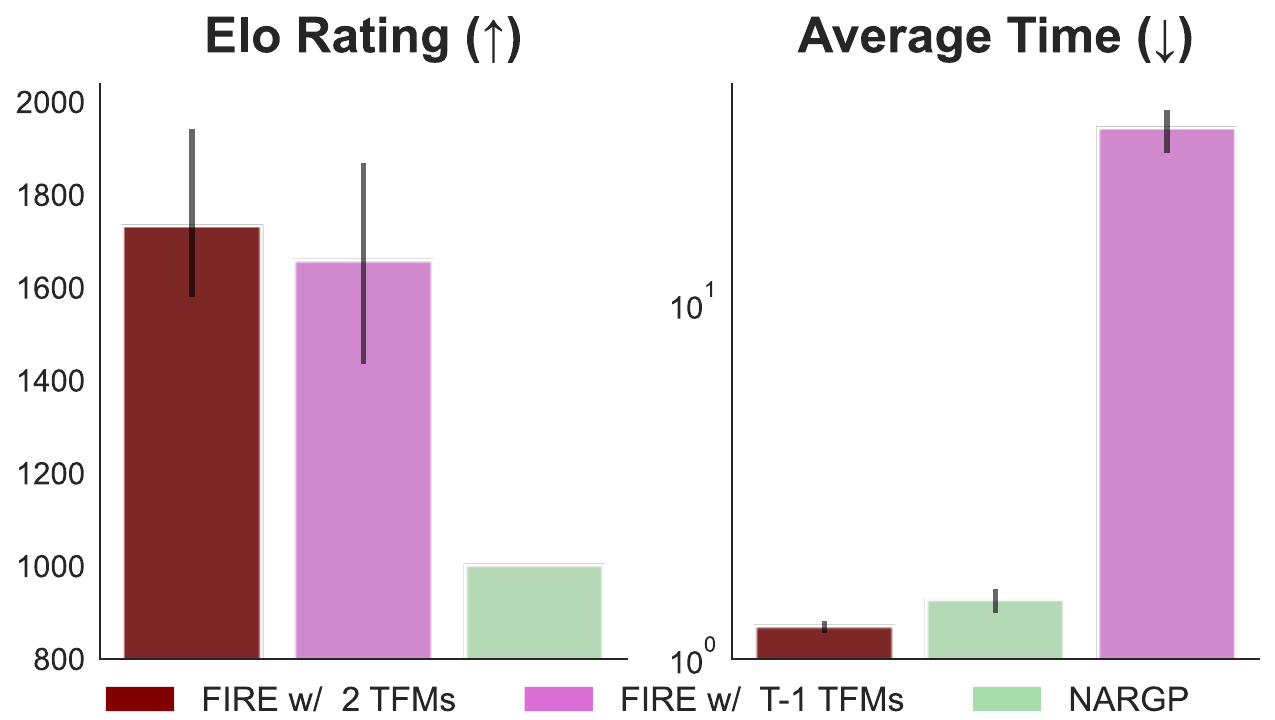}
    \caption{\textbf{Ablation on the number of surrogates. }We compare the standard bi-level FIRE (using 2 TFMs) against a fully recursive variant modeling every fidelity step (using $T-1$ TFMs) and the NARGP baseline. The bi-level approach achieves superior trade-offs between accuracy and efficiency, avoiding the prohibitive runtime costs of the recursive strategy.}
    \label{fig:numer_of_surrogates}
\end{figure}

\subsection{Can FIRE be Applied to GPs? --- Result of Applying FIRE Framework to GPs}\label{Appendix:ablation:FIREGP}

A key question is whether FIRE's performance stems solely from the Tabular Foundation Model (TFM) or from our distribution-conditioned residual framework. To test this, we instantiate FIRE(GP), replacing the TFM with a standard Gaussian Process to generate the $\mu$ and $\sigma^2$ statistics for the augmented context. As shown in Figure \ref{fig:FIREGP}, FIRE(GP) significantly outperforms traditional GP-based multi-fidelity methods (e.g., AR(1), NARGP, ResGP). This validates our theoretical analysis in Section \ref{Section:Theoretical Analysis}: conditioning residuals on uncertainty estimates improves performance regardless of the surrogate. However, FIRE(TFM) still outperforms FIRE(GP), confirming that while the framework is effective, the TFM's pre-trained prior offers a distinct advantage over standard kernels in the low-data regime.

\begin{figure}[ht]
    \centering
    \includegraphics[width=.9\linewidth]{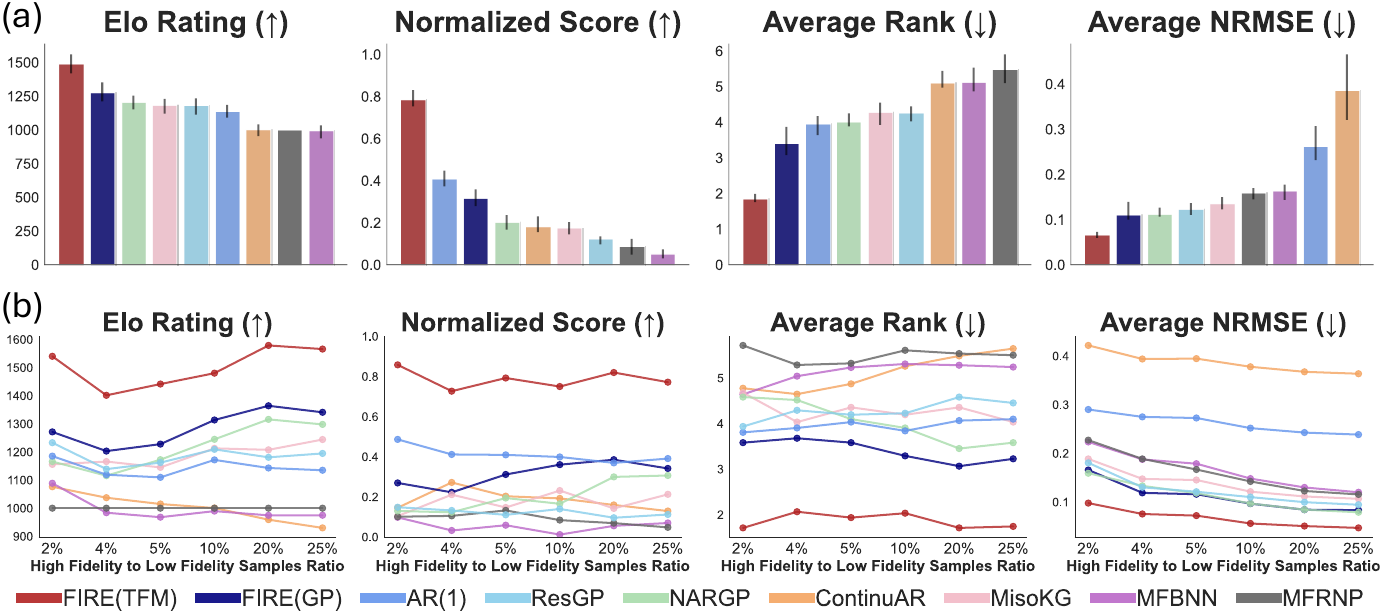}
    \caption{\textbf{Impact of surrogate choice on performance. }Replacing the TFM with a standard GP (FIRE(GP)) still yields state-of-the-art results compared to existing GP baselines, confirming that our residual learning framework drives significant gains. However, the performance gap between FIRE(TFM) and FIRE(GP) demonstrates the additional value provided by the TFM's in-context prior.}
    \label{fig:FIREGP}
\end{figure}

\subsection{Why Not Use Existing PFNs for Bayesian Optimization for Multi-fidelity Regression? --- Benchmarking against LC-PFN and FT-PFN}

We do not use FT-PFN \citep{rakotoarison2024ifbo} or LC-PFN \citep{lee2025costsensitive} in the main benchmark because they restrict input dimensions to fewer than 10. We validate this decision by comparing their performance on the compatible LCBench dataset in Figure \ref{fig:FIRE_vs_LC_FT}. We also include Vanilla TabPFNv2.5 as a baseline to quantify the gain from the underlying model. This vanilla approach treats the fidelity index as an ordinary categorical feature within the joint dataset.

The results indicate that Vanilla TabPFNv2.5 is already superior to both FT-PFN and LC-PFN. The specialized models struggle to match the generalization capabilities of the larger, general-purpose foundation model. Vanilla TabPFNv2.5 achieves better accuracy and ranking without any task-specific adjustments. FIRE further extends this lead by incorporating distribution-conditioned residual learning. This demonstrates that a robust generalist model is preferable to restricted specialist models for these tasks.

\begin{figure}[ht]
    \centering
    \includegraphics[width=.95\linewidth]{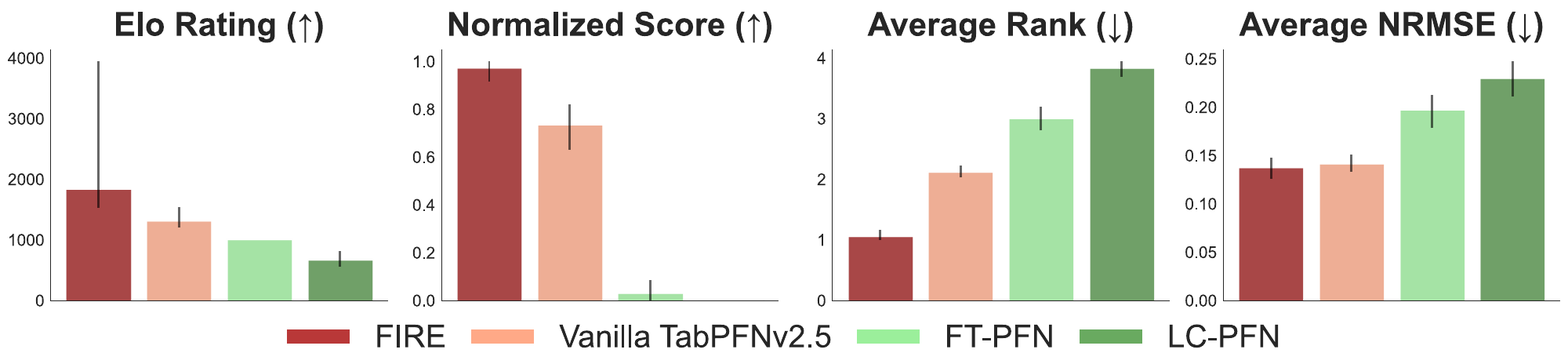}
    \caption{\textbf{Comparison against specialized HPO TFM surrogates.} We benchmark FIRE against state-of-the-art PFNs explicitly designed for learning curve extrapolation and Bayesian optimization (FT-PFN, LC-PFN) on the LCBench dataset. Despite FIRE using a general-purpose tabular foundation model, it significantly outperforms these specialized architectures in NRMSE and ranks among the top on the 6 HPO datasets in this study.}
    \label{fig:FIRE_vs_LC_FT}
\end{figure}

\subsection{What About Using Other TFMs for FIRE? --- Benchmarking FIRE with different TFMs}\label{Appendix:Different_TFM_ablation}

To assess whether FIRE's performance is driven by the specific choice of foundation model and the algorithm generalization, we instantiate the framework with four different TFMs: RealTabPFN \citep{garg2025real}, TabPFNv2.5, TabPFNv2 \citep{hollmann2025accurate}, and Mitra \citep{zhang2025mitra}. As shown in Figure \ref{fig:Different_TFM}, all TFM-based FIRE variants significantly outperform the strongest GP baseline, NARGP, confirming the robustness of the distribution-conditioned residual framework. Among the TFMs, TabPFNv2.5 and RealTabPFN achieve the highest Elo ratings and lowest NRMSE, with RealTabPFN showing a slight edge likely due to its fine-tuning on real-world data distributions. While Mitra outperforms the GP baseline, it lags behind the TabPFN family. We attribute this to the fact that Mitra’s current AutoGluon interface lacks native support for direct quantile and variance outputs, necessitating an approximation via a split conformal wrapper that may degrade the quality of the distributional features used for residual conditioning.

\begin{figure}[ht]
    \centering
    \includegraphics[width=.95\linewidth]{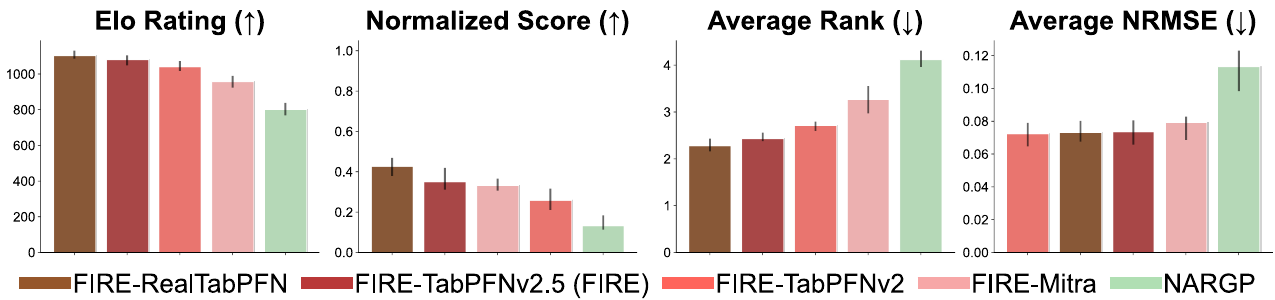}
    \caption{Benchmark of FIRE instantiated with different TFM. We compare FIRE using RealTabPFN, TabPFNv2.5, TabPFNv2, and Mitra against the strongest GP-baseline (NARGP). While all TFM-based variants outperform the GP baseline, TabPFN-based backbones yield the strongest results.}
    \label{fig:Different_TFM}
\end{figure}

\subsection{Does FIRE Maintain SOTA on Nested Data? --- Performance Comparison on Nested vs. Non-nested Inputs}

All benchmarks in the main text use disjoint (non-nested) input sets to mimic realistic, asynchronous data collection. We investigate whether this protocol unfairly penalizes autoregressive baselines that typically favor nested designs. We compare the performance of FIRE against NARGP, AR(1), and ResGP on both nested ($X_{HF} \subseteq X_{LF}$) and disjoint ($X_{HF} \not\subseteq X_{LF}$) inputs of our synthetic benchmarks, since the real-world simulation and HPO datasets come disjointly. Figure \ref{fig:Disjoint_grid} shows the results across all benchmarks.

We observe two key trends. First, GP-based autoregressive methods (NARGP, AR(1), ResGP) consistently perform better in the nested setting, confirming their sensitivity to data alignment. However, even under these ideal conditions, they still lag behind FIRE in both Elo rating and average rank. Second, the performance gap widens significantly in the disjoint regime. While the baselines degrade when inputs are unpaired, FIRE maintains superior performance and stability. This demonstrates that FIRE remains SOTA in the nested setting while offering unique robustness to the irregular, non-nested data structures common in real-world applications.

\begin{figure}[htbp!]
    \centering
    \includegraphics[width=.8\linewidth]{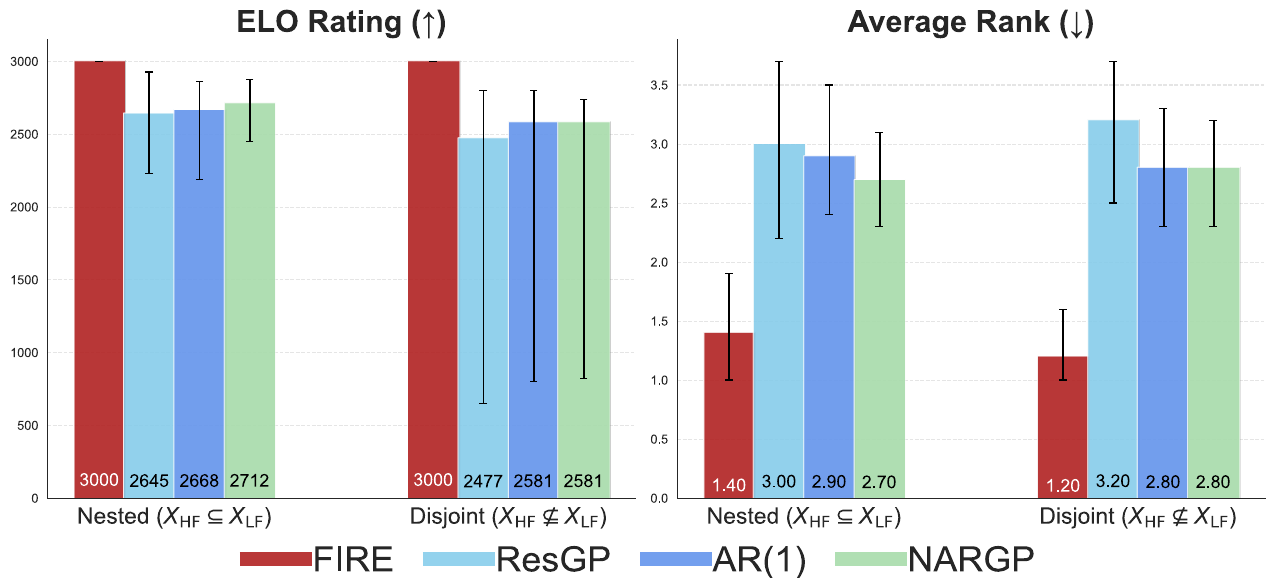}
    \caption{\textbf{Robustness to non-nested data. }We compare FIRE against autoregressive GP baselines (NARGP, AR(1), ResGP) on nested ($X_{HF} \subseteq X_{LF}$) versus disjoint ($X_{HF} \not\subseteq X_{LF}$) input designs. Elo ratings (left) are calibrated to FIRE=3000. While baselines benefit from nesting, FIRE achieves the highest Elo and lowest rank in both settings, demonstrating superior generalization to disjoint data.}
    \label{fig:Disjoint_grid}
\end{figure}

\subsection{Is In-Context Learning (ICL) Necessary? --- Ablation of Base and Residual Learner Models}\label{Appendix:ICL_ablation}

We investigate if FIRE's performance comes from the distributional features or the TFM architecture. We test four combinations of surrogates. We use either a GP or a TFM for the low-fidelity (base) model and the high-fidelity (residual) model. Figure \ref{fig:res_model} shows the results. We see two trends: First, configurations using a TFM for the residual model consistently outperform those using a GP. This suggests that the TFM's flexible prior handles the complex, non-Gaussian structure of residuals better than a GP, even when both receive the same distributional summaries. Second, using a TFM for the base model (hatched bars) further improves accuracy compared to a GP base model.

\begin{figure}[htbp!]
    \centering
    \includegraphics[width=.75\linewidth]{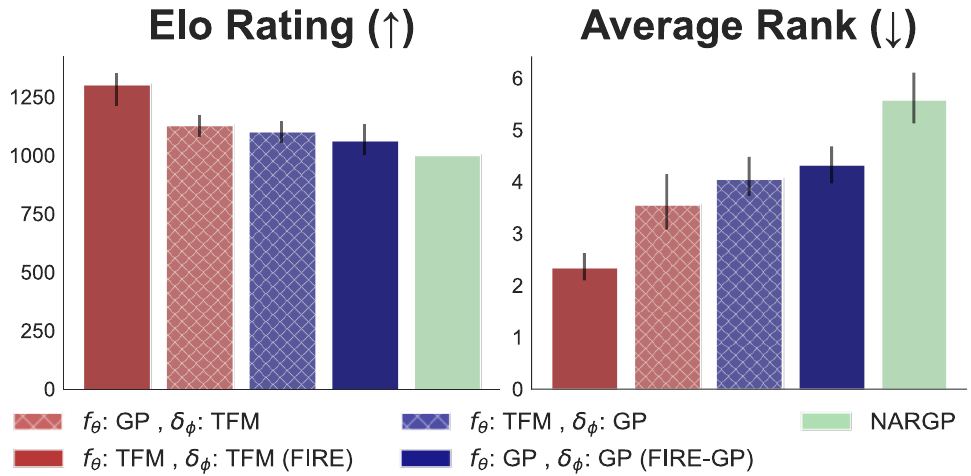}
    \caption{\textbf{Impact of surrogate choice for base and residual modeling.} We compare combinations of GP and TFM backbones on their accuracy (NRMSE). Red bars indicate a TFM residual model, while navy bars indicate a GP. Hatching indicates a mixture of models in the FIRE framework. The TFM's ability to model residuals via ICL provides a distinct advantage over standard kernels.}
    \label{fig:res_model}
\end{figure}

\subsection{Can FIRE Be Improved Further? – Enhancing Residual Modeling via Post-Hoc Ensembling} \label{Appendix:post_hoc_ensemble}

We further investigate if improving the residual model directly translates to better multi-fidelity performance. We apply post-hoc ensembling (PHE) to the residual TFM, a technique introduced in \citet{hollmann2025accurate} to boost the residual model accuracy. Figure \ref{fig:post_hoc_ensemble} shows that FIRE with PHE outperforms the standard FIRE implementation. This confirms that the quality of the residual prediction is a bottleneck for multi-fidelity regression. However, the improvement comes with a computational cost. The inference time increases by an order of magnitude.

\begin{figure}[ht]
    \centering
    \includegraphics[width=.65\linewidth]{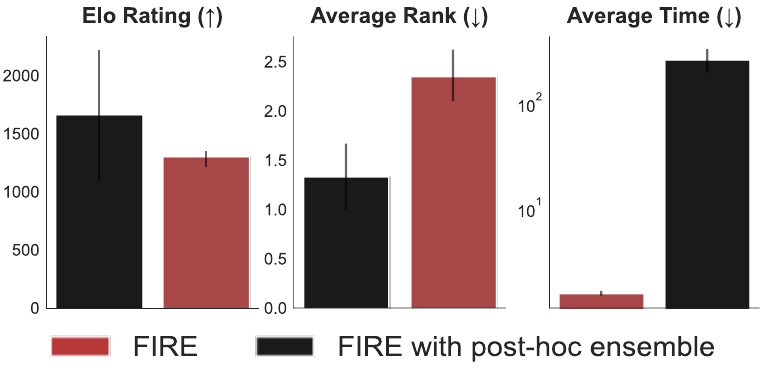}
    \caption{\textbf{Impact of post-hoc ensembling on FIRE. }Ensembling the residual model improves predictive accuracy (Elo and rank of NRMSE) but increases inference time significantly.}
    \label{fig:post_hoc_ensemble}
\end{figure}

\newpage

\section{Full Statistical Analysis and Extended Results}\label{Appendix:full_results}

\subsection{Evaluation Metrics and Aggregation}\label{Appendix:B2:metrics}

We expand upon the experimental protocol in Section \ref{Sectoin:Experiments:Metrics} by including $R^2$ assessment and detailing our aggregation hierarchy. The individual metric formulations are:
$$\text{NRMSE} = \frac{\sqrt{\frac{1}{N} \sum_{i=1}^{N} (y_i - \hat{y}_i)^2}}{y_{\max} - y_{\min}}$$

$$\text{NLL} = \frac{1}{2N} \sum_{i=1}^{N} \left( \ln(2\pi \sigma_i^2) + \frac{(y_i - \hat{y}_i)^2}{\sigma_i^2} \right)$$

$$R^2 = 1 - \frac{\sum_{i=1}^{N} (y_i - \hat{y}_i)^2}{\sum_{i=1}^{N} (y_i - \bar{y})^2}$$

To ensure fair comparison across diverse problem scales, we utilize four aggregation metrics following \citet{erickson2025tabarena}: 
\begin{enumerate}
    \item \textbf{Elo Rating}, a rating system for pairwise comparisons, in which a model’s score reflects its predicted probability of winning against other models. A 400-point Elo difference corresponds to a 10-to-1 (91\%) expected win rate. We calibrate MFRNP to 1000 as the baseline.
    \item \textbf{Normalized Score}, where the best and median models are scaled to 1.0 and 0.0, respectively, neutralizing scale differences between datasets.
    \item \textbf{Average Rank}, providing a robust, scale-invariant ordering.
    \item \textbf{Statistical Ranking}, visualized via \textbf{Critical Difference (CD) diagrams} based on the Friedman test and Nemenyi post-hoc analysis ($\alpha=0.05$).
\end{enumerate}

\begin{figure}[htb!]
    \centering
    \includegraphics[width=.9\linewidth]{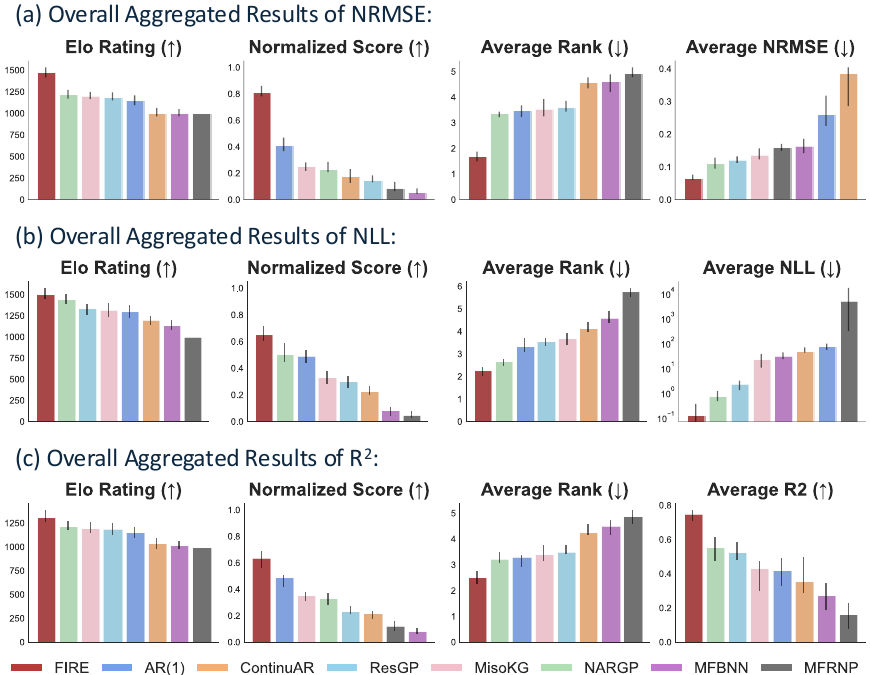}
    \caption{\textbf{Overall aggregate performance across 31 problems.} Performance is aggregated via Elo, Normalized Score, Average Rank, and raw averages for NRMSE, NLL, and $R^2$. FIRE dominates all aggregation schemes, confirming its status as the state-of-the-art method for accuracy and uncertainty quantification.}
    \label{fig:Full_bar}
\end{figure}

\subsection{Overall Performance Analysis}

\paragraph{Global Statistical Ranking. }Figure \ref{fig:Full_bar} reports the combined performance over 31 benchmark problems under the four aggregation protocols in \ref{Appendix:B2:metrics}. FIRE reaches the top Elo rating and Normalized Score on all three error metrics (NRMSE, NLL, and $R^2$). The performance advantage of FIRE is consistent across aggregation schemes and not metric-dependent. Notably, in terms of Average Rank, FIRE maintains a significant lead ($\approx 1.8$) over the strongest GP baseline (NARGP, rank $\approx 3.5$), validating the effectiveness of the TFM prior in diverse tabular settings.

\paragraph{Robustness to Data Imbalance. }Figure \ref{fig:Full_splits} decomposes performance by the high-fidelity to low-fidelity sample ratio ($N_{HF}/N_{LF}$). While strong baselines like NARGP and MFKG degrade rapidly as $N_{HF}$ drops below 5\%, FIRE maintains superior stability in extremely data-scarce regimes ($2-5\%$). This confirms that the frozen prior of the TFM acts as a robust regularizer, preventing the overfitting common in kernel-based methods when high-fidelity data is sparse.

\paragraph{Statistical Significance and Win Rates.} To rigorously verify these rankings, Figure \ref{fig:CD_All} presents Critical Difference (CD) diagrams based on the Nemenyi post-hoc test ($\alpha=0.05$). FIRE is statistically distinguishable from all GP-based baselines in NRMSE, forming a distinct top-tier significance group. This dominance is further corroborated by the pairwise win-rate matrix in Figure \ref{fig:win_rate_matrix}, where FIRE achieves a positive win rate ($>60\%$) against every competing baseline, including deep learning methods like MFRNP and MFBNN.

\paragraph{Runtime Analysis.} Efficiency and Scaling. Figure 11 analyzes algorithmic scalability on the synthetic HD suite ($d=10$ to $50$). Results highlight distinct efficiency tiers: neural methods (MFBNN, MFRNP) are computationally inefficient due to training convergence requirements, while standard GPs form a middle tier bounded by kernel fitting costs. FIRE demonstrates superior efficiency, consistently achieving the lowest runtime due to its zero-shot nature. Although limited by the quadratic attention mechanism of the underlying transformer~\citep{qu2025tabicl}, FIRE's zero-shot paradigm still offers a decisive speed advantage for high-dimensional regression tasks.

\begin{figure}[htbp!]
    \centering
    \includegraphics[width=1\linewidth]{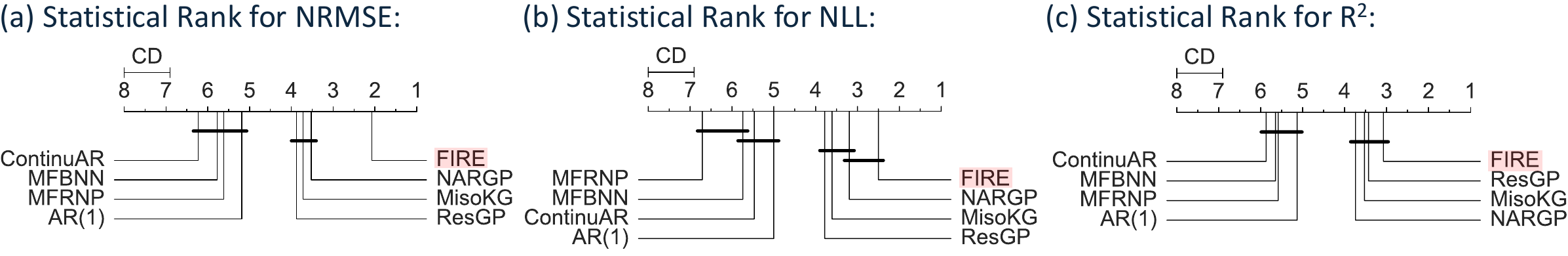}
    \caption{\textbf{Critical Difference (CD) diagrams} based on the Nemenyi post-hoc test ($\alpha=0.05$) for (a) NRMSE, (b) NLL, and (c) $R^2$. Methods connected by a horizontal bar are not statistically significantly different. FIRE ranks first statistically across all three metrics, specifically forming a distinct significance group from the GP baselines in accuracy (NRMSE).}
    \label{fig:CD_All}
\end{figure}

\begin{figure}[htbp!]
    \centering
    \includegraphics[width=.95\linewidth]{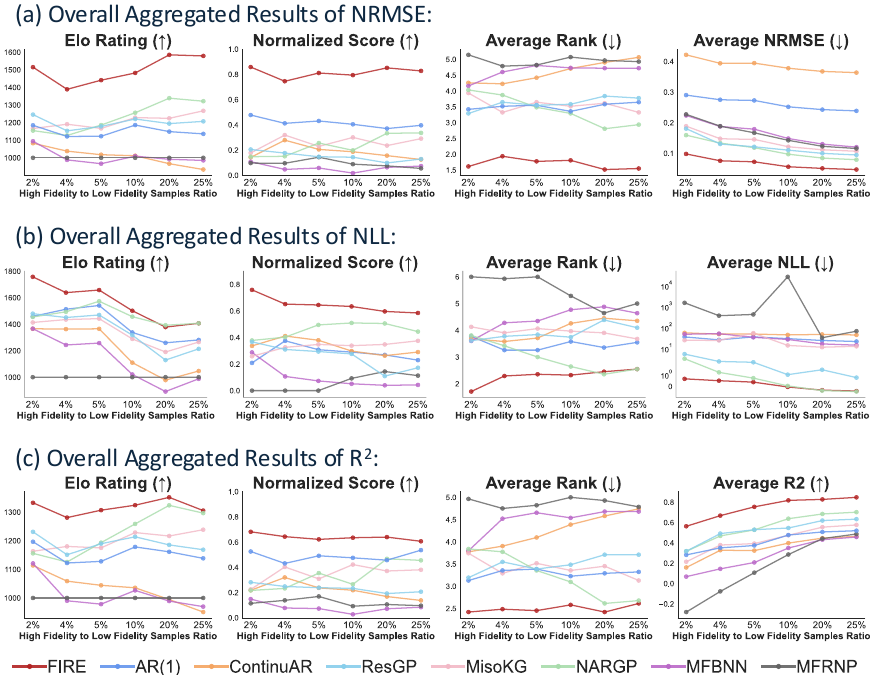}
    \caption{\textbf{Performance vs. Data Imbalance. }Aggregate metrics (Elo, Normalized Score, Rank, Raw) plotted against the HF/LF sample ratio. FIRE exhibits high stability in low-data regimes (2-5\%), significantly outperforming baselines which require larger HF budgets to converge.}
    \label{fig:Full_splits}
\end{figure}

\begin{figure}[htbp!]
    \centering
    \includegraphics[width=.6\linewidth]{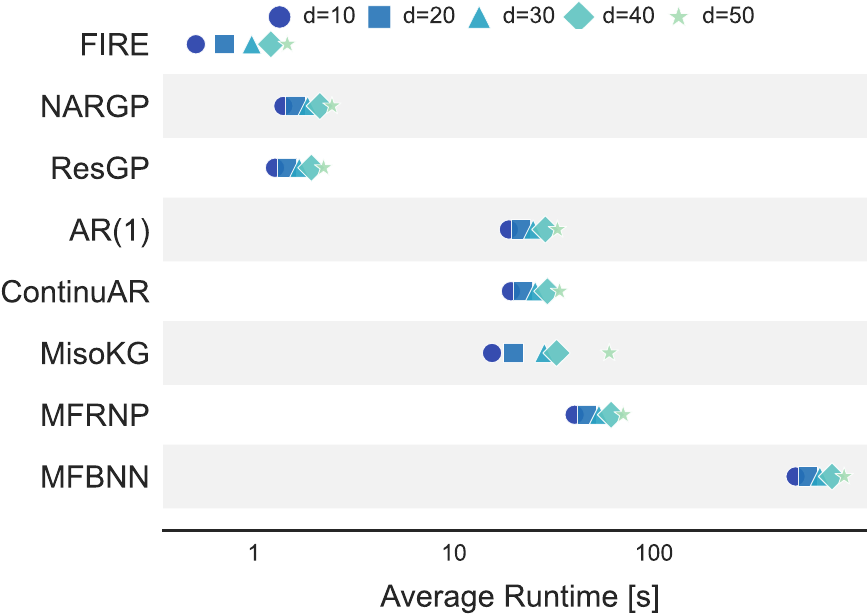}
    \caption{\textbf{Runtime scaling with input dimensionality.} This figure reports wall-clock runtime on the HD benchmark suite as the input dimension $d$ increases from 10 to 50. FIRE has the lowest runtime at all dimensions. GP and deep learning baselines take longer because they rely on kernel fitting or extensive training.}
    \label{fig:runtime_scaling}
\end{figure}

\begin{figure}[htbp!]
    \centering
    \includegraphics[width=.6\linewidth]{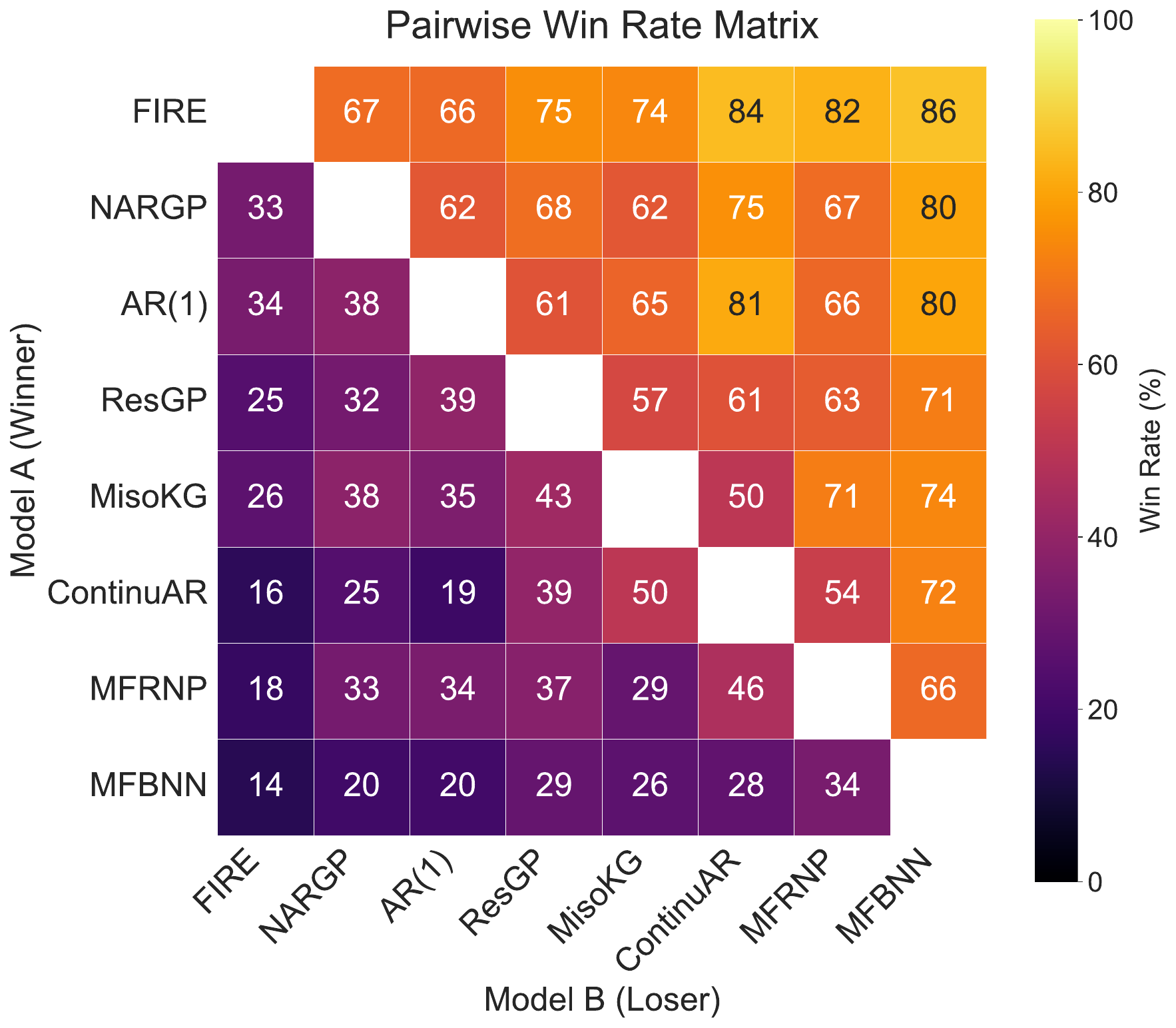}
    \caption{\textbf{Pairwise win-rate comparison for all problems. }The heatmap displays the percentage of tasks where the algorithm on the Y-axis outperforms the algorithm on the X-axis. FIRE consistently beats all baselines across the majority of tasks (higher numbers are better).}
    \label{fig:win_rate_matrix}
\end{figure}

\subsection{Performance by Problem Category and Data Imbalance}
Figure 12-14 decompose performance across Synthetic, Physics-based, and HPO domains under varying high-fidelity data budgets ($2-25\%$):
\begin{itemize}

    \item \textbf{Synthetic Functions:} FIRE exhibits superior robustness in the low-data regime ($<5\%$), outperforming deep learning baselines in NLL and $R^2$ while remaining competitive with GP methods (AR(1), NARGP, MFKG) on smooth surfaces.

    \item \textbf{Physics-based Simulations:} FIRE dominates engineering benchmarks, achieving the lowest NRMSE and NLL across all ratios and outperforming strong baselines such as ContinuAR, which is designed for solving engineering multi-fidelity problems (e.g., PDE simulations). This confirms that FIRE's distributional conditioning effectively captures the heteroscedastic residuals inherent in complex physical solvers (e.g., CFD), where standard kernels can still suffer from overfitting.

    \item \textbf{HPO Tasks:} On LCBench problems, FIRE consistently ranks the best in all metrics, validating its dominant performance as a surrogate for irregular hyperparameter landscapes. Its stability in the extreme sparsity regime ($2\%$ HF data) highlights its potential for efficient, few-shot multi-fidelity optimization.

\end{itemize}

\begin{figure}[htbp!]
    \centering
    \includegraphics[width=.85\linewidth]{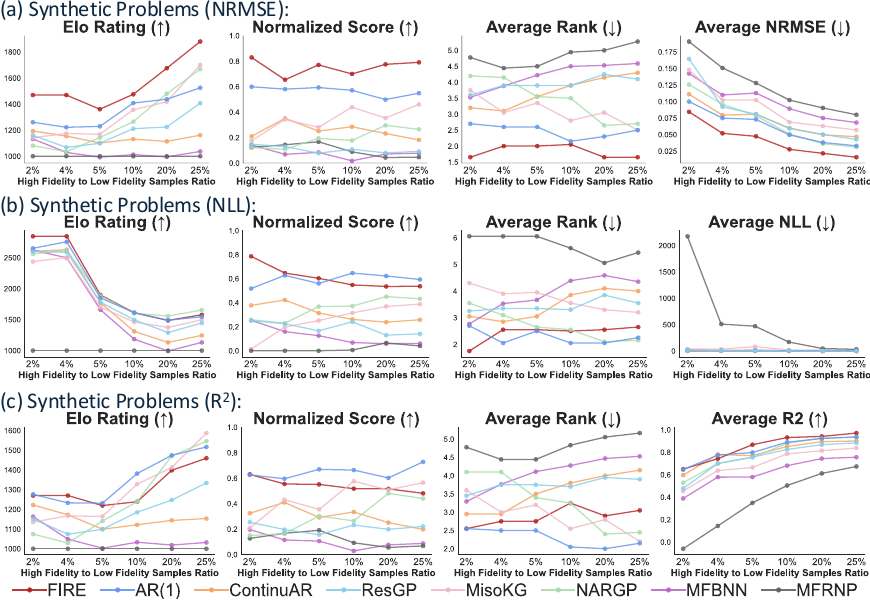}
    \caption{\textbf{Synthetic Benchmarks: Performance vs. data imbalance.} Aggregated metrics vs. HF/LF ratio for 18 synthetic tasks. While GP-based methods like AR(1) remain competitive, FIRE maintains top-tier performance, particularly in the low-data regime ($2-5\%$) for NLL and $R^2$.}
    \label{fig:Synthetics_splits}
\end{figure}

\begin{figure}[htbp!]
    \centering
    \includegraphics[width=.85\linewidth]{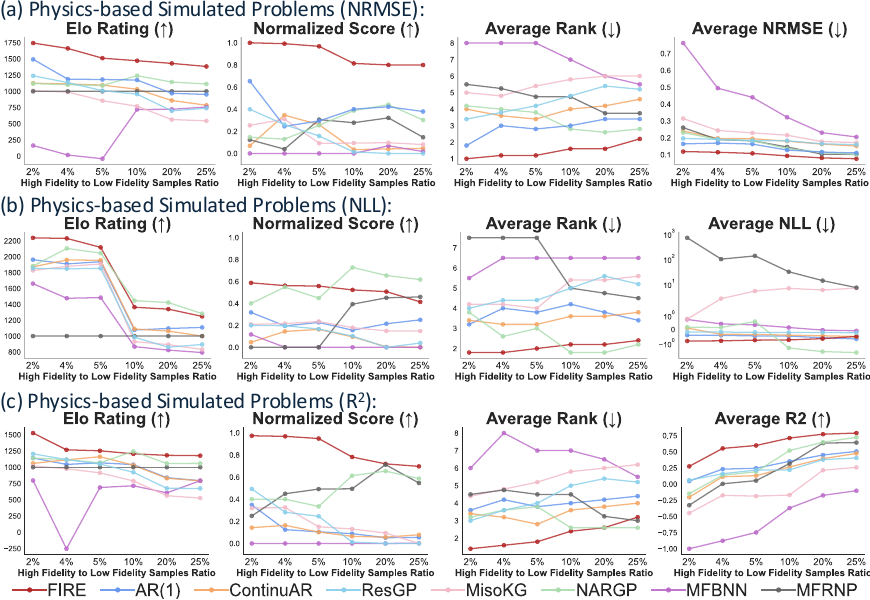}
    \caption{\textbf{Physics-based Simulations Benchmarks: Performance vs. data imbalance.} Aggregated results for the 7 engineering and physics problems. FIRE demonstrates a clear advantage here, significantly outperforming all baselines in NRMSE and NLL.}
    \label{fig:Engineering_splits}
\end{figure}

\begin{figure}[htbp!]
    \centering
    \includegraphics[width=.85\linewidth]{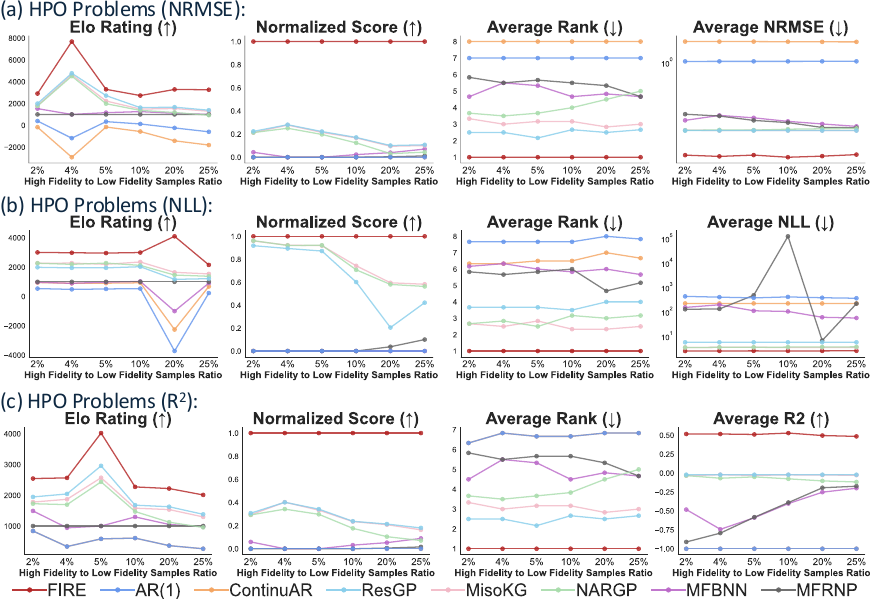}
    \caption{\textbf{Hyperparameter Optimization (HPO) tasks: Performance vs. data imbalance.} Aggregated results for the 6 LCBench learning curve datasets. FIRE outperforms all baselines, validating its effectiveness in modeling the irregular response surfaces typical of hyperparameter optimization landscapes.}
    \label{fig:HPO_splits}
\end{figure}

\FloatBarrier
\newpage

\section{Theory}\label{Section:Theory}
In this section, we establish the theoretical foundations for FIRE. Our analysis builds upon the statistical framework of Prior-Data Fitted Networks (PFNs)~\citep{muller2021transformers,nagler2023statistical}, while accounting for the unique properties of TabPFN as a surrogate model and the principles of generalized Bayesian stacking~\citep{yao2018using}.

\subsection{Formal Statement of Assumption~\ref{Assumption:TFM Posterior Approximation Quality} (TFM Posterior Approximation)}

We ground Assumption~\ref{Assumption:TFM Posterior Approximation Quality} in the theoretical framework established for Prior-Data Fitted Networks (PFNs)~\citet {muller2021transformers} and~\citet{nagler2023statistical}.

\begin{theorem}[PFN Posterior Approximation; \citet{muller2021transformers}]\label{Appendix:theorem:PFN Posterior Approximation}
Let $\Pi$ be a task prior over regression functions $f: \mathcal{X} \to \mathbb{R}$. Let $q_\theta(y|x, D)$ be a parametric model (e.g., a Transformer) trained to minimize the prior-data negative log-likelihood:$$\theta^* = \mathop{\arg\min}_{\theta} \mathbb{E}_{(f, D) \sim \Pi} \left[ -\sum_{(x,y) \in D_{val}} \log q_\theta(y | x, D_{train}) \right]$$Then, the learned predictive distribution $q_{\theta^*}$ approximates the true Bayesian posterior predictive distribution $\pi(y|x, D) = \int p(y|x, f) d\Pi(f|D)$ in the sense of minimizing the expected KL divergence:$$\mathbb{E}_{D, x} \left[ KL( \pi(\cdot|x, D) \ || \ q_{\theta^*}(\cdot|x, D) ) \right]$$If the model class $q_\theta$ has sufficient capacity to represent $\pi$, then $q_{\theta^*} = \pi$ almost everywhere~\citep{muller2021transformers}.
\end{theorem}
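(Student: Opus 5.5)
The statement is the standard PFN result of \citet{muller2021transformers}: minimizing the prior-data negative log-likelihood is equivalent to minimizing an expected KL divergence to the posterior predictive. The plan is to rewrite the population objective with the tower property and split it into an entropy term that does not depend on $\theta$ plus an expected KL term.

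\textbf{Step 1: condition on the context and query.} Write the objective as $\mathcal{L}(\theta)=\mathbb{E}_{(f,D)\sim\Pi}\big[-\log q_\theta(y\mid x,D_{train})\big]$ for a generic held-out pair $(x,y)$; summing over $D_{val}$ only multiplies by a constant by exchangeability. Apply the tower property, conditioning on $(D_{train},x)$. Under the joint generative model $f\sim\Pi$, $D_{train}\mid f$, $(x,y)\mid f$, the conditional law of $y$ given $(D_{train},x)$ is obtained by marginalizing $f$ against $\Pi(f\mid D_{train})$. This is exactly $\pi(y\mid x,D_{train})=\int p(y\mid x,f)\,d\Pi(f\mid D_{train})$. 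Here one uses that $x$ is drawn independently of $f$, so conditioning on $x$ does not change the posterior over $f$. Hence
\begin{equation*}
\mathcal{L}(\theta)=\mathbb{E}_{D,x}\Big[-\int \pi(y\mid x,D)\log q_\theta(y\mid x,D)\,dy\Big].
\end{equation*}

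\textbf{Step 2: cross-entropy decomposition.} Add and subtract $\log\pi$ inside the integral. This gives
\begin{equation*}
\mathcal{L}(\theta)=\mathbb{E}_{D,x}\big[H(\pi(\cdot\mid x,D))\big]+\mathbb{E}_{D,x}\big[\mathrm{KL}(\pi(\cdot\mid x,D)\,\|\,q_\theta(\cdot\mid x,D))\big].
\end{equation*}
The entropy term does not depend on $\theta$. Therefore $\theta^*$ minimizes $\mathcal{L}$ if and only if it minimizes the expected KL, which proves the first claim.

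\textbf{Step 3: capacity case.} If some $\theta$ gives $q_\theta=\pi$ almost everywhere, the expected KL attains its infimum $0$. Any minimizer $\theta^*$ must then also have expected KL equal to $0$. Since KL is nonnegative, it vanishes for almost every $(D,x)$. By Gibbs' inequality, $q_{\theta^*}(\cdot\mid x,D)=\pi(\cdot\mid x,D)$ almost surely.

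\textbf{Main obstacle.} The main difficulty is not algebraic but one of integrability and measurability. Splitting the cross-entropy into entropy plus KL requires $\mathbb{E}[H(\pi)]$ to be finite; otherwise one ends up subtracting infinities. I would first assume that $\pi$ has a density with finite differential entropy, which holds for example under bounded-variance Gaussian noise. As a fallback, I would phrase the equivalence directly as ``$\mathcal{L}(\theta)-\mathcal{L}(\theta')$ equals the difference of the expected KLs,'' which only needs one of the two KL terms to be finite. A regular conditional distribution for Step 1 exists because $y$ is real-valued, so that point is routine.
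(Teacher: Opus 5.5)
Your proof is correct and matches the standard argument. The paper gives no proof of its own and simply cites \citet{muller2021transformers}, whose argument is the same as yours: use the tower property to identify the conditional law of $y$ given $(D_{train},x)$ as the posterior predictive, split cross-entropy into entropy plus KL, and apply Gibbs' inequality for the capacity clause. The two hypotheses you make explicit are left implicit in the paper's statement: that $x$ is drawn independently of $f$, and that $\pi$ has finite expected entropy. The finite-entropy condition is genuinely needed, since otherwise the objective can be identically infinite, or the expected KL can be infinite for every $\theta$, and the argmin equivalence becomes vacuous.
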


\paragraph{Regularity Conditions \citep{nagler2023statistical}:} While $q_\theta \approx \pi$ is guaranteed by the training objective, the utility of this approximation depends on $\pi$ itself, concentrating around the true data-generating function $p_0$. \citet{nagler2023statistical} establishes that $\pi$ consistently converges to $p_0$ as $N \to \infty$, provided:
\begin{enumerate}
    \item Compactness: The input domain $\mathcal{X}$ is compact and functions $f$ are uniformly bounded. %
    \item Identifiability: The mapping from functions to data distributions is injective.%
    \item Prior Support: The prior $\Pi$ assigns positive mass to KL neighborhoods of the true function $p_0$. This ensures that even if the specific physics of a multi-fidelity problem were not seen during pre-training, the model can adapt provided the true function lies within the support of the implicit prior.%
\end{enumerate}

\paragraph{Application to FIRE:} In the context of FIRE, we define $\Pi$ as the meta-prior induced by the TFM's pre-training scheme (e.g., structural causal models in TabPFN). By virtue of the KL-optimality established above, the TFM outputs conditioned on the low-fidelity data $\mathcal{D}_{LF}$ are not heuristic scores, but statistical estimators of the true posterior moments:$$\mu_\theta(x) \approx \mathbb{E}_{\pi}[y|x, \mathcal{D}_{LF}], \quad \sigma^2_\theta(x) \approx \text{Var}_{\pi}[y|x, \mathcal{D}_{LF}].$$Consequently, the quantiles $\{q^{(\tau)}_\theta\}$ provide a non-parametric embedding of the posterior shape, validating their use as features for distribution-conditioned residual learning.

\subsection{Proof of Theorem \ref{Theorem:Bayes Risk Monotonicity} (Bayes Risk Monotonicity)}\label{appendix:theory:Bayes_Risk_proof}

\begin{proof}\label{Appendix:Proof_Theorem_4.2}
Let $\mathcal{F}_{\text{mean}} = \sigma(\mathcal{Z}_{\text{mean}})$ and $\mathcal{F}_{\text{aug}} = \sigma(\mathcal{Z}_{\text{aug}})$ denote the $\sigma$-algebras generated by the respective conditioning sets. Since $\mathcal{Z}_{\text{mean}} \subset \mathcal{Z}_{\text{aug}}$, we have the filtration $\mathcal{F}_{\text{mean}} \subseteq \mathcal{F}_{\text{aug}}$.The Bayes risk for the squared error loss with respect to a conditioning set $\mathcal{Z}$ is given by the expected conditional variance:$$\mathcal{R}(\mathcal{Z}) = \mathbb{E}\left[ (r - \mathbb{E}[r \mid \mathcal{Z}])^2 \right] = \mathbb{E}\left[ \text{Var}(r \mid \mathcal{Z}) \right].$$
We apply the Law of Total Variance \citep{casella2024statistical} conditioning on the coarser $\sigma$-algebra $\mathcal{F}_{\text{mean}}$:$$\text{Var}(r \mid \mathcal{F}_{\text{mean}}) = \mathbb{E}\left[ \text{Var}(r \mid \mathcal{F}_{\text{aug}}) \mid \mathcal{F}_{\text{mean}} \right] + \text{Var}\left( \mathbb{E}[r \mid \mathcal{F}_{\text{aug}}] \mid \mathcal{F}_{\text{mean}} \right).$$
Taking the expectation of both sides with respect to the joint distribution $p(x, r)$:$$\mathbb{E}\left[ \text{Var}(r \mid \mathcal{F}_{\text{mean}}) \right] = \mathbb{E}\left[ \text{Var}(r \mid \mathcal{F}_{\text{aug}}) \right] + \mathbb{E}\left[ \text{Var}\left( \mathbb{E}[r \mid \mathcal{F}_{\text{aug}}] \mid \mathcal{F}_{\text{mean}} \right) \right].$$
Substituting the definition of Bayes risk $\mathcal{R}$:$$\mathcal{R}(\mathcal{Z}_{\text{mean}}) = \mathcal{R}(\mathcal{Z}_{\text{aug}}) + \mathbb{E}\left[ \text{Var}\left( \mathbb{E}[r \mid \mathcal{Z}_{\text{aug}}] \mid \mathcal{Z}_{\text{mean}} \right) \right].$$
Since variance is non-negative, the second term on the RHS is non-negative. Therefore:$$\mathcal{R}(\mathcal{Z}_{\text{mean}}) \geq \mathcal{R}(\mathcal{Z}_{\text{aug}}).$$
\paragraph{Equality Condition:} Equality holds if and only if $\mathbb{E}\left[ \text{Var}\left( \mathbb{E}[r \mid \mathcal{Z}_{\text{aug}}] \mid \mathcal{Z}_{\text{mean}} \right) \right] = 0$. Since variance is non-negative, this requires $\text{Var}\left( \mathbb{E}[r \mid \mathcal{Z}_{\text{aug}}] \mid \mathcal{Z}_{\text{mean}} \right) = 0$ almost surely. This implies that the random variable $\mathbb{E}[r \mid \mathcal{Z}_{\text{aug}}]$ is constant with respect to the finer information given the coarser information, i.e.,$$\mathbb{E}[r \mid \mathcal{Z}_{\text{aug}}] = \mathbb{E}[ \mathbb{E}[r \mid \mathcal{Z}_{\text{aug}}] \mid \mathcal{Z}_{\text{mean}}] = \mathbb{E}[r \mid \mathcal{Z}_{\text{mean}}] \quad \text{a.s.},$$where the second equality follows from the Tower Property. Thus, augmentation strictly reduces risk unless the residual is conditionally independent of the distributional statistics given the mean. 
\end{proof}

\subsection{Variance Decomposition for Two-Stage Uncertainty Quantification}\label{appendix:theory:Variance Decomposition for Two-Stage Uncertainty Quantification}

We provide the justification for the additive variance formula $\hat{\sigma}_{*}^2 = \sigma_{\theta}^2(x_{*}) + \sigma_{\phi}^2(z_{aug,*})$ used in Section \ref{Section3.4:Prediction and Uncertainty Quantification}. This formulation draws inspiration from the additive independence structure employed from the ResGP~\citep{xing2021residual}. We use this independence as a practical approximation, and we acknowledge that correlations may exist.

\begin{proposition}[Additive Variance]
Let the high-fidelity response be modeled as the sum of a low-fidelity latent variable and a residual latent variable:$$y^{(T)}(x) = Y_{LF}(x) + R(x)$$
where $Y_{LF}(x) \sim q_\theta(\cdot | x, \mathcal{D}_{LF})$ is the predictive distribution of the base TFM, and $R(x) \sim q_\phi(\cdot | z_{aug}, \mathcal{D}_{aug})$ is the predictive distribution of the residual TFM.
\end{proposition}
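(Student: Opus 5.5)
The claim to establish is that, under the working assumption stated just before the proposition (conditional independence of $Y_{LF}(x)$ and $R(x)$ given $x$ and the two context sets), the predictive distribution of $y^{(T)}(x_*)$ has mean $\mu_\theta(x_*)+\mu_\phi(z_{aug}^*)$ and variance $\sigma_\theta^2(x_*)+\sigma_\phi^2(z_{aug}^*)$. These are exactly the outputs $\hat y_*$ and $\hat\sigma_*^2$ of the prediction step in Section~\ref{Section3.4:Prediction and Uncertainty Quantification}.

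\textbf{Setup.} I first make the probabilistic model explicit. Fix a test input $x_*$ and condition throughout on $\mathcal{D}_{LF}$ and $\mathcal{D}_{aug}$. Note that $z_{aug}^*$ is a deterministic function of $x_*$ and $\mathcal{D}_{LF}$: it is built from the summaries $\mu_\theta$, $\sigma_\theta^2$ and $q_\theta^{(\tau)}$, not from a draw of $Y_{LF}$. So $z_{aug}^*$ is fixed once we condition, and $R(x_*)\sim q_\phi(\cdot\mid z_{aug}^*,\mathcal{D}_{aug})$ is a well-defined random variable with mean $\mu_\phi(z_{aug}^*)$ and variance $\sigma_\phi^2(z_{aug}^*)$. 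Likewise $Y_{LF}(x_*)$ has mean $\mu_\theta(x_*)$ and variance $\sigma_\theta^2(x_*)$. I also assume both second moments are finite, which holds for TabPFN's bounded bar-distribution outputs.

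\textbf{Mean.} By linearity of expectation,
$\mathbb{E}[y^{(T)}(x_*)]=\mu_\theta(x_*)+\mu_\phi(z_{aug}^*)$.
This step needs no independence.

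\textbf{Variance.} I expand
$\operatorname{Var}(Y_{LF}+R)=\operatorname{Var}(Y_{LF})+\operatorname{Var}(R)+2\operatorname{Cov}(Y_{LF},R)$.
Under the independence assumption the covariance vanishes, which gives the additive formula. I would add a remark that the residual $r=y_{HF}-\mu_\theta(x_{HF})$ is centred at the LF mean, not at a sampled $Y_{LF}$. This supports treating $R$ as stochastically decoupled from the LF sampling noise, in the spirit of ResGP's additive independence structure.

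\textbf{Main obstacle and plan for it.} The hard part is the covariance term: independence is an approximation rather than something derivable from Assumption~\ref{Assumption:TFM Posterior Approximation Quality}. I would handle it honestly in two steps. First, keep the exact identity
$\operatorname{Var}(y^{(T)})=\sigma_\theta^2+\sigma_\phi^2+2\operatorname{Cov}(Y_{LF},R)$.
Second, bound the error via Cauchy--Schwarz: $|2\operatorname{Cov}|\le 2\rho\,\sigma_\theta\sigma_\phi$, where $\rho$ is the correlation between $Y_{LF}$ and $R$. This shows the additive formula errs by at most a factor governed by $|\rho|$. I would close by noting that the additive formula is conservative when $\rho\le 0$, the typical case when the residual model corrects LF over- or under-shoots.
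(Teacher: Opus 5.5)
Your core argument is the same as the paper's: expand $\operatorname{Var}(Y_{LF}+R)=\sigma_\theta^2+\sigma_\phi^2+2\operatorname{Cov}(Y_{LF},R)$ and drop the cross term by assumption, though the paper only assumes $\operatorname{Cov}(Y_{LF},R\mid x,z_{aug})=0$, not full conditional independence. One slip in your optional error remark: $2\operatorname{Cov}(Y_{LF},R)=2\rho\,\sigma_\theta\sigma_\phi$ holds exactly by the definition of $\rho$, so the statement should read $|2\operatorname{Cov}(Y_{LF},R)|=2|\rho|\,\sigma_\theta\sigma_\phi\le 2\sigma_\theta\sigma_\phi$, where Cauchy--Schwarz only supplies $|\rho|\le 1$; your version, with $\rho$ in place of $|\rho|$, fails for $\rho<0$.
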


\begin{assumption}[Conditional Independence]
    We assume that given the input $x$ and the augmented features $z_{aug}$, the prediction error of the low-fidelity model is uncorrelated with the prediction error of the residual model. That is, $Cov(Y_{LF}, R | x, z_{aug}) = 0$.
\end{assumption}

\begin{proof}
The variance of the sum of two uncorrelated random variables is the sum of their variances. By definition of the predictive statistics in Section \ref{Section3.2:Low-Fidelity Inference} and \ref{Section3.4:Prediction and Uncertainty Quantification}: 
$$Var[Y_{LF} | x] = \sigma_\theta^2(x),\quad Var[R | z_{aug}] = \sigma_\phi^2(z_{aug})$$
Therefore, the total predictive variance for the high-fidelity query is:
$$\begin{aligned}
Var[y^{(T)} | x, z_{aug}] &= Var[Y_{LF} + R | x, z_{aug}] \\
&= Var[Y_{LF} | x] + Var[R | z_{aug}] + 2 Cov(Y_{LF}, R | x, z_{aug}) \\
&= \sigma_\theta^2(x) + \sigma_\phi^2(z_{aug})
\end{aligned}$$
\end{proof}
This validates the additive uncertainty quantification strategy in FIRE, ensuring that the final uncertainty estimate accounts for both the uncertainty in the low-fidelity approximation and the uncertainty in the residual correction.

\subsection{Information-Theoretic Bottleneck of Mean-Only Conditioning}
We formalize the claim that mean-only conditioning discards information relevant to the residual.

\begin{proposition}[Data Processing Inequality for Residuals]
Let $r$ be the high-fidelity residual, $F_{LF}$ the full predictive distribution of the low-fidelity model, and $\mu_{LF}$ its mean. Since $\mu_{LF}$ is a deterministic function of $F_{LF}$, the Data Processing Inequality implies:
$$I(r; F_{LF}) \ge I(r; \mu_{LF})$$
Equality holds if and only if $r \perp F_{LF} \mid \mu_{LF}$.
\end{proposition}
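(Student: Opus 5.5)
The plan is to use the chain rule for mutual information applied to the pair $(F_{LF}, \mu_{LF})$. Because $\mu_{LF} = m(F_{LF})$ for the deterministic map $m(F) = \int y\, dF(y)$, the joint random element $(F_{LF}, \mu_{LF})$ carries exactly the same information as $F_{LF}$. Concretely, the $\sigma$-algebra generated by the pair equals $\sigma(F_{LF})$, so $I(r; F_{LF}, \mu_{LF}) = I(r; F_{LF})$. We work with $F_{LF}$ as a random element of a standard Borel space, namely probability measures on $\mathbb{R}$ with the weak topology. By Lemma~\ref{lemma: Quantile Embedding}, $F_{LF}$ can equivalently be identified with its quantile function, so measurability of $m$ is not an issue; $m$ can be written as a limit of integrals of bounded continuous functions, or as $\int_0^1 q(\tau)\,d\tau$.

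Next, expand the same quantity in the other order using the chain rule:
\begin{align*}
I(r; F_{LF}, \mu_{LF}) = I(r; \mu_{LF}) + I(r; F_{LF} \mid \mu_{LF}).
\end{align*}
Equating the two expressions gives $I(r;F_{LF}) = I(r;\mu_{LF}) + I(r;F_{LF}\mid \mu_{LF})$. Conditional mutual information is nonnegative, because it is an expected KL divergence between the conditional joint law and the product of the conditional marginals. The inequality $I(r;F_{LF}) \ge I(r;\mu_{LF})$ follows immediately. This is the usual data-processing argument for the Markov chain $r \to F_{LF} \to \mu_{LF}$, which holds trivially since $\mu_{LF}$ is a function of $F_{LF}$.

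For the equality case, equality holds iff $I(r;F_{LF}\mid\mu_{LF}) = 0$. That conditional mutual information is $\mathbb{E}_{\mu_{LF}}\big[\mathrm{KL}(P_{r,F\mid\mu}\,\|\,P_{r\mid\mu}\otimes P_{F\mid\mu})\big]$. It vanishes iff the KL divergence is zero for almost every value of $\mu_{LF}$, i.e.\ iff the regular conditional laws factorize almost surely. That is precisely $r \perp F_{LF} \mid \mu_{LF}$. The converse direction is immediate by substitution.

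The main obstacle I expect is technical rather than conceptual. Two points need care. First, the chain rule for general (non-discrete) random elements requires the Gelfand--Yaglom--Pinsker definition of mutual information and the existence of regular conditional distributions; this is why we need the standard Borel setting. Second, we need finiteness, or at least we must handle the case $I(r;\mu_{LF}) = \infty$, so that the subtraction in the equality argument is legitimate. I would state the result assuming $I(r;F_{LF}) < \infty$, and note that if it is infinite the inequality holds trivially while the equality characterization should be read via the conditional-MI identity directly.
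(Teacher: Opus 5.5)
Your proposal is correct and matches the paper's reasoning. The paper gives no proof beyond noting that $\mu_{LF}$ is a deterministic function of $F_{LF}$ and invoking the data processing inequality, and your chain-rule identity $I(r;F_{LF}) = I(r;\mu_{LF}) + I(r;F_{LF}\mid\mu_{LF})$ is the standard proof of that inequality, which also supplies the equality clause the paper asserts without argument.

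Keep your finiteness caveat, because it corrects the statement itself. If $I(r;\mu_{LF})=\infty$, both sides are infinite even when $r \not\perp F_{LF}\mid\mu_{LF}$, so the paper's \emph{if and only if} holds only under the hypothesis $I(r;\mu_{LF})<\infty$. This case can easily arise in the paper's noiseless setting, where $r$ and $\mu_{LF}$ are both deterministic functions of the input.
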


The inequality above provides an information-theoretic justification for why mean-only surrogates are insufficient in multi-fidelity learning. This result aligns with findings in Bayesian model averaging, specifically the work of \citet{yao2018using}, who demonstrate that in ``M-open" settings (where the true model is not in the candidate set), combining full predictive distributions yields asymptotically superior performance compared to stacking point estimates (means).

In our context, the residual process $r$ often exhibits heteroscedasticity or non-Gaussianity. The variance and shape information required to model these features is contained in $F_{LF}$ but lost in $\mu_{LF}$. Consequently, $I(r; F_{LF} \mid \mu_{LF}) > 0$, implying that mean-only conditioning creates an information bottleneck that limits the residual model's ability to correct the low-fidelity error. FIRE bypasses this bottleneck by explicitly conditioning on the variance and quantiles, effectively reconstructing a sufficient statistic for $F_{LF}$.

\subsection{Variance-Residual Coupling and Mean-Only Misspecification}\label{Appendix:Theory:Variance-Residual Coupling and Mean-Only Misspecification}

We formalize the dependency between low-fidelity uncertainty and high-fidelity residuals, drawing on the framework of heteroscedastic Gaussian processes~\citep{lazaro2011variational}.

\begin{proposition}[Variance-Residual Coupling]
Let the cross-fidelity residual be $\delta(x) = y^{(T)}(x) - \mu_\theta(x)$. We posit that the residual process is heteroscedastic, with a variance structure dependent on the epistemic uncertainty of the low-fidelity model:
$$Var[\delta(x) \mid x] = g(\sigma_\theta^2(x))$$
for some monotone increasing function $g: \mathbb{R}^+ \to \mathbb{R}^+$.
\end{proposition}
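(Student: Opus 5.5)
Since the statement \emph{posits} the coupling, I read the task as deriving it from the modelling assumptions already in place: the autoregressive structure \eqref{eq:AR_formulation}, Assumption~\ref{Assumption:TFM Posterior Approximation Quality}, and Assumption~\ref{assumption:Heteroscedastic Discrepancy}. The goal is an explicit monotone increasing $g$ with $\mathrm{Var}[\delta(x)\mid x]=g(\sigma_\theta^2(x))$. Throughout, variances conditional on $x$ are taken under the posterior law $\pi(\cdot\mid x,\mathcal{D}_{\text{LF}})$, with $\mathcal{D}_{\text{LF}}$ held fixed.

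The first step is to decompose the residual using \eqref{eq:AR_formulation} with $t=T$:
\begin{align*}
\delta(x) = y^{(T)}(x)-\mu_\theta(x) = \rho_T\bigl(y^{(T-1)}(x)-\mu_\theta(x)\bigr) + (\rho_T-1)\mu_\theta(x) + \delta_T(x).
\end{align*}
Given $x$, the term $(\rho_T-1)\mu_\theta(x)$ is deterministic and does not affect the variance.

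The second step uses Assumption~\ref{Assumption:TFM Posterior Approximation Quality}, in the exact form $q_\theta=\pi$ from Theorem~\ref{Appendix:theorem:PFN Posterior Approximation} under sufficient capacity. This gives $\mathrm{Var}[y^{(T-1)}(x)-\mu_\theta(x)\mid x]=\sigma_\theta^2(x)$. For the second stochastic term, Assumption~\ref{assumption:Heteroscedastic Discrepancy} gives $\mathrm{Var}[\delta_T(x)\mid x]=h(\sigma_\theta^2(x))$ for some non-constant monotone $h$, which I take to be nondecreasing, matching the ``larger residuals where uncertain'' reading.

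Expanding the variance of the sum then yields
\begin{align*}
\mathrm{Var}[\delta(x)\mid x] = \rho_T^2\,\sigma_\theta^2(x) + h(\sigma_\theta^2(x)) + 2\rho_T\,\mathrm{Cov}\bigl(y^{(T-1)}-\mu_\theta,\ \delta_T \mid x\bigr).
\end{align*}
If the cross term vanishes, I set $g(s)=\rho_T^2 s + h(s)$. This is a sum of a strictly increasing and a nondecreasing function, so it is strictly increasing and maps $\mathbb{R}^+$ to $\mathbb{R}^+$, which is the claim.

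The main obstacle is that covariance term. The plan is to justify that it vanishes via the standard Kennedy--O'Hagan independence of $\delta_T$ from $f^{(T-1)}$, the same independence structure as the Conditional Independence assumption in Appendix~\ref{appendix:theory:Variance Decomposition for Two-Stage Uncertainty Quantification}.

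If that independence is not granted, I would fall back on Cauchy--Schwarz:
\begin{align*}
\bigl(|\rho_T|\sigma_\theta - \sqrt{h}\bigr)^2 \le \mathrm{Var}[\delta\mid x] \le \bigl(|\rho_T|\sigma_\theta + \sqrt{h}\bigr)^2 .
\end{align*}
The variance is therefore still a function of $\sigma_\theta^2$ alone provided the correlation coefficient is itself a function of $\sigma_\theta^2$. Monotonicity would then need an extra condition, for example a constant nonnegative correlation. I would state this condition explicitly rather than hide it.

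Finally, I would note that the equality in the statement is approximate to the extent that $q_\theta\approx\pi$ only in expected KL, so it holds exactly only in the idealized capacity limit.
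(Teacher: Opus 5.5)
The paper gives no proof of this statement. Despite the label, it is a modelling hypothesis: the appendix version of Assumption~\ref{assumption:Heteroscedastic Discrepancy}, with the residual measured from $\mu_\theta$ instead of $y^{(T-1)}$. It is posited without argument and used only as the premise of the Mean-Only Misspecification corollary. You are right that nothing needs proving in the paper's sense. The problem is that your derivation from \eqref{eq:AR_formulation} and Assumptions~\ref{Assumption:TFM Posterior Approximation Quality} and~\ref{assumption:Heteroscedastic Discrepancy} breaks at the third step. Assumption~\ref{assumption:Heteroscedastic Discrepancy} constrains $D(x):=y^{(T)}(x)-y^{(T-1)}(x)$, not the autoregressive discrepancy $\delta_T$. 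Under \eqref{eq:AR_formulation}, $D=(\rho_T-1)\,y^{(T-1)}+\delta_T$, so the two agree only when $\rho_T=1$. The assumption therefore does not give you $\mathrm{Var}[\delta_T\mid x]=h(\sigma_\theta^2(x))$ with $h$ monotone.

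Let $s=\sigma_\theta^2(x)$ and let $\tilde g$ be the function in Assumption~\ref{assumption:Heteroscedastic Discrepancy}. Carrying the assumption through correctly with the Kennedy--O'Hagan independence you invoke gives $\mathrm{Var}[\delta_T\mid x]=\tilde g(s)-(\rho_T-1)^2 s$, and hence
\[
\mathrm{Var}[y^{(T)}(x)-\mu_\theta(x)\mid x]=\rho_T^2 s+\tilde g(s)-(\rho_T-1)^2 s=\tilde g(s)+(2\rho_T-1)\,s .
\]
This is a function of $s$, but not necessarily an increasing one. Take $\rho_T=0$ and $\tilde g(s)=s+e^{-s}$, which is non-constant and increasing, as you read the assumption. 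Then the variance equals $e^{-s}$, which is strictly decreasing.

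So the monotonicity in the statement needs an extra hypothesis. One option is $\rho_T\ge 1/2$, in which case $g(s)=\tilde g(s)+(2\rho_T-1)s$ works; $\rho_T=1$ is a special case. The other is a heteroscedasticity assumption placed directly on $\delta_T$, but that simply posits the coupling, which is what the paper does.

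Your second step also tacitly identifies $\mu_\theta,\sigma_\theta^2$ with the posterior moments of $y^{(T-1)}(x)$. For $T>2$ this is itself a modelling choice, since $f_\theta$ is fit on the pooled fidelities $1,\dots,T-1$.
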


\begin{corollary}[Mean-Only Misspecification]
Under the assumption above, a residual model $\delta_\phi$ conditioned solely on the mean $\mu_\theta(x)$ is misspecified.

\begin{proof}
Let the true residual distribution be $p(\delta|x)$. We analyze the entropy of the predictive distribution under the two conditioning sets: $\mathcal{Z}_{mean} = \{x, \mu_\theta\}$ and $\mathcal{Z}_{aug} = \{x, \mu_\theta, \sigma_\theta^2\}$.By the chain rule for conditional mutual information, we can decompose the entropy of the coarser conditioning set as:$$H(\delta \mid \mathcal{Z}_{mean}) = H(\delta \mid \mathcal{Z}_{mean}, \sigma_\theta^2) + I(\delta; \sigma_\theta^2 \mid \mathcal{Z}_{mean})$$Since $\mathcal{Z}_{aug} = \mathcal{Z}_{mean} \cup \{\sigma_\theta^2\}$, the first term is exactly $H(\delta \mid \mathcal{Z}_{aug})$. Rearranging terms yields:
$$H(\delta \mid \mathcal{Z}_{mean}) - H(\delta \mid \mathcal{Z}_{aug}) = I(\delta; \sigma_\theta^2 \mid \mathcal{Z}_{mean})$$
We now show that this mutual information term is strictly positive. Under Proposition A.9, the conditional variance is given by $Var[\delta \mid \mathcal{Z}_{aug}] = g(\sigma_\theta^2)$. In contrast, $Var[\delta \mid \mathcal{Z}_{mean}]$ integrates out $\sigma_\theta^2$ and thus cannot depend on it. Since $g$ is a non-constant function, we have:
$$Var[\delta \mid \mathcal{Z}_{aug}] \neq Var[\delta \mid \mathcal{Z}_{mean}]$$
This inequality in conditional moments implies that the random variable $\delta$ is not conditionally independent of $\sigma_\theta^2$ given $\mathcal{Z}_{mean}$ (i.e., $\delta \not\perp \sigma_\theta^2 \mid \mathcal{Z}_{mean}$). A fundamental property of mutual information is that $I(X; Y \mid Z) = 0$ if and only if $X \perp Y \mid Z$. Therefore:
$$I(\delta; \sigma_\theta^2 \mid \mathcal{Z}_{mean}) > 0$$
Consequently:
$$H(\delta \mid \mathcal{Z}_{mean}) > H(\delta \mid \mathcal{Z}_{aug})$$
\end{proof}
\end{corollary}
The mean-only model has strictly higher conditional entropy (uncertainty) regarding the residual than the augmented model. This increase in entropy represents an increase in the irreducible error of the probabilistic model, constituting misspecification.

This result parallels findings in heteroscedastic GP regression, where modeling input-dependent noise variance significantly improves predictive log-likelihood. In our setting, the TFM's predictive variance $\sigma_\theta^2(x)$ serves as a learned proxy for this local noise regime

\subsection{Quantile Embedding Consistency Theorem}

We provide the rigorous foundation for Lemma \ref{lemma: Quantile Embedding}, establishing that the quantile function acts as a valid distributional embedding. We rely on the duality between integrated distribution functions and integrated quantile functions established by \citet{gushchin2017integrated}.

\begin{theorem}[Quantile Function as Distributional Representation]\label{Appendix:theory:Quantile Function as Distributional Representation} 
Let $F: \mathbb{R} \to [0, 1]$ be a continuous CDF and $Q: [0, 1] \to \mathbb{R}$ be its associated quantile function defined as $Q(\tau) = \inf\{y \in \mathbb{R} : F(y) \ge \tau\}$.
    \begin{itemize}
        \item \textbf{Uniqueness (\citet{gushchin2017integrated}, Theorem 3):} The integrated quantile function $K_X(u) = \int_0^u Q(s)ds$ is the Fenchel transform of the integrated distribution function $J_X(x) = \int_0^x F(t)dt$. Consequently, $Q(\tau)$ uniquely determines the underlying probability measure.
        \item \textbf{Finite Approximation: }Let $Q_K = \{Q(\tau_1), \dots, Q(\tau_K)\}$ be a set of quantiles on a grid $\tau_k = k/(K+1)$. As $K \to \infty$, the empirical CDF $\hat{F}_K$ constructed from $Q_K$ converges uniformly to $F$:
        $$\sup_{y \in \mathbb{R}} | \hat{F}_K(y) - F(y) | \to 0.$$
    \end{itemize}
\end{theorem}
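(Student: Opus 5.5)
The plan is to prove both bullets from one elementary fact, the \emph{Galois inequality} linking $F$ and $Q$. For every $\tau \in (0,1)$ and every $y \in \mathbb{R}$,
\begin{equation*}
Q(\tau) \le y \iff \tau \le F(y).
\end{equation*}
I will derive it from the definition $Q(\tau) = \inf\{y : F(y) \ge \tau\}$. One uses that $F$ is nondecreasing and right-continuous, so the set $\{y : F(y)\ge\tau\}$ is a closed half-line $[Q(\tau),\infty)$. That set is nonempty and bounded below for $\tau\in(0,1)$, so $Q(\tau)$ is finite. This needs only right-continuity; continuity of $F$ is stronger than required.

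For the \textbf{uniqueness} bullet, the inequality gives the explicit inversion
\begin{equation*}
F(y) = \sup\{\tau\in(0,1) : Q(\tau)\le y\}.
\end{equation*}
Hence two CDFs with the same quantile function agree pointwise, and so they define the same probability measure. This is the direct route. I will then remark that it is consistent with the Fenchel-duality statement of \citet{gushchin2017integrated}, Theorem 3. That theorem identifies $K_X$ as the convex conjugate of $J_X$; since conjugation is an involution on closed convex functions, $K_X$ determines $J_X$. Differentiating $J_X$ then recovers $F$ from its right derivative. I will present the Galois argument as the proof and cite the duality only as the equivalent formulation the statement refers to.

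For the \textbf{finite approximation} bullet, the step I expect to need the most care is pinning down what ``the empirical CDF constructed from $Q_K$'' means, because the statement leaves it unspecified. I will commit to the step function
\begin{equation*}
\hat F_K(y) = \frac{1}{K+1}\,\#\{k\le K : Q(\tau_k)\le y\}, \qquad \tau_k = \frac{k}{K+1}.
\end{equation*}
By the Galois inequality, the count equals $\#\{k\le K : k \le (K+1)F(y)\}$, which is $\min(\lfloor (K+1)F(y)\rfloor, K)$. It follows that
\begin{equation*}
\bigl|\hat F_K(y)-F(y)\bigr| \le \frac{1}{K+1} \quad \text{for every } y\in\mathbb{R}.
\end{equation*}
Taking the supremum over $y$ gives uniform convergence at rate $1/(K+1)$.

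I will check the boundary cases $F(y)=0$ and $F(y)=1$ separately. When $F(y)=0$ the count is $0$; when $F(y)=1$ the count is $K$ and the error is exactly $1/(K+1)$.

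If the paper intends the normalisation $1/K$, the same computation still works: the bound becomes $\max\{\tfrac1K,\tfrac{1}{K+1}\}$, which also tends to zero. If it intends a linear interpolant between the knots $(Q(\tau_k),\tau_k)$, I would sandwich the interpolant between the step function above and that step function shifted by one index. This still gives a uniform error of order $1/K$.
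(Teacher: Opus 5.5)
Your proof is correct, and it takes a genuinely different route from the paper. The paper in fact gives no argument of its own. It delegates the first bullet to the Fenchel duality of \citet{gushchin2017integrated}, and it simply asserts the second without ever defining ``the empirical CDF constructed from $Q_K$''.

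You derive both bullets from the single relation $Q(\tau)\le y\iff\tau\le F(y)$ for $\tau\in(0,1)$. This buys several things:
\begin{itemize}
\item You need only monotonicity and right-continuity of $F$, with no continuity or moment assumption.
\item You get an explicit inversion formula. State the convention $\sup\emptyset=0$ for $y$ with $F(y)=0$, or write $F(y)=\lambda\{\tau\in(0,1):Q(\tau)\le y\}$ with $\lambda$ Lebesgue measure.
\item You make $\hat F_K$ precise and obtain the distribution-free rate $\sup_y|\hat F_K(y)-F(y)|\le 1/(K+1)$, which is stronger than the bare convergence claimed.
\end{itemize}
Restricting to $\tau\in(0,1)$ is also the right call, since under the given definition $Q(0)=-\infty$ and $Q(1)$ may be $+\infty$.

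It is prudent not to rest anything on the duality as quoted. With $J_X(x)=\int_0^x F(t)\,dt$, one finds $J_X^*(u)=\int_{F(0)}^u Q(s)\,ds$. When $\mathbb{E}[X^-]<\infty$ this equals $K_X(u)+\mathbb{E}[X^-]$; for example, the constant is $1/4$ for the uniform law on $[-1,1]$. When $\mathbb{E}[X^-]=\infty$, one has $K_X(u)=-\infty$ for all $u\in(0,1)$. So the first sentence of that bullet holds only after renormalisation, although the additive constant is harmless for recovering $F$ by right-differentiation as in your remark.

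What the duality route offers in exchange is structure. It places the quantile embedding inside convex analysis, which is the natural setting for continuity or stability of the embedding (e.g.\ under weak convergence). Your pointwise argument delivers injectivity and the approximation rate, but not those properties directly.
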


\paragraph{Application to FIRE:}
In FIRE, the TFM outputs quantiles at levels $\mathcal{Q}=\{0.1, \dots, 0.9\}$ ($K=9$). By Theorem \ref{Appendix:theory:Quantile Function as Distributional Representation} , these quantiles provide a coarse but faithful representation of the full predictive CDF $F_\theta(\cdot|x, \mathcal{D}_{LF})$. This embedding captures distributional features (skewness, kurtosis) that are orthogonal to the mean and variance, providing the residual model with the necessary context to correct higher-order discrepancies.

\subsection{Proof of Proposition 4.2 (Quantile Risk Reduction)}\label{appendix:theory:Quantile_Risk_Reduction} 

We show that conditioning on quantiles strictly reduces the Bayes risk (Mean Squared Error) when residuals exhibit non-Gaussian structure.

\begin{proof}
    Let $r = y^{(T)} - \mu_\theta(x)$ be the residual. For any conditioning set $\mathcal{Z}$, the Bayes-optimal predictor under squared error loss is the conditional expectation $\hat{r}_{\mathcal{Z}} = \mathbb{E}[r | \mathcal{Z}]$, and the associated Bayes risk is the expected conditional variance:
    $$R(\mathcal{Z}) = \mathbb{E}[(r - \mathbb{E}[r|\mathcal{Z}])^2] = \mathbb{E}[\text{Var}(r|\mathcal{Z})].$$
    We compare the risk under $\mathcal{Z}_{mv} = \{x, \mu_\theta, \sigma^2_\theta\}$ and $\mathcal{Z}_{quant} = \mathcal{Z}_{mv} \cup \{q_\theta^{(\tau)}\}_{\tau \in \mathcal{Q}}$. Since $\mathcal{Z}_{mv} \subset \mathcal{Z}_{quant}$, the Law of Total Variance applied to $r$ yields:
    $$\text{Var}(r|\mathcal{Z}_{mv}) = \mathbb{E}[\text{Var}(r|\mathcal{Z}_{quant}) \mid \mathcal{Z}_{mv}] + \text{Var}(\mathbb{E}[r|\mathcal{Z}_{quant}] \mid \mathcal{Z}_{mv}).$$
    Taking expectations over $\mathcal{Z}_{mv}$ gives the risk decomposition:
    $$R(\mathcal{Z}_{mv}) = R(\mathcal{Z}_{quant}) + \mathbb{E}[\text{Var}(\mathbb{E}[r|\mathcal{Z}_{quant}] \mid \mathcal{Z}_{mv})].$$
    Since variance is non-negative, $R(\mathcal{Z}_{mv}) \ge R(\mathcal{Z}_{quant})$. Strict inequality holds if and only if $\text{Var}(\mathbb{E}[r|\mathcal{Z}_{quant}] \mid \mathcal{Z}_{mv}) > 0$ with positive probability.
    
    Under Assumption \ref{assumption:Non-Gaussian Residuals} (Non-Gaussianity), the residual distribution is not fully characterized by its first two moments. Consequently, the conditional expectation of $r$ given the quantiles, $\mathbb{E}[r|\mathcal{Z}_{quant}]$, is not almost surely constant with respect to $\mathbb{E}[r|\mathcal{Z}_{mv}]$. For example, if the residual distribution is skewed, the median $q^{(0.5)}$ (contained in $\mathcal{Z}_{quant}$) will diverge from the mean (contained in $\mathcal{Z}_{mv}$), implying that the quantiles contain independent information about the residual's central tendency. Thus, the variance term is strictly positive, and $R(\mathcal{Z}_{quant}) < R(\mathcal{Z}_{mv})$.
\end{proof}

\subsection{Efficiency of Quantile Conditioning}
The choice of the quantile set $\mathcal{Q}$ in FIRE is motivated by efficiency results from Composite Quantile Regression (CQR). \citet{zou2008composite} prove that estimating a regression function using a grid of quantiles achieves an Asymptotic Relative Efficiency (ARE) of at least 70\% compared to Ordinary Least Squares (OLS), regardless of the error distribution. Importantly, for heavy-tailed distributions (e.g., Double Exponential or $t$-distributions), CQR significantly outperforms OLS (ARE $>$ 100\%).

In the context of FIRE, this theory suggests that augmenting the residual model with quantiles $\{q_\theta^{(\tau_k)}\}_{k=1}^K$ provides robustness against misspecification. Even if the low-fidelity model captures the mean trend correctly (where OLS/Mean-only conditioning would suffice), the quantiles allow the residual model to efficiently correct for heavy-tailed or asymmetric errors that characterize the ``small-data" multi-fidelity regime. The choice of $K=9$ (deciles) is supported by \citet{zou2008composite}, who show that efficiency gains saturate rapidly as $K$ increases beyond 9.

\subsection{Excess Risk of Mean-Only Conditioning}

We quantify the theoretical penalty incurred by ignoring distributional information (variance and quantiles) in the residual model. We model this as the irreducible approximation error arising from the information bottleneck.

\textit{Setup. }Let $z_{full} = [x, \mu_\theta(x), \sigma^2_\theta(x), q_\theta(x)]$ be the full augmented feature vector used by FIRE, and let $z_{mean} = [x, \mu_\theta(x)]$ be the restricted feature vector used by standard baselines. Let the optimal residual correction functions be $\delta^*(z_{full}) = \mathbb{E}[r | z_{full}]$ and $\hat{\delta}_{mean}(z_{mean}) = \mathbb{E}[r | z_{mean}]$.

\begin{theorem}[Excess Risk Decomposition]
Let $\mathcal{R}(\hat{f}) = \mathbb{E}[(r - \hat{f}(z))^2]$ be the risk (mean squared error) under the $L_2$ loss. The excess risk incurred by the mean-only model compared to the augmented model is exactly the expected variance of the residual explained by the distributional features:
$$\mathcal{R}(\hat{\delta}_{mean}) - \mathcal{R}(\delta^*) = \mathbb{E}_x \left[ \text{Var}\left( \delta^*(z_{full}) \mid z_{mean} \right) \right]$$
\end{theorem}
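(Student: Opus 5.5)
The plan is to write the mean-only risk as an orthogonal decomposition around the augmented conditional mean $\delta^*$. Since $z_{mean}$ is a sub-vector of $z_{full}$, we have $\sigma(z_{mean}) \subseteq \sigma(z_{full})$. This is the same filtration used in the proof of Theorem~\ref{Theorem:Bayes Risk Monotonicity}. We assume $\mathbb{E}[r^2] < \infty$, so that all conditional expectations live in $L^2$ and are orthogonal projections.

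The first step is to expand
$$\mathcal{R}(\hat{\delta}_{mean}) = \mathbb{E}\big[(r - \delta^* + \delta^* - \hat{\delta}_{mean})^2\big].$$
This splits into $\mathcal{R}(\delta^*)$, the term $\mathbb{E}[(\delta^* - \hat{\delta}_{mean})^2]$, and the cross term $2\,\mathbb{E}[(r-\delta^*)(\delta^*-\hat{\delta}_{mean})]$.

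The second step is to kill the cross term. The factor $\delta^* - \hat{\delta}_{mean}$ is $\sigma(z_{full})$-measurable, because $\hat{\delta}_{mean}$ is a function of $z_{mean}$ and hence of $z_{full}$. Conditioning on $z_{full}$ and pulling out this measurable factor leaves $\mathbb{E}[r - \delta^* \mid z_{full}] = 0$, so the cross term vanishes.

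The third step identifies the remaining term as a conditional variance. By the tower property, $\mathbb{E}[\delta^* \mid z_{mean}] = \mathbb{E}[\mathbb{E}[r\mid z_{full}]\mid z_{mean}] = \hat{\delta}_{mean}$. Hence
$$\mathbb{E}\big[(\delta^* - \hat{\delta}_{mean})^2 \mid z_{mean}\big] = \mathrm{Var}(\delta^*(z_{full}) \mid z_{mean}).$$
Taking outer expectations gives
$$\mathcal{R}(\hat{\delta}_{mean}) - \mathcal{R}(\delta^*) = \mathbb{E}\big[\mathrm{Var}(\delta^* \mid z_{mean})\big].$$

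The main obstacle is interpretive rather than computational. The statement writes the outer expectation as $\mathbb{E}_x$, but the conditional variance is a function of $z_{mean} = (x, \mu_\theta(x))$. I would note that $\mu_\theta$ is a deterministic function of $x$ once $\mathcal{D}_{LF}$ is fixed, so $\sigma(z_{mean}) = \sigma(x)$ and the expectation over $z_{mean}$ coincides with $\mathbb{E}_x$.

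A subtlety arises if $z_{full}$ is itself a deterministic function of $x$ under a fixed $\mathcal{D}_{LF}$. Then $\delta^*$ would be $\sigma(x)$-measurable and the excess risk would be zero. The identity still holds in that case, but its content requires treating $\mathcal{D}_{LF}$ (or the TFM outputs) as random. I would state this explicitly so that the law of total variance applies nontrivially, and then remark that the result sharpens the decomposition already obtained in the proof of Theorem~\ref{Theorem:Bayes Risk Monotonicity}.
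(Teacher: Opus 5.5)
Your proof is correct and follows the paper's argument: the paper likewise writes $\mathcal{R}(\hat{\delta}_{mean}) = \mathcal{R}(\delta^*) + \mathbb{E}[(\delta^* - \hat{\delta}_{mean})^2]$ via the Pythagorean (orthogonal-projection) identity, then uses the tower property $\mathbb{E}[\delta^* \mid z_{mean}] = \hat{\delta}_{mean}$ to identify the second term as $\mathbb{E}[\mathrm{Var}(\delta^* \mid z_{mean})]$. You go further by checking that the cross term vanishes and stating the $L^2$ assumption, both of which the paper leaves implicit, and your observation that the excess risk is identically zero when $z_{full}$ is a deterministic function of $x$ (for fixed $\mathcal{D}_{LF}$) is a fair caveat on the paper's follow-up positivity claim, not a gap in your proof.
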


\begin{proof}
By the Law of Total Expectation, $\hat{\delta}_{mean}$ is the orthogonal projection of $\delta^*$ onto the coarser $\sigma$-algebra generated by $z_{mean}$. The Pythagorean theorem of statistics (variance decomposition) \citep{casella2024statistical} gives:
$$\mathbb{E}[(r - \hat{\delta}_{mean})^2] = \mathbb{E}[(r - \delta^*)^2] + \mathbb{E}[(\delta^* - \hat{\delta}_{mean})^2]$$
The first term is $\mathcal{R}(\delta^*)$. The second term is the irreducible approximation error:
$$\mathbb{E}[(\delta^*(z_{full}) - \mathbb{E}[\delta^*(z_{full}) | z_{mean}])^2] = \mathbb{E}[\text{Var}(\delta^* | z_{mean})]$$
This term is strictly positive whenever $\delta^*$ depends on $\sigma^2_\theta$ or $q_\theta$ (i.e., when the residual distribution is heteroscedastic or non-Gaussian).
\end{proof}

\begin{corollary}[Heterogeneity Penalty]
Under the assumption that $\delta^*$ is $L$-Lipschitz with respect to the quantile vector $q_\theta$ (i.e., the residual correction is sensitive to distributional shape), deviations in $q_\theta$ that are not explained by the mean $\mu_\theta$ induce proportional squared error in the mean-only baseline.
\begin{enumerate}
    \item Heteroscedasticity: In regions where $\sigma_\theta^2$ varies while $\mu_\theta$ is constant, $\hat{\delta}_{mean}$ is constant while $\delta^*$ varies, forcing $\text{Var}(\delta^* | z_{mean}) > 0$.
    \item Skewness: If the distribution is skewed, the quantiles contain shape information orthogonal to the mean, further increasing the excess risk.
\end{enumerate}
\end{corollary}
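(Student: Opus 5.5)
The plan is to derive both items directly from the Excess Risk Decomposition theorem. That theorem identifies the excess risk of the mean-only model as $\mathbb{E}[\text{Var}(\delta^*(z_{full}) \mid z_{mean})]$. The corollary then reduces to understanding how $\text{Var}(\delta^* \mid z_{mean})$ relates to the fluctuation of the distributional features $(\sigma^2_\theta, q_\theta)$ that $z_{mean}$ does not pin down.

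\textbf{Step 1: two-sided control of the excess risk.} I would write $\bar q = \mathbb{E}[q_\theta \mid z_{mean}]$.

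For the upper bound, I use the $L$-Lipschitz assumption together with the variational characterization $\text{Var}(U \mid \mathcal{G}) \le \mathbb{E}[(U - c)^2 \mid \mathcal{G}]$ for any $\mathcal{G}$-measurable $c$. Choosing $c = \delta^*(x, \mu_\theta, \sigma^2_\theta, \bar q)$ is not $z_{mean}$-measurable because of $\sigma^2_\theta$, so I would treat $(\sigma^2_\theta, q_\theta)$ jointly as the ``shape vector'' $s$ and assume Lipschitz in $s$. This gives
\begin{equation*}
\text{Var}(\delta^* \mid z_{mean}) \le L^2\, \mathbb{E}\big[\|s - \mathbb{E}[s \mid z_{mean}]\|^2 \mid z_{mean}\big].
\end{equation*}
So the excess risk is at most $L^2$ times the unexplained shape variance.

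For the lower bound, which is what ``induce proportional squared error'' really asserts, Lipschitz continuity alone is insufficient. I would add a non-degeneracy condition, namely a reverse (co-Lipschitz) bound $|\delta^*(z_{mean}, s) - \delta^*(z_{mean}, s')| \ge m\,|\langle v, s - s'\rangle|$ for some direction $v$ and some $m>0$. A simple example is $\delta^*$ being affine in $s$ with nonzero coefficient. Under this condition, $\text{Var}(\delta^* \mid z_{mean}) \ge m^2\, \text{Var}(\langle v, s\rangle \mid z_{mean})$, which yields the proportionality.

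\textbf{Step 2: item 1 (heteroscedasticity).} I would fix a set $A$ of values of $z_{mean}$ with positive probability on which $\sigma^2_\theta$ has nondegenerate conditional law. On $A$, $\hat\delta_{mean}$ is a single number per $z_{mean}$ value. I would also assume $\delta^*$ is strictly monotone in $\sigma^2_\theta$ there, which Assumption~\ref{assumption:Heteroscedastic Discrepancy} motivates through the non-constant monotone $g$. Then $\delta^*$ takes at least two distinct values with positive conditional probability, so $\text{Var}(\delta^* \mid z_{mean}) > 0$ on $A$. Integrating over $A$ gives a strictly positive excess risk.

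\textbf{Step 3: item 2 (skewness).} I would decompose $q_\theta = \mu_\theta\mathbf{1} + \sigma_\theta\, \tilde q$, where $\tilde q$ is the standardized quantile vector. Under Assumption~\ref{assumption:Non-Gaussian Residuals}, $\tilde q$ is not the fixed Gaussian vector $(\Phi^{-1}(\tau))_\tau$. In particular, varying skewness moves $\tilde q^{(0.5)}$ and the asymmetry $\tilde q^{(0.9)} + \tilde q^{(0.1)}$. Whenever $\tilde q$ varies given $z_{mean}$, the unexplained shape variance $\text{Var}(\langle v, s\rangle \mid z_{mean})$ increases, and by the Step 1 lower bound it adds to the excess risk on top of the heteroscedastic contribution.

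\textbf{Main obstacle.} The main difficulty is precisely this lower bound. The stated hypothesis (Lipschitz) bounds the penalty only from above, and ``proportional'' needs a sensitivity condition from below. I would first try to state and use the co-Lipschitz or affine-in-$s$ condition explicitly. Failing that, I would settle for the qualitative conclusion: strict positivity whenever $\delta^*$ is non-constant in $s$ on a positive-probability fiber of $z_{mean}$.
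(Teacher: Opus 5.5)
Your route is the paper's. The paper gives no separate argument for this corollary beyond the closing sentence of the Excess Risk Decomposition proof: the term $\mathbb{E}[\text{Var}(\delta^*\mid z_{mean})]$ is positive ``whenever $\delta^*$ depends on $\sigma^2_\theta$ or $q_\theta$''. Your remark that $L$-Lipschitzness only gives the upper bound $\text{Var}(\delta^*\mid z_{mean})\le L^2\,\mathbb{E}[\|s-\mathbb{E}[s\mid z_{mean}]\|^2\mid z_{mean}]$ is correct and sharper than the paper, so ``proportional'' does need a separate sensitivity-from-below hypothesis.

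The step that genuinely fails is the existence of your set $A$. As defined, $z_{mean}=[x,\mu_\theta(x)]$ contains $x$. For the fixed context $\mathcal{D}_{\text{LF}}$, the summaries $\mu_\theta,\sigma^2_\theta,q_\theta$ are measurable functions of $x$. Hence $\sigma(z_{mean})=\sigma(z_{full})$ and $\hat\delta_{mean}=\delta^*$ almost surely, so the excess risk is identically zero. The conditional law of $\sigma^2_\theta$ given $z_{mean}$ is always a point mass. Turning the paper's ``region where $\mu_\theta$ is constant'' into a fiber of $z_{mean}$ on which $\sigma^2_\theta$ is nondegenerate is exactly the unavailable move: on such a region $\hat\delta_{mean}$ is still a function of $x$ and tracks $\delta^*$ exactly. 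Items 1 and 2 only acquire content if you change the setup explicitly, in one of three ways. You can condition on $\mu_\theta$ without $x$ (or on a coarsening of $x$). You can treat $\mathcal{D}_{\text{LF}}$ as random, so the LF summaries are not functions of $x$. Or you can replace Bayes risk by risk over a restricted predictor class in which $\sigma^2_\theta(x)$ is not recoverable from $x$. This, not the missing co-Lipschitz bound, is the real obstacle, and the paper's statement shares it.

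Even with the conditioning repaired, two links in your argument do not hold. First, Assumption~\ref{assumption:Heteroscedastic Discrepancy} constrains a conditional \emph{variance}, not $\delta^*=\mathbb{E}[r\mid z_{full}]$. For example, $r=\sqrt{g(\sigma^2_\theta)}\,\varepsilon$ with $\mathbb{E}[\varepsilon\mid z_{full}]=0$ is heteroscedastic, yet it gives $\delta^*\equiv 0$ and zero excess squared-error risk. So monotonicity of $\delta^*$ in $\sigma^2_\theta$ is a new hypothesis, not something that assumption motivates. This matches the paper's heteroscedastic tables, where NRMSE barely changes and only NLL improves. Moreover, monotonicity in $\sigma^2_\theta$ with $q_\theta$ held fixed does not make $\delta^*$ non-constant on a fiber, since $q_\theta$ necessarily moves with $\sigma_\theta$; you need non-constancy along the fiber itself.

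Second, in Step 3, Assumption~\ref{assumption:Non-Gaussian Residuals} concerns the residual $r$, whereas $\tilde q$ is the standardized quantile vector of the LF predictive distribution. The assumption therefore says nothing about $\tilde q$ varying given $z_{mean}$. Nor does ``adds on top of the heteroscedastic contribution'' follow from $\text{Var}(\delta^*\mid z_{mean})\ge m^2\,\text{Var}(\langle v,s\rangle\mid z_{mean})$. For continuous $\delta^*$ on a convex domain, your global co-Lipschitz condition forces $\delta^*(z_{mean},\cdot)$ to depend on $s$ only through $\langle v,s\rangle$. Variation of $\tilde q$ can then be orthogonal to $v$, or negatively correlated with the $\sigma^2_\theta$ part of $\langle v,s\rangle$, so it need not increase that variance. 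A monotone ``further increase'' needs extra structure, e.g.\ $\delta^*=a(z_{mean},\sigma^2_\theta)+b(z_{mean},\tilde q)$ with $a$ and $b$ conditionally uncorrelated given $z_{mean}$.
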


\begin{remark}[Comparison to ResGP~\citep{xing2021residual}]
This result provides a theoretical explanation for why FIRE outperforms ResGP in heteroscedastic settings (e.g., Table 2 ). ResGP conditions corrections solely on the low-fidelity mean $\mu_\theta(x)$. Consequently, it incurs the excess risk quantified in Theorem A.6 whenever the residual error is input-dependent (Assumption 4.3). FIRE’s inclusion of $z_{aug}$ minimizes this approximation gap.
\end{remark}

\subsection{Connection to Probabilistic Boosting and Stacking}

While standard Bayesian stacking \citep{yao2018using} combines predictive distributions linearly via a mixture $p(y|x) = \sum_k w_k p_k(y|x)$ to maximize a proper scoring rule (typically Log-Score), FIRE adopts a functional, additive approach akin to probabilistic boosting.

\paragraph{FIRE as Distributional Convolution. }Unlike standard stacking which averages models, FIRE constructs the final predictive distribution via convolution. Let $p_{LF}(y|x) = q_\theta(y|x, \mathcal{D}_{LF})$ be the low-fidelity predictive density and $p_{res}(r|z_{aug}) = q_\phi(r|z_{aug}, \mathcal{D}_{aug})$ be the residual predictive density conditioned on the augmented statistics of the base model. Under the additive independence structure (Proposition A.2), where we assume the base prediction error and residual error are uncorrelated given $z_{aug}$, the final predictive density is the convolution:
$$p_{\text{FIRE}}(y|x) = (p_{LF}(\cdot|x) * p_{res}(\cdot|z_{aug}))(y) = \int p_{LF}(y-r|x) \, p_{res}(r|z_{aug}) \, dr$$

\paragraph{Theoretical Alignment with Stacking. }Yao et al. (2018) demonstrate that in $\mathcal{M}$-open settings (where the true data-generating process is not in the candidate model set), combining full predictive distributions via proper scoring rules outperforms combining point estimates (stacking of means).

FIRE extends this insight to the autoregressive setting. While GP-based multi-fidelity methods implicitly handle uncertainty via kernels, their autoregressive coupling is typically parameterized by a scaling of the lower-fidelity function value $f_{t-1}$ \citep{Kennedy2000}, which functions analogously to ``stacking of means." By conditioning the residual model $p_{res}$ on the full set of distributional summaries $z_{aug} = \{\mu, \sigma^2, q^{(\tau)}\}$, FIRE allows the residual model to optimize the Log-Score of the convolved distribution $p_{\text{FIRE}}(y|x)$. Thus, FIRE can be viewed as what we term a distribution-conditioned boosting scheme. While \citet{yao2018using} prove asymptotic KL-optimality for linear mixtures of predictive distributions, FIRE's convolutional combination represents a different functional form. However, both approaches share the principle of propagating full distributional information rather than collapsing to point estimates, which \citet{yao2018using} show is suboptimal in $\mathcal{M}$-open settings.

\subsection{Empirical Validations of Distributional Conditioning on Classic Heteroscedastic Problems}\label{Appendix:Theory:empirical_proof}
To isolate the impact of FIRE's distributional conditioning, we benchmark against three classic heteroscedastic regression problems:
\begin{enumerate}
    \item \citet{goldberg1997regression}: Input data $x\in[0, 1]$; Base function $\mu(x)= 2\sin(2\pi x_i)$; Noise $\varepsilon(x)=\mathcal{N}(0,\sigma(x))$ (Gaussian) with $\sigma(x) = 0.5 + x$ that increases linearly.
    \item \citet{yuan2004doubly}: Input data $x\in[0, 1]$; Base function $\mu(x) = 2\left[\exp\left(-30(x - 0.25)^2\right) + \sin(\pi x^2)\right] - 2$; Noise $\varepsilon(x)=\mathcal{N}(0,\sigma(x))$ (Gaussian) with $\sigma(x) = e^{\sin(2\pi x_i)}$.
    \item \citet{williams1996using}: Input data $x\in[0, 1]$; Base function $\mu(x) = \sin(2.5x) \cdot \sin(1.5x)$; Noise $\varepsilon(x)=\mathcal{N}(0,\sigma(x))$ (Gaussian) with $\sigma(x) = 0.01 + 0.25(1 - \sin(2.5x))^2$.
\end{enumerate}
We adapted these single-fidelity benchmarks into a multi-fidelity (shown in Figure \ref{fig:theory_empirical}) setting by generating LF proxies with the base function, while HF observations retained the complex, input-dependent noise structures defined in the original literature~\citep{kersting2007most}. We perform a controlled ablation of the stacking strategy. We fix the model backbone, using either all GPs (FIRE-GP) or all TFMs (FIRE), and vary only the input to the residual learner. We compare standard mean-only stacking (without distribution info $\sigma^2, \{q^{(\tau)}\}$), where the residual model conditions only on the LF mean, against FIRE's distributional stacking, where the residual model conditions on the LF mean, variance, and quantiles.

Table \ref{tab:hetero_result} and \ref{tab:hetero_result_NLL} report the accuracy (NRMSE) and negative log-likelihood (NLL). Methods without distribution info $\sigma^2, \{q^{(\tau)}\}$ (mean-conditioned) capture the global trend and achieve comparable NRMSE. However, they perform poorly on NLL. For example, on the Williams function, the mean-only TFM yields an NLL of 2.34, while the distribution-conditioned TFM achieves 1.50. The mean-only approach remains overconfident in high-noise regions. Conditioning on variance and quantiles allows the residual model to detect heteroscedasticity and widen confidence intervals locally.

\begin{figure}[htbp!]
    \centering
    \includegraphics[width=0.8\linewidth]{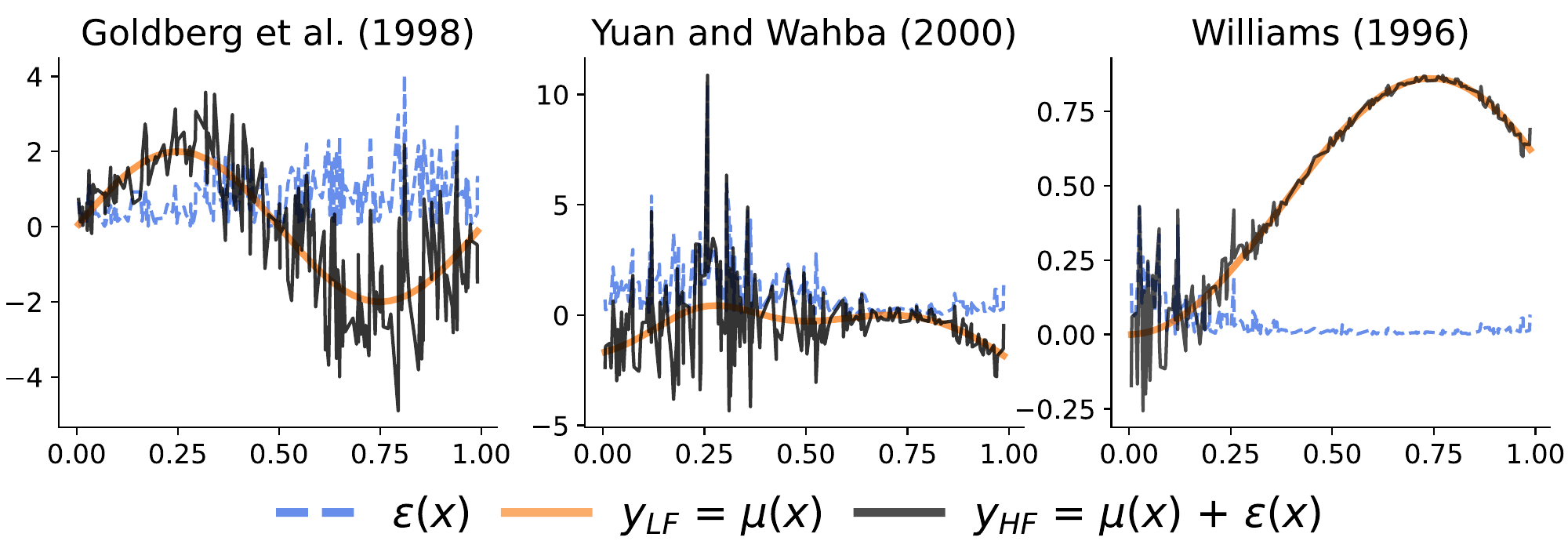}
    \caption{Visualizations of classic heteroscedastic regression benchmarks.}
    \label{fig:theory_empirical}
\end{figure}

\begin{table}[htbp!]
\centering
\caption{Predictive Accuracy (NRMSE) on Heteroscedastic Benchmarks (lower the better, {\color{red}red} is the best).}
\label{tab:hetero_result}
\begin{tabular}{lccc}
\toprule
  & \citet{goldberg1997regression} & \citet{yuan2004doubly} & \citet{williams1996using} \\ \midrule
FIRE-GP \textbf{without} distribution info $\sigma^2, \{q^{(\tau)}\}$ & 0.1933$\pm$0.0289 & 0.1844$\pm$0.0328 & 0.1110$\pm$0.0328  \\
FIRE-GP& 0.1867$\pm$0.0341 & 0.1834$\pm$0.0328 & 0.1082$\pm$0.0323 \\
FIRE \textbf{without} distribution info $\sigma^2, \{q^{(\tau)}\}$ & 0.1710$\pm$0.0317 & 0.1707$\pm$0.0117 &  0.1076$\pm$0.0320\\
FIRE & {\color{red}0.1699$\pm$0.0322} & {\color{red}0.1696$\pm$0.0105} & {\color{red}0.1073$\pm$0.0321} \\
\bottomrule
\end{tabular}
\end{table}

\begin{table}[htbp!]
\centering
\caption{Uncertainty Calibration (NLL) on Heteroscedastic Benchmarks (lower the better, {\color{red}red} is the best).}
\label{tab:hetero_result_NLL}
\begin{tabular}{lccc}
\toprule
  & \citet{goldberg1997regression} & \citet{yuan2004doubly} & \citet{williams1996using} \\ \midrule
FIRE-GP \textbf{without} distribution info $\sigma^2, \{q^{(\tau)}\}$ & 5.4922$\pm$2.9140 & 9.2186$\pm$5.435 & 21.797$\pm$18.610  \\
FIRE-GP & 3.2145$\pm$0.7244 & 6.9235$\pm$3.456 & 17.366$\pm$12.476 \\
FIRE \textbf{without} distribution info $\sigma^2, \{q^{(\tau)}\}$ & 1.5759$\pm$0.1027 & 1.6433$\pm$0.1705 & 2.3486$\pm$3.3993 \\
FIRE & {\color{red}1.5631$\pm$0.1031} & {\color{red}1.6359$\pm$0.1884} & {\color{red}1.5039$\pm$2.3797} \\
\bottomrule
\end{tabular}
\end{table}

\FloatBarrier
\newpage

\section{Benchmark Dataset Details}\label{Appendix:Section:Benchmark}

This section describes the benchmark datasets used in our experiments, organized by problem category. 
For each dataset, we detail the problem dimensionality, fidelity structure, and data generation procedure. 
A quantitative summary of input dimension, number of fidelities, and sample counts is provided in Table~\ref{Table:datasets}.

\subsection{Synthetic Datasets: }The implementations for the total synthetic problems are taken from
\begin{enumerate}
    \item \textbf{MF2} library \citep{vanRijn2020MF2} (11 problems): A Python library that contains a variety of synthetic problems for benchmarking multi-fidelity kriging and regression algorithms. We select the most commonly used benchmark problems for GP-based MFR: \texttt{bohachevsky, booth, borehole, branin, currin, forrester, hartmann6, himmelblau, park91a, park91b, six\_hump\_camelback}. (link: \url{https://mf2.readthedocs.io/en/latest/}, license: GPL-3.0 license, last accessed: January 22nd, 2026)
    \item \textbf{Emukit} library \citep{cutajar2019deep} (2 problems): A Python toolkit with code for multi-fidelity emulation, Bayesian optimization, and uncertainty-based decision making. We take the problem formulation of our \texttt{branin3f} and \texttt{hartmann3f} from the test functions module in this library. (link: \url{https://github.com/EmuKit/emukit/tree/main/emukit/test_functions}, license: Apache-2.0 license, last accessed: January 22nd, 2026)
    \item Equation \ref{eq:High_dim_functions} shows the two-fidelity high-dimensional problems we selected from the existing MFR literature for benchmarking deep-learning-model-based MFR algorithms \citep{Yi2024MFBNN,meng2020composite,shan2010Metamodeling}. In this research, we tested on 5 variants: $d\in\{10,20,30,40,50\}$.
    \begin{equation}\label{eq:High_dim_functions}
    \begin{aligned}
    f^{HF}(\mathbf{x}) &= \sum_{i=2}^{d} (2x_i^2 - x_{i-1})^2 + (x_1 - 1)^2 \\
        f^{LF}(\mathbf{x}) &= 0.8f^{HF}(\mathbf{x}) + \sum_{i=2}^{d} 0.4x_{i-1}x_i - 50
    \end{aligned}
    \end{equation}
\end{enumerate}

\subsection{Hyperparameter Optimization (HPO) Datasets: }

We select 6 HPO MF datasets (\texttt{adult, Fashion-MNIST, higgs, jasmine, vehicle, volkert}) from LCBench \citep{zimmer2021auto} (link: \url{https://github.com/automl/LCBench}, license: Apache-2.0 license, last accessed: January 22nd, 2026), which is commonly used by MF Bayesian optimization research \citep{rakotoarison2024ifbo,lee2025costsensitive}. The raw datasets have $d=7$ and contain 2000 learning curves with 51 epochs (fidelity level, $T=51$). The detail of each feature in $x$ is in Table \ref{tab:lcbench}.

\begin{table}[h]
\centering
\caption{The hyperparameters for LCBench datasets.}
\label{tab:lcbench}
\begin{tabular}{cccc}
\toprule
\textbf{Name} & \textbf{Type} & \textbf{Vaules} \\ \midrule
\texttt{batch\_size} & integer & $[2^4, 2^9]$ (log) \\
\texttt{learning\_rate} & continuous & $[10^{-4}, 10^{-1}]$ (log) \\
\texttt{max\_dropout} & continuous & $[0, 1]$ & \\
\texttt{max\_units} & integer & $[2^6, 2^{10}]$ (log) \\
\texttt{momentum} & continuous & $[0.1, 0.99]$ & \\
\texttt{max\_layers} & integer & $[1, 5]$ & \\
\texttt{weight\_decay} & continuous & $[10^{-5}, 10^{-1}]$ & \\ \bottomrule
\end{tabular}
\end{table}

We apply two processing steps to construct the multi-fidelity benchmark tasks:

\begin{enumerate}
    \item Fidelity selection: We define the fidelity levels using the top 5 epochs (46, 47, 48, 49, 50). Using the full 51 epochs is computationally infeasible for autoregressive GP baselines like NARGP, which train separate surrogates for each fidelity level. We reached the compute time limit when attempting to train on all 51 levels. We restrict the problem to 5 fidelities to ensure all algorithms converge within the benchmark time budget. %
    \item Data imbalance: We enforce a hierarchical sampling structure ($N_T \ll \dots \ll N_1$) to mimic extreme data scarcity. We fix the low-fidelity sample sizes at $N_1=1000$, $N_2=750$, $N_3=500$, and $N_4=250$. We vary the high-fidelity target set $N_5$ in $\{1000, 750, 500, 250\}$ to evaluate robustness. %
\end{enumerate}

\subsection{Physics-based Simulation Problems and Datasets: }

\subsubsection{Existing Analytical Parametric Datasets:}

We selected the two two-fidelity analytical problems (wing and beam) and the three-fidelity, mixed-integer material design problem (HOIP) from the code repository of GP+ \citep{Yousefpour2024GPPlus}, a Python library for generalized GP modeling (link: \url{https://github.com/Bostanabad-Research-Group/GP-Plus}, license: MIT license, last accessed: January 22nd, 2026).

\subsubsection{Existing Simulated Parametric Datasets: BlendedNet Multi-Fidelity Extension}

We use the BlendedNet Multi-Fidelity Extension dataset \citep{Sung2026BlendedNetMF}, which pairs the high-fidelity FUN3D SA-RANS lift/drag coefficients from BlendedNet \citep{Sung2025BlendedNet} with matched low-fidelity OpenVSP/VSPAERO vortex-lattice predictions evaluated on the same BWB geometries and flight conditions (link: \url{https://dataverse.harvard.edu/dataset.xhtml?persistentId=doi:10.7910/DVN/VJT9EP}, license: CC0 1.0, last accessed: January 22nd, 2026).

\subsubsection{Concrete Dataset}

The \textbf{Concrete} problem ($d{=}8$) is built from the experimental concrete compressive strength dataset of~\citet{Yeh1998Concrete} (link:\url{https://archive.ics.uci.edu/dataset/165/concrete+compressive+strength}, license: CC BY 4.0, last accessed: January 22nd, 2026), which reports 1030 mix designs with eight quantitative inputs (cement, slag, fly ash, water, superplasticizer, coarse aggregate, fine aggregate, and curing age) and measured compressive strength in MPa. We treat the experimental results as our high-fidelity data, and modeling an LF dataset with a physics-inspired multiplicative power-law model in the spirit of Abrams' law on log-transformed inputs (water–cement ratio, cement, slag, fly ash, superplasticizer, age) \citep{Abrams1919, Yeh2006}, similar to modern extensions of Abrams-type relationships for fly-ash concretes and reformed water–cement ratio laws \citep{Mondal2022}. 

\paragraph{High-Fidelity Data: }We use the Concrete Compressive Strength dataset of \citet{Yeh1998Concrete} from the UC Irvine Machine Learning Repository (link: \url{https://archive.ics.uci.edu/dataset/165/concrete+compressive+strength}, license: CC BY 4.0, last accessed: January 22nd, 2026). The dataset reports $N_{HF}{=}1030$ distinct concrete mix designs. Each row contains eight quantitative inputs: cement, blast furnace slag, fly ash, water, superplasticizer, coarse aggregate, fine aggregate, and curing age (in days), and a measured compressive strength target in MPa. This dataset is obtained experimentally and known to exhibit strong nonlinear interactions between mixture proportions and time.

\paragraph{Low-Fidelity Modeling: }We derive a physics-inspired empirical LF formulation by fitting a multiplicative power-law model in the spirit of Abrams' law. Classical Abrams formulations relate compressive strength $f_c$ to the water--cement ratio $(w/c)$ via a power law \citep{Abrams1919}. We generalize this idea and fit a log--log regression, consistent with modern extensions of Abrams-type relationships that incorporate age and supplementary cementitious materials \citep{Yeh2006, Mondal2022}, of the form
    \[
    \log f_c^{LF}(\mathbf{x})
    \;=\;
    \beta_0
    + \beta_1 \log(w/c)
    + \beta_2 \log(\text{cement})
    + \beta_3 \log(\text{slag})
    + \beta_4 \log(\text{fly ash})
    + \beta_5 \log(\text{superplasticizer})
    + \beta_6 \log(t),
\]
where $t$ is the curing age (days). This corresponds to the multiplicative
model
\[
    f_c^{LF}(\mathbf{x})
    = \exp(\beta_0)\,
      (w/c)^{\beta_1}
      (\text{cement})^{\beta_2}
      (\text{slag})^{\beta_3}
      (\text{fly ash})^{\beta_4}
      (\text{superplasticizer})^{\beta_5}
      t^{\beta_6},
\]
with coefficients $\{\beta_i\}$ estimated by ordinary least squares on the
log-strength. For the Concrete surrogate, the fitted values (rounded to three significant figures) are
\[
\begin{aligned}
\beta_0 &= 2.56,\\
\beta_1 &= -0.815 \quad (\log(w/c)),\\
\beta_2 &= -0.0380 \quad (\log \text{cement}),\\
\beta_3 &= 0.0161 \quad (\log \text{slag}),\\
\beta_4 &= 0.00231 \quad (\log \text{fly ash}),\\
\beta_5 &= 0.0148 \quad (\log \text{superplasticizer}),\\
\beta_6 &= 0.292 \quad (\log \text{age}).
\end{aligned}
\]
We then define $y^{LF}(\mathbf{x}) = f_c^{LF}(\mathbf{x})$ and
$y^{HF}(\mathbf{x})$ as the measured compressive strength.

\subsubsection{Car Dataset}

We present the \textbf{Car} dataset ($d{=}23$), a multi-fidelity automotive aerodynamics dataset that we construct from the DrivAerNet dataset~\citep{elrefaie2025drivaernet}. (link: \url{https://github.com/Mohamedelrefaie/DrivAerNet}, license: Attribution-NonCommercial 4.0 International, last accessed: January 22, 2026)

\paragraph{High-Fidelity Data: }The high-fidelity labels $y^{HF}$ are the DrivAerNet steady 3D RANS drag coefficients. Each input $x$ consists solely of parametric shape controls such as ramp and diffusor angles, trunklid angle, side-mirror rotation and placement, rear-window and windscreen inclination, overall car length and width, roof height, and green-house angle. The external flow condition is held fixed across all cases (constant freestream speed, Reynolds number, and boundary setup). 

\paragraph{Low-Fidelity Simulation: }Using the same design parameter as the high-fidelity data, our low-fidelity labels $y^{LF}$ are obtained from a quasi-2D OpenFOAM pipeline: we remove wheel geometry from the 3D mesh, form a 2D silhouette projection, generate a refined 2D mesh with Gmsh, extrude it into a thin 3D slab, and run incompressible RANS (\texttt{simpleFoam}) to estimate drag (Figure~\ref{fig:car2d-pipeline}). This new paired HF--LF CAR dataset is one of the main contributions of our benchmark suite and is designed to mimic a realistic but geometry-only car drag prediction problem.

We present the details of the quasi-2D LF CFD pipeline used to generate $C_D^{LF}$ for the CAR dataset described in the main text. For each parametric geometry, we start from the DrivAerNet STL, construct a 2D silhouette, generate a quasi-2D mesh, and run a steady incompressible RANS simulation in OpenFOAM. The overall workflow is illustrated in Fig.~\ref{fig:car2d-pipeline}. Note that the mesh panel shows a zoomed-in region near the car, while the actual far-field domain extends several vehicle lengths upstream and downstream.

\begin{figure}[ht]
  \centering
  \includegraphics[width=0.8\linewidth]{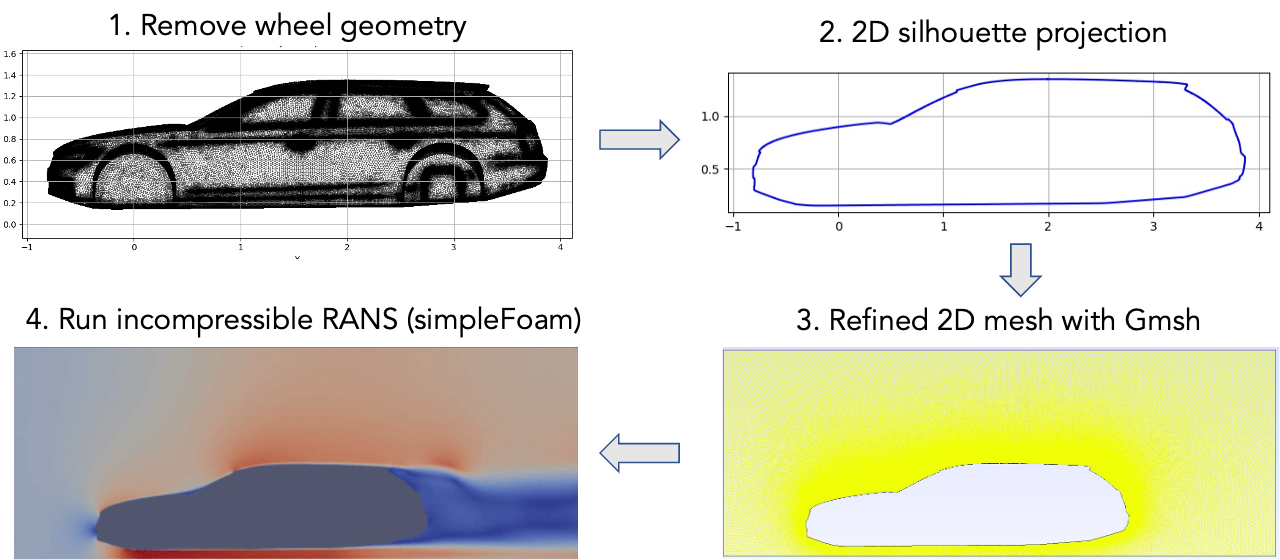}
  \caption{Quasi-2D low-fidelity pipeline for the CAR dataset.
  Top-left: projected mid-plane view of the 3D DrivAer-style geometry after removing wheels.
  Top-right: cleaned 2D silhouette obtained from the projected points.
  Bottom-right: Gmsh-generated quasi-2D mesh (shown as a zoom-in around the car; the full domain extends several car lengths upstream/downstream and vertically) with refinement concentrated near the body.
  Bottom-left: sample incompressible RANS solution from the \texttt{simpleFoam} run used to compute the LF drag coefficient.}
  \label{fig:car2d-pipeline}
\end{figure}

\begin{enumerate}
    \item \textbf{Flow conditions and fluid properties.} All LF simulations are run at the same freestream speed and fluid properties as the HF DrivAerNet cases. We prescribe a uniform inlet velocity $U_\infty = 30~\mathrm{m/s}$ for air with constant density and kinematic viscosity $\rho_\infty = 1.184~\mathrm{kg/m^3}, 
\;
\nu = 1.56\times 10^{-5}~\mathrm{m^2/s}.$

This yields Reynolds numbers comparable to the HF setup for a typical passenger-car length scale. The reference area used in the LF drag normalization is defined from the projected frontal height of the car, $h_{\mathrm{front}}$, multiplied by the extrusion thickness, $A_{\mathrm{ref}}^{LF}
=
h_{\mathrm{front}}\,H_{\mathrm{extrude}},
\;
H_{\mathrm{extrude}} = 10^{-3}~\mathrm{m}.$
We keep HF and LF drag coefficients in their respective native normalizations and let the surrogate models learn any scale differences between fidelities.
    \item \textbf{Geometry processing and 2D silhouette.}
Starting from the 3D STL surface of the car body, we first remove all wheel geometry before further processing. This avoids artificial lobes and unphysical wheel wakes in the 2D approximation, and focuses the LF model on the main body shape. The remaining surface is sampled densely: we collect all mesh vertices and an additional $3\times 10^5$ random surface points, projected onto the $x$--$z$ plane. From the resulting point cloud we construct a watertight 2D silhouette by computing an $\alpha$-shape with $\alpha = 0.02 \times \bigl(\text{diagonal of the 3D bounding box}\bigr),$ which scales automatically with the geometry size. The polygon boundary is then cleaned by removing near-duplicate points and very short edges, yielding a smooth, single closed curve that captures the outer hull of the vehicle. This step corresponds to the top-right panel of Fig.~\ref{fig:car2d-pipeline}.
    \item \textbf{Computational domain in 2D.}
Let $L$ denote the car length and $x_{\mathrm{car,min}}$, $x_{\mathrm{car,max}}$ the minimum and maximum $x$-coordinates of the silhouette. We embed the car in a rectangular 2D far-field domain defined by
\[
\begin{aligned}
x_\mathrm{min} &= x_{\mathrm{car,min}} - 3L,\\
x_\mathrm{max} &= x_{\mathrm{car,max}} + 10L,\\
z_\mathrm{min} &= 0, \qquad
z_\mathrm{max} = z_{\mathrm{car,max}} + 5~\mathrm{m}.
\end{aligned}
\]
For a nominal passenger car with $L \approx 4.5$--$4.6~\mathrm{m}$ this corresponds to roughly $13$--$14~\mathrm{m}$ of upstream length, $45~\mathrm{m}$ downstream, and a top boundary several meters above the roof. The mesh close-up in Fig.~\ref{fig:car2d-pipeline} shows only a zoomed-in subregion near the body; the full computational box is significantly larger in all directions.

    \item \textbf{Gmsh meshing and 2.5D extrusion.}
The 2D silhouette and far-field box are meshed using a fully automated Gmsh pipeline. We employ a distance-based size field with two nominal element sizes: $h_{\text{fine}} = 0.01~\mathrm{m},\; h_{\text{coarse}} = 0.1~\mathrm{m}.$
Cells near the car boundary are refined towards $h_{\text{fine}}$, while elements in the far field gradually transition towards $h_{\text{coarse}}$ with additional upstream/downstream growth limits. The resulting 2D meshes contain on the order of $\mathcal{O}(10^5)$ triangular/quadrilateral cells per case, providing a reasonably resolved near-body region for trend-accurate drag without explicit $y^+$ targeting.

To run OpenFOAM, the 2D region is extruded into 3D by a small thickness, $H_{\mathrm{extrude}} = 10^{-3}~\mathrm{m}$, with a single prism layer, yielding a quasi-2D (``2.5D'') volume mesh. The extrusion exists solely to satisfy OpenFOAM's 3D requirements; we explicitly enforce flow invariance across the thin spanwise direction by using empty boundary conditions on the extruded faces. During meshing, we automatically label the boundary patches as
\texttt{carWall}, \texttt{inlet}, \texttt{outlet}, \texttt{ground}, \texttt{freestream} (top), and \texttt{front}/\texttt{back} (extruded faces).

\item \textbf{Mesh conversion and boundary conditions.}
The Gmsh mesh is converted to OpenFOAM format using \texttt{gmshToFoam}, and quality is checked with \texttt{checkMesh} (non-orthogonality, skewness, and boundary consistency). Custom scripts adjust the patch names to match the simulation setup. Boundary conditions are as follows:
\begin{itemize}
    \item \textbf{Inlet (\texttt{inlet}):} fixed velocity $\boldsymbol{U} = (30, 0, 0)~\mathrm{m/s}$, prescribed turbulence fields $k$ and $\omega$ consistent with the inlet turbulence intensity.
    \item \textbf{Outlet (\texttt{outlet}):} zeroGradient for $\boldsymbol{U}$ and turbulence fields, fixedValue $p=0$ for pressure.
    \item \textbf{Ground (\texttt{ground}):} no-slip condition for $\boldsymbol{U}$, standard wall treatment for $k$ and $\omega$.
    \item \textbf{Car body (\texttt{carWall}):} no-slip condition on the silhouette, with wall functions for turbulence as in a standard $k$--$\omega$ SST setup.
    \item \textbf{Top boundary (\texttt{freestream}):} slip condition for velocity (zero normal velocity, zero shear), zeroGradient for pressure.
    \item \textbf{Front/back (\texttt{front}, \texttt{back}):} \texttt{empty} boundaries, enforcing 2D behavior across the thin extrusion direction.
\end{itemize}
This combination encodes the modeling assumption that the flow is invariant in the spanwise direction and can be approximated as 2D, while still using a 3D finite-volume solver.

\item 
\textbf{Numerical schemes and linear solvers.}
All simulations use the steady-state \texttt{simpleFoam} solver with an incompressible RANS formulation and a $k$--$\omega$ SST turbulence model. The discretization schemes in \texttt{fvSchemes} are:
\begin{itemize}
    \item \textbf{Time derivative:} \texttt{ddtSchemes\{\ default steadyState;\}}.
    \item \textbf{Gradients:} Gauss linear with cell-limited gradients for $\nabla \boldsymbol{U}$, $\nabla k$, and $\nabla \omega$.
    \item \textbf{Convective terms:} first-order Gauss upwind for $\texttt{div(phi,U)}$, $\texttt{div(phi,k)}$, and $\texttt{div(phi,omega)}$ for robustness across many geometries.
    \item \textbf{Diffusive/viscous terms:} Gauss linear with corrected surface-normal gradients (e.g., for $\nabla\cdot(\nu_{\mathrm{eff}}\nabla \boldsymbol{U})$ and Laplacians).
\end{itemize}
The linear solvers and SIMPLE controls in \texttt{fvSolution} are:
\begin{itemize}
    \item Pressure $p$: \texttt{GAMG} with Gauss--Seidel smoother, absolute tolerance $10^{-6}$ and relative tolerance $0.1$ for intermediate iterations; a final solve with \texttt{relTol = 0} ensures full convergence.
    \item Velocity and turbulence fields $(\boldsymbol{U},k,\omega)$: \texttt{smoothSolver} with symmetric Gauss--Seidel smoothing, absolute tolerance $10^{-6}$ and relative tolerance $0.1$ for intermediate steps, followed by final solves with zero relative tolerance.
    \item SIMPLE algorithm: \texttt{nNonOrthogonalCorrectors = 2} non-orthogonal corrections per iteration, with under-relaxation factors of $0.3$ for $p$ and $0.5$ for $\boldsymbol{U}$, $k$, and $\omega$.
\end{itemize}
In practice, we iterate until residuals have dropped by several orders of magnitude and the forceCoeffs-reported drag has reached a stable plateau.

\item \textbf{Drag extraction and computational cost.}
Drag and lift coefficients are obtained using OpenFOAM's \texttt{forceCoeffs} function object applied to the \texttt{carWall} patch:
\[
C_D^{LF}
=
\frac{1}{\tfrac{1}{2}\rho_\infty U_\infty^2 A_{\mathrm{ref}}^{LF}}
\bigl(\text{streamwise force on \texttt{carWall}}\bigr),
\]
The function object is evaluated every time step, and the final drag coefficient is taken from the plateau region of the force history.

Each LF simulation typically requires on the order of $10$ minutes of wall-clock time on a node with $\sim 16$ CPU cores, reflecting the modest mesh size and relatively simple numerics. In contrast, the HF DrivAerNet simulations rely on significantly larger 3D meshes and more expensive numerics, requiring on the order of hundreds of CPU-hours per case. The resulting CAR dataset therefore provides a realistic multi-fidelity pairing: a costly, high-accuracy 3D RANS drag coefficient $C_D^{HF}$ and a much cheaper, trend-accurate quasi-2D estimate $C_D^{LF}$ for the same parametric car geometries.

\end{enumerate}

\subsection{Data Sampling Imbalance Protocol.} As mentioned in \ref{Section:5.2:Benchmark Datasets Design}, we systematically evaluate performance under data imbalance regimes by fixing the LF sample size and  $N_{\text{LF}}$ and varying the HF budget $N_{\text{HF}}$ across six imbalance ratios: $\{2\%, 4\%, 5\%, 10\%, 20\%, 25\%\}$. For synthetic functions, we sampled the data using Latin hypercube sampling and set $N_{\text{LF}}$ = 200 for problems with dimension $d<10$, and $N_{\text{LF}}$ = 2000 for $d\geq10$ based on the curse of dimensionality exponential scaling laws. For LCBench HPO problems, we fixed the $\{N_1, N_2, N_3, N_4\} = \{1000,750,500,250\}$ following how the exising literature partitioned their data imbalance for problems with more than 2 fidelity \citep{Yousefpour2024GPPlus}. For engineering problems, we set the $N_{\text{LF}}$ based on the existing dataset size. The $N_{\text{test}}$ for all problems is set to be $\frac{N_{\text{LF}}}{2}$.

\begin{table}[ht]
  \caption{List of datasets}
  \label{Table:datasets}\footnotesize
  \centering
  \begin{tabular}{cccccc}
    \toprule
    Dataset & Data Source & \makecell{Dimension\\(Features, $d$)} & \makecell{Fidelity\\Level ($T$)}& \makecell{$\{N_{t=1},\, N_{t=2},\,$\\$\cdots,\, N_{t=T-1}\}$} & $N_{T}$ \\ %
    \midrule
    Bohachevsky & Synthetic (MF2) & 2 & 2 & \{200\} & \{4, 8, 10, 20, 40, 50\} \\
    Booth & Synthetic (MF2) & 2 & 2  & \{200\} & \{4, 8, 10, 20, 40, 50\} \\
    Borehole & Synthetic (MF2) & 8 & 2  & \{200\} & \{4, 8, 10, 20, 40, 50\} \\
    Branin & Synthetic (MF2) & 2 & 2  & \{200\} & \{4, 8, 10, 20, 40, 50\} \\
    Currin & Synthetic (MF2) & 2 & 2  & \{200\} & \{4, 8, 10, 20, 40, 50\} \\
    Forrester & Synthetic (MF2) & 1 & 2  & \{200\} & \{4, 8, 10, 20, 40, 50\} \\
    Hartmann & Synthetic (MF2) & 6 & 2  & \{200\} & \{4, 8, 10, 20, 40, 50\} \\
    Himmelblau & Synthetic (MF2) & 2 & 2  & \{200\} & \{4, 8, 10, 20, 40, 50\} \\
    Park91a & Synthetic (MF2) & 4 & 2  & \{200\} & \{4, 8, 10, 20, 40, 50\} \\
    Park91b & Synthetic (MF2) & 4 & 2  & \{200\} & \{4, 8, 10, 20, 40, 50\} \\
    Six-Hump Camelback & Synthetic (MF2) & 2 & 2  & \{200\} & \{4, 8, 10, 20, 40, 50\} \\
    HD 10 & Synthetic (BNN) & 10 & 2  & \{2000\} & \{40, 80, 100, 200, 400, 500\} \\
    HD 20 & Synthetic (BNN) & 20 & 2  & \{2000\} & \{40, 80, 100, 200, 400, 500\} \\
    HD 30 & Synthetic (BNN) & 30 & 2  & \{2000\} & \{40, 80, 100, 200, 400, 500\} \\
    HD 40 & Synthetic (BNN) & 40 & 2  & \{2000\} & \{40, 80, 100, 200, 400, 500\} \\
    HD 50 & Synthetic (BNN) & 50 & 2  & \{2000\} & \{40, 80, 100, 200, 400, 500\} \\
    branin3f & Synthetic (GP+) & 2 & 3  & \{200, 50\} & \{40, 80, 100, 200, 400, 500\} \\
    hartmann3f & Synthetic (GP+) & 3  & 3 & \{200, 50\} & \{40, 80, 100, 200, 400, 500\} \\
    adult & LCBench & 7  & 5 & \{1000, 750, 500, 250\} & \{20, 40, 50, 100, 200, 250\} \\
    Fashion-MNIST & LCBench & 7  & 5  & \{1000, 750, 500, 250\} & \{20, 40, 50, 100, 200, 250\} \\
    Higgs & LCBench & 7  & 5 & \{1000, 750, 500, 250\} & \{20, 40, 50, 100, 200, 250\} \\
    Jasmine & LCBench & 7  & 5  & \{1000, 750, 500, 250\} & \{20, 40, 50, 100, 200, 250\} \\
    Vehicle & LCBench & 7  & 5 & \{1000, 750, 500, 250\} & \{20, 40, 50, 100, 200, 250\} \\
    Volkert & Engineering & 7  & 5 & \{1000, 750, 500, 250\} & \{20, 40, 50, 100, 200, 250\} \\
    Beam & Engineering & 5 & 2  & \{200\} & \{4, 8, 10, 20, 40, 50\} \\
    Wing & Engineering & 10 & 2  & \{2000\} & \{40, 80, 100, 200, 400, 500\} \\
    HOIP & Engineering & 3 & 3  & \{100, 25\} & \{2, 4, 5, 10, 20, 25\} \\
    Concrete & Engineering (Our) & 8 & 2  & \{200\} & \{4, 8, 10, 20, 40, 50\} \\
    BWB-CD & Engineering (Our) & 14 & 2  & \{1000\} & \{20, 40, 50, 100, 200, 250\} \\
    BWB-CL & Engineering (Our) & 14 & 2  & \{1000\} & \{20, 40, 50, 100, 200, 250\} \\
    Car & Engineering (Our) & 23 & 2  & \{500\} & \{10, 20, 25, 50, 100, 125\} \\

    \bottomrule
  \end{tabular}
\end{table}

\section{Algorithms Selection and Implementation Details}\label{Appendix:Section:Algorithm_Implementation}

\subsection{Baseline Algorithms}\label{subection:GP}

We benchmark FIRE against eight state-of-the-art multi-fidelity regression methods, using GPU-accelerated PyTorch implementations whenever available to ensure a fair and consistent comparison with TFMs run on GPU.

\begin{enumerate}
    \item AR(1)~\citep{Kennedy2000}: Since the original paper does not come with a public code repository, we follow the latest PyTorch implementation from the ContinuAR paper~\citep{xing2023continuar}'s supplementary material for our AR(1). %
    \item NARGP~\citep{Perdikaris2017}: We implemented the NARGP since the original code is not in PyTorch (link to original code: \url{https://github.com/Xiao-dong-Wang/Multifidelity-GP}, license: MIT license, last accessed: January 22nd, 2026). We used the latest implementation of NARGP, following both the original paper \citep{Perdikaris2017} and \citet{cutajar2019deep}, which expand NARGP to more than two fidelities by sequentially modeling each fidelity. 
    \item ResGP~\citep{xing2021residual}: Since the original paper does not come with a public code repository, we follow the latest PyTorch implementation from the ContinuAR paper~\citep{xing2023continuar}'s supplementary material for our ResGP.
    \item MisoKG~\citep{poloczek2017multi}: We take the open-source MisoKG code of its surrogate from BoTorch’s GitHub repository (link: \url{https://botorch.org/docs/tutorials/discrete_multi_fidelity_bo/}, license: MIT license, last accessed: January 22nd, 2026).
    \item ContinuAR~\citep{xing2023continuar}:  We follow the implementation from the original paper's supplementary material as our ContinuAR code.
    \item MFBNN~\citep{Yi2024MFBNN}: We take the code from the paper's open-source GitHub repository. (link: \url{https://github.com/bessagroup/mfbml}, license: BSD 3-Clause License, last accessed: January 22nd, 2026).
    \item MFRNP~\citep{niu24MFRNP}: We take the code from the paper's open-source GitHub repository. (link: \url{https://github.com/Rose-STL-Lab/MFRNP}, license: MIT license, last accessed: January 22nd, 2026).
    \item FIRE-GP: We implemented FIRE-GP using BoTorch's \texttt{SingleTaskGP} since the model support the API for accessing mean, variance, and quantiles through posterior estimation (link: \url{https://botorch.org/docs/getting_started}, license: MIT license, last accessed: January 22nd, 2026). This API analytically evaluates the inverse cumulative distribution function (ICDF) of the marginal Gaussian posterior at each candidate point for computing quantiles.
\end{enumerate}

\subsection{Tabular Foundation Models}\label{subection:TFM}

We use all tabular foundation models in a zero-shot setting, without any task-specific fine-tuning or hyperparameter tuning, to isolate their out-of-the-box generalization performance in multi-fidelity regression.

\begin{enumerate}
    \item TabPFNv2.5~\citep{grinsztajn2025tabpfn25advancingstateart}: We use the default pretrained checkpoint of TabPFNv2.5, \texttt{tabpfn-v2.5-regressor-v2.5\_default.ckpt}, released by the authors (Prior Labs) (link: \url{https://huggingface.co/Prior-Labs/tabpfn_2_5}, license: Prior Labs License, last accessed: January 22nd, 2026). 

    \item TabPFNv2~\citep{hollmann2025accurate}: We use the default pretrained checkpoint of TabPFNv2, \texttt{tabpfn-v2-regressor.ckpt}, released by the authors (Prior Labs) (link: \url{https://huggingface.co/Prior-Labs/TabPFN-v2-reg}, license: Prior Labs License, last accessed: January 22nd, 2026).

    \item RealTabPFN~\citep{garg2025real}: We use the default pretrained checkpoint of RealTabPFN, \texttt{tabpfn-v2.5-regressor-v2.5\_real.ckpt}, which is trained on a mixture of synthetic and real-world tabular datasets (link: \url{https://huggingface.co/Prior-Labs/tabpfn_2_5}, license: Prior Labs License, last accessed: January 22nd, 2026).
    \item Mitra~\citep{zhang2025mitra}: We use the pretrained \texttt{mitra-regressor} model released by AutoGluon without fine-tuning (link: \url{https://huggingface.co/autogluon/mitra-regressor}, license: Prior Labs License, last accessed: January 22nd, 2026). 
\end{enumerate}

\paragraph{Accessing statistical information from TFMs: } All TabPFN models suport the API for accesing mean, variance, and quantile through their \texttt{regressor.py}. TabPFN approximates the posterior predictive distribution $p(y|x, \mathcal{D})$ using a piecewise-constant Riemann distribution defined over fixed intervals (buckets), where the model outputs logits representing the probability mass of each bucket \citep{muller2021transformers,hollmann2023tabpfn,hollmann2025accurate}. Quantiles are analytically derived by computing the discrete cumulative distribution function (CDF) from these logits and performing linear interpolation between bucket borders to solve for the inverse CDF.

Mitra does not support the API for accessing quantile. To extract uncertainty estimates from the Mitra model, we implement a split conformal prediction wrapper that calibrates the model's outputs using a held-out portion of the training data. Specifically, we calculate the empirical percentiles of the prediction residuals ($y_{true} - \hat{y}$) on this calibration set to determine constant additive offsets for each desired quantile. At inference time, these offsets are added to the model's mean prediction to generate the final quantile estimates.

\section{Performance Results Per Datasets}\label{Section:FullResults}

This section provides the complete breakdown of predictive performance for all 31 datasets included in our benchmark suite. We visualize the impact of data scarcity by plotting Normalized Root Mean Squared Error (NRMSE), Negative Log Likelihood (NLL), and Coefficient of Determination ($R^2$) against varying high-fidelity to low-fidelity sample ratios. The figures below are organized by domain to facilitate detailed comparison: Figures \ref{fig:Engineering_single_simulated}-\ref{fig:Engineering_single_parametric} cover Physics-based simulations, Figure \ref{fig:HPO_single_problems} details Hyperparameter Optimization (HPO) tasks, and Figures \ref{fig:Synthetic_single_problem_v1}-\ref{fig:Synthetic_single_problem_v3} display results for Synthetic functions. Across these domains, FIRE demonstrates robust generalization, maintaining stable performance rankings even as the high-fidelity budget is severely constrained.

\begin{figure}[ht]
  \centering
  \includegraphics[width=.72\linewidth]{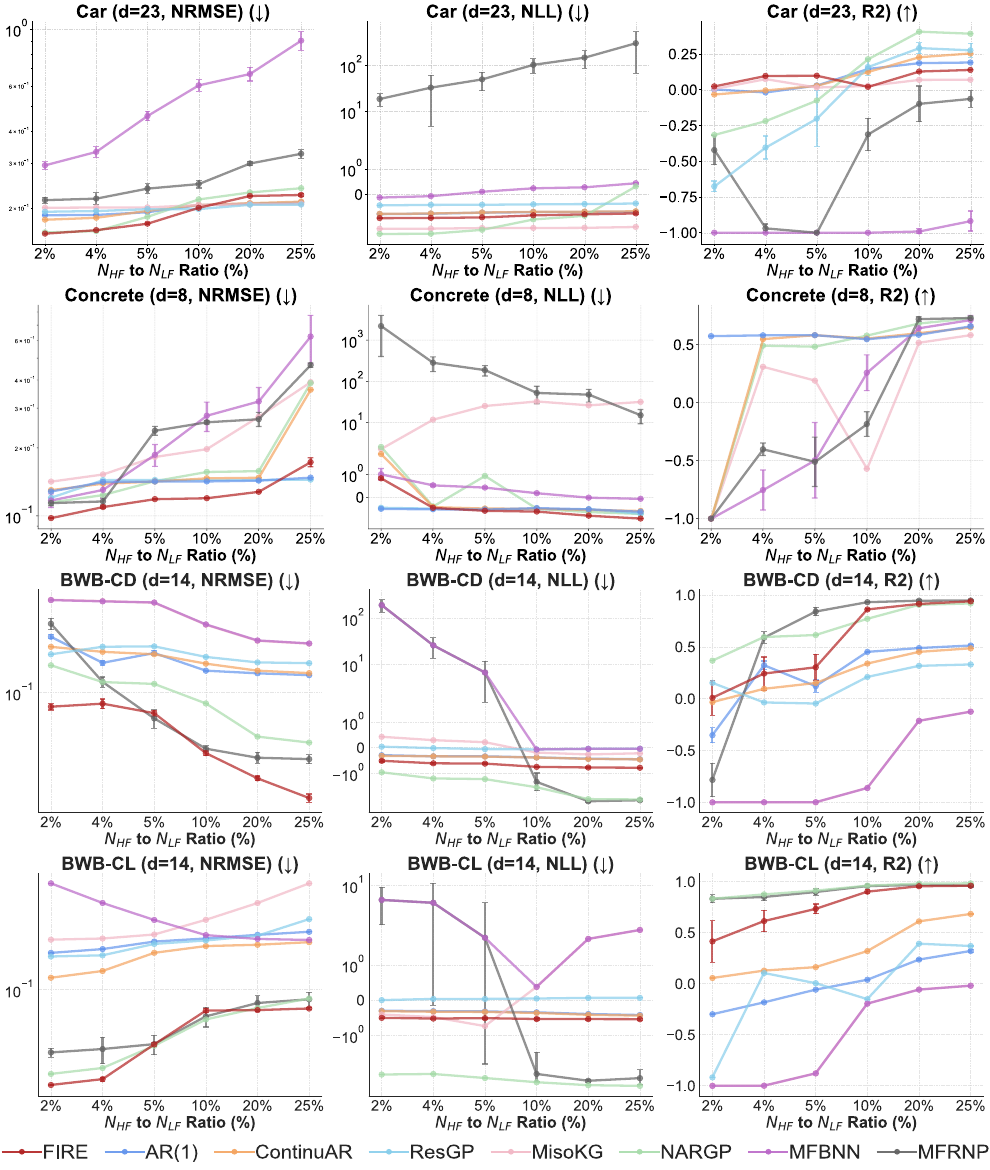}
  \caption{Result of Physics-based simulation problems (Car, Concrete, BWB).}
  \label{fig:Engineering_single_simulated}
\end{figure}

\begin{figure}[ht]
  \centering
  \includegraphics[width=.72\linewidth]{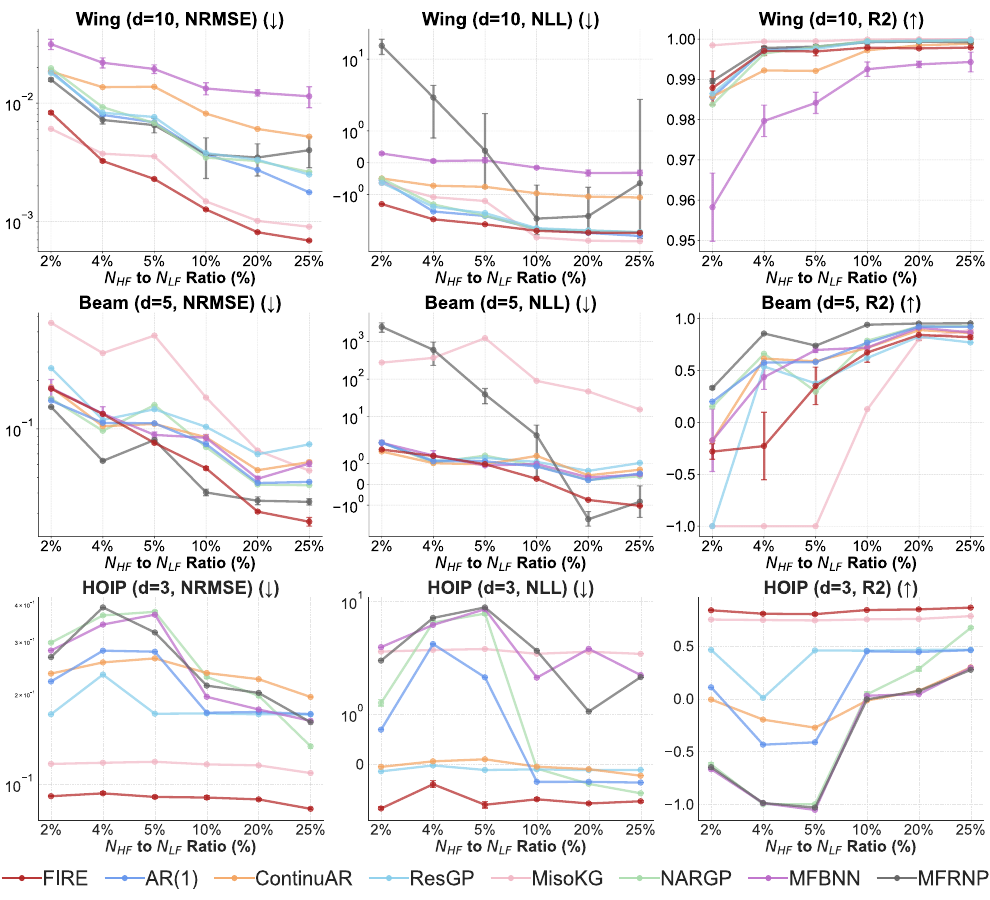}
  \caption{Result of Physics-based parametric problems (Wing, Beam, HOIP).}
  \label{fig:Engineering_single_parametric}
\end{figure}

\begin{figure}[ht]
  \centering
  \includegraphics[width=.72\linewidth]{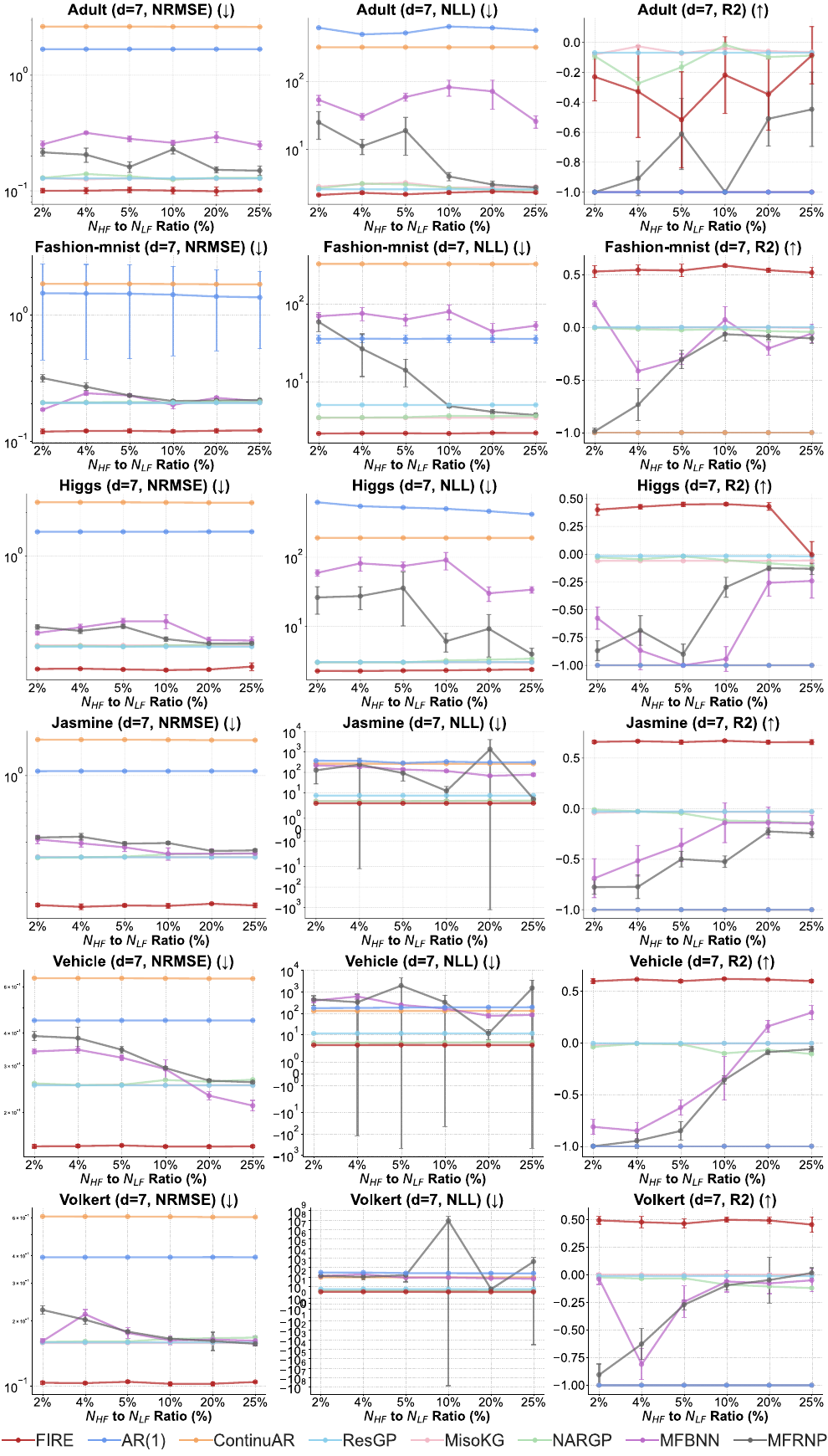}
  \caption{Result of Hyperparameter Optimization (HPO) problems from LCBench.}
  \label{fig:HPO_single_problems}
\end{figure}

\begin{figure}[ht]
  \centering
  \includegraphics[width=.72\linewidth]{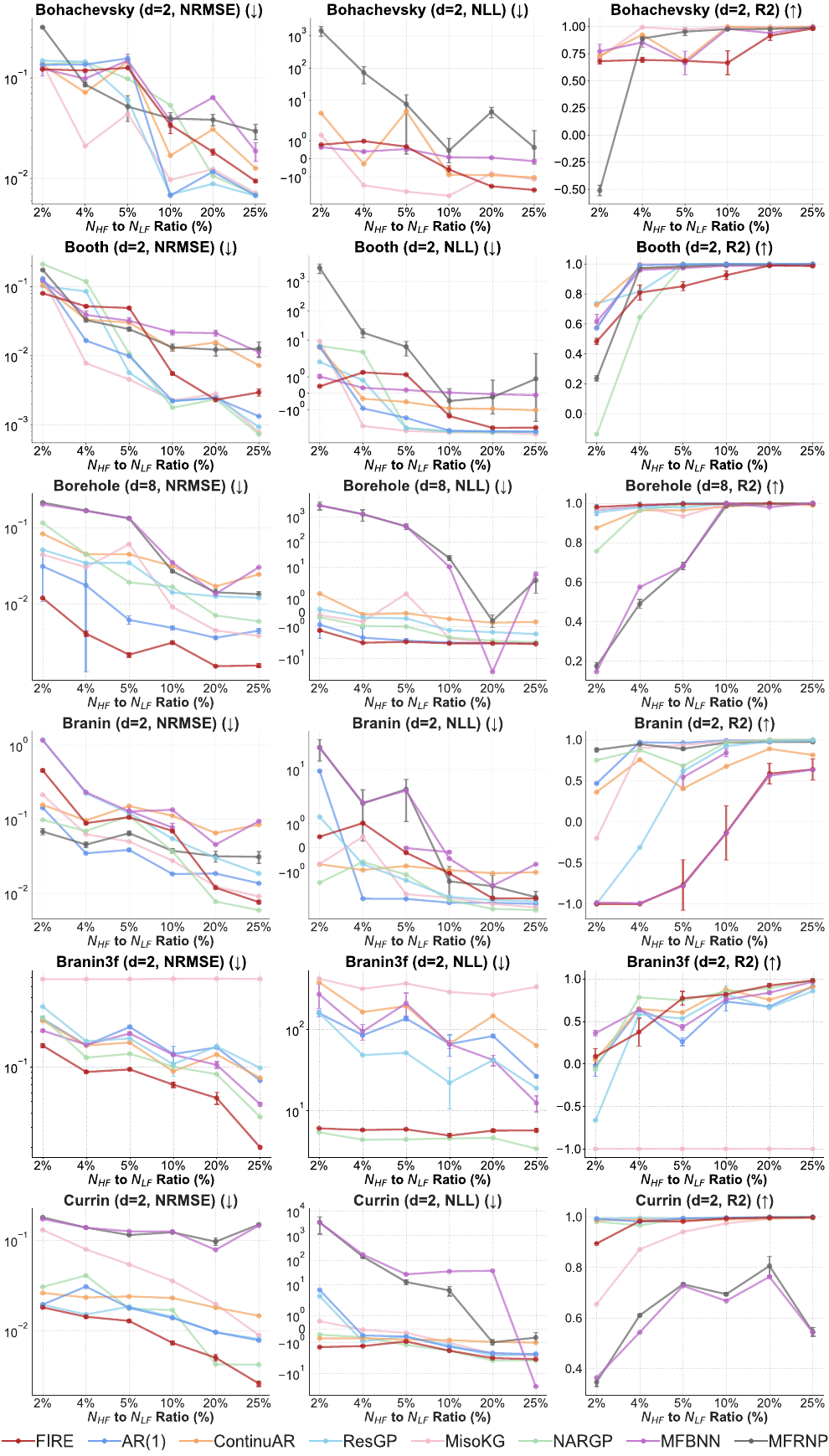}
  \caption{Result of synthetic problems (Bohachevsky, Booth, Borehole, Branin, Branin3f, Currrin).}
  \label{fig:Synthetic_single_problem_v1}
\end{figure}

\begin{figure}[ht]
  \centering
  \includegraphics[width=.72\linewidth]{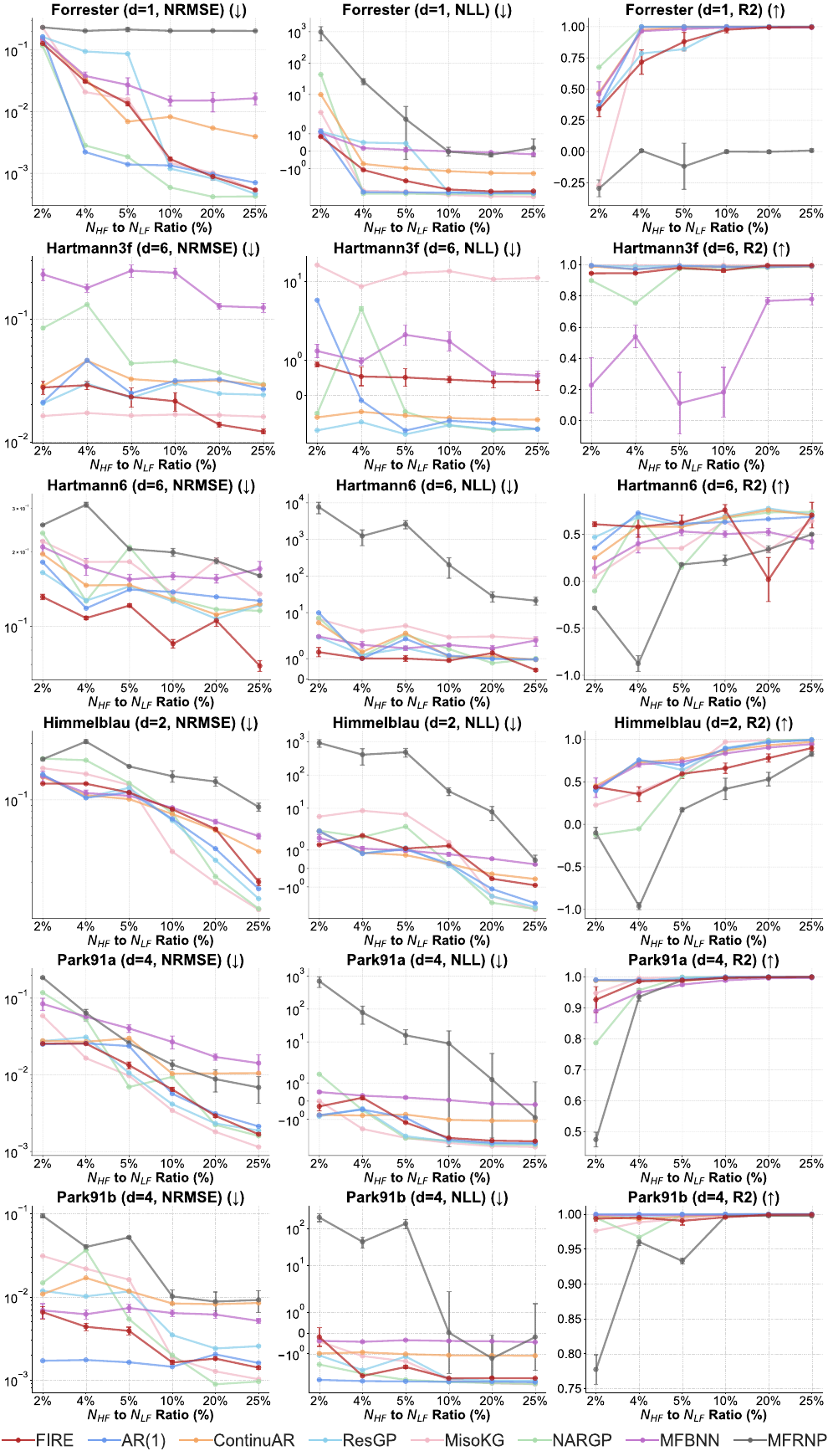}
  \caption{Result of synthetic problems (Forrester, Hartmann3f, Hartmann6, Himmelblau, Park91a, Park91b).}
  \label{fig:Synthetic_single_problem_v2}
\end{figure}

\begin{figure}[ht]
  \centering
  \includegraphics[width=.72\linewidth]{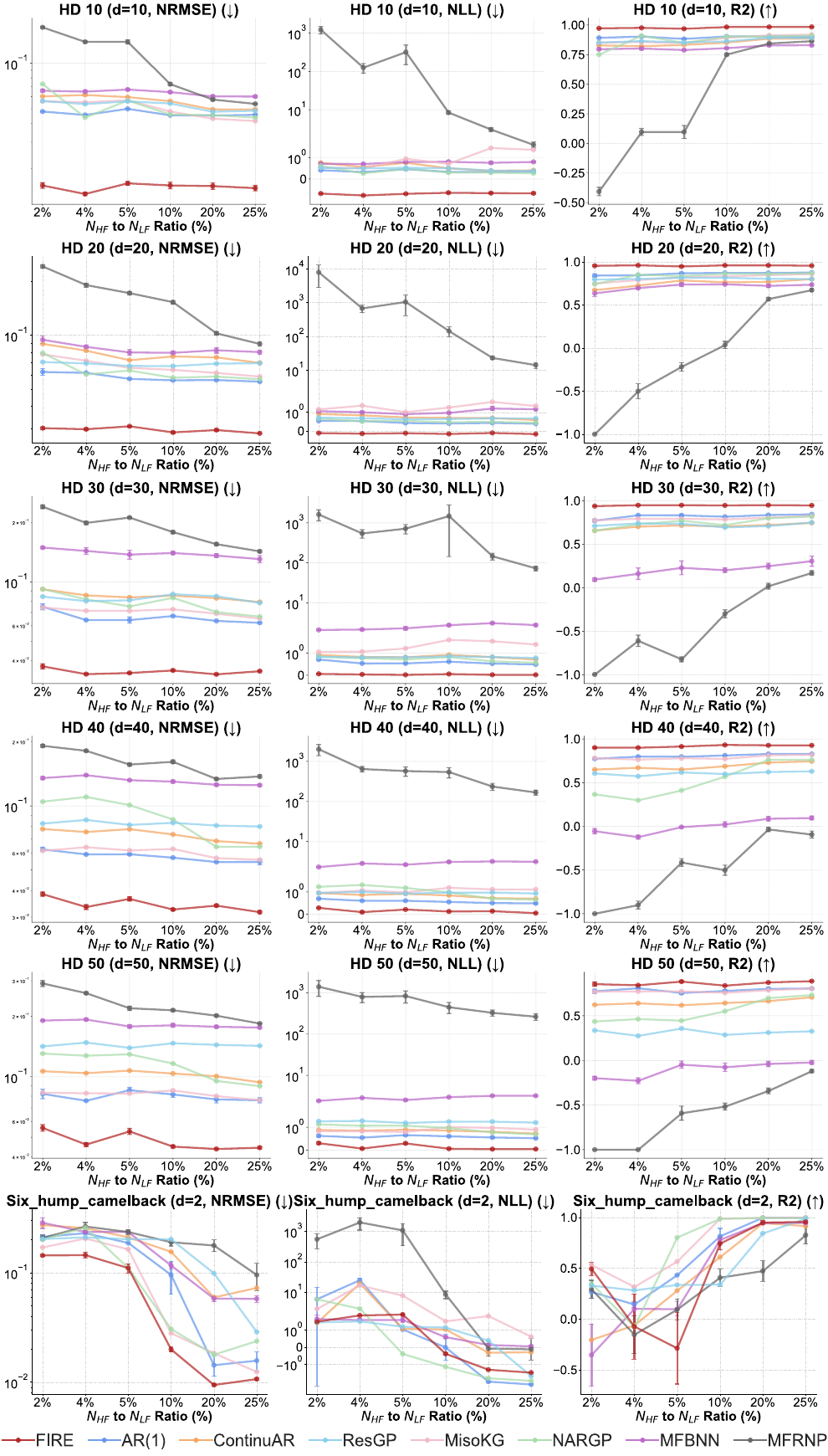}
  \caption{Result of synthetic problems (HD 10$\sim$HD 50, sixhump-camelback).}
  \label{fig:Synthetic_single_problem_v3}
\end{figure}

\end{document}